\title{Collaborative Learning of Discrete Distributions under Heterogeneity and Communication Constraints}
\author{
Xinmeng Huang,\footnote{Equal Contribution.} \footnote{Graduate Group in Applied Mathematics and Computational Science, Univ. of Pennsylvania. \texttt{xinmengh@sas.upenn.edu}.}
\,
Donghwan Lee,\footnotemark[1]\,\,\footnote{Graduate Group in Applied Mathematics and Computational Science, Univ. of Pennsylvania. \texttt{dh7401@sas.upenn.edu}.}
\,
Edgar Dobriban,\footnote{Department of Statistics and Data Science, Univ. of Pennsylvania. \texttt{dobriban@wharton.upenn.edu}.}
\,
and Hamed Hassani\footnote{Department of Electrical and Systems Engineering, Univ. of Pennsylvania. \texttt{hassani@seas.upenn.edu}.}
}
\date{\today}
\def\cB{\mathcal{B}}
\def\cC{\mathcal{C}}
\def\cE{\mathcal{E}}
\def\cF{\mathcal{F}}
\def\cI{\mathcal{I}}
\def\cK{\mathcal{K}}
\def\cP{\mathcal{P}}
\def\cQ{\mathcal{Q}}
\def\cU{\mathcal{U}}
\def\E{\mathbb{E}}
\def\PP{\mathbb{P}}
\def\RR{\mathbb{R}}
\def\bb{\mathbf{b}}
\def\bv{\mathbf{v}}
\def\bp{\mathbf{p}}
\def\bx{\mathbf{x}}
\def\by{\mathbf{y}}
\def\Bin{\mathrm{Binom}}
\def\Cat{\mathrm{Cat}}
\def\proj{\mathrm{Proj}}
\def\supp{\mathrm{supp}}
\def\trmean{\mathrm{trmean}}
\def\med{\mathrm{median}}
\newcommand{\eg}{\emph{e.g.}}
\newcommand{\ie}{\emph{i.e.}}
\DeclareFontFamily{U}{mathx}{}
\DeclareFontShape{U}{mathx}{m}{n}{ <-> mathx10 }{}
\DeclareSymbolFont{mathx}{U}{mathx}{m}{n}
\DeclareMathAccent{\widecheck}{0}{mathx}{"71}
\begin{document}
\maketitle

\begin{abstract}
In modern machine learning, users often have to collaborate to learn the distribution of the data. 
Communication can be a significant bottleneck. Prior work has studied homogeneous users---\ie, whose data follow the same discrete distribution---and has provided optimal communication-efficient methods for estimating that distribution.
However, these methods rely heavily on homogeneity, and are less applicable in the common case when users' discrete distributions are heterogeneous. 
Here we consider a natural and tractable model of heterogeneity, where users' discrete distributions only vary sparsely, on a small number of entries. 
We propose a novel two-stage method named SHIFT: 
First, the users collaborate by communicating with the server to learn a central distribution; relying on methods from robust statistics. 
Then, the learned central distribution is fine-tuned to estimate their respective individual distribution. 
We show that SHIFT is minimax optimal in our model of heterogeneity and under communication constraints. 
Further, we provide experimental results using both synthetic data and $n$-gram frequency estimation in the text domain, which corroborate its efficiency.
\end{abstract}


\section{Introduction}

Research on learning from data distributed over multiple computational units (machines, users, devices) has grown in recent years, as data is commonly generated by multiple users, such as  smart devices  and wireless sensors. 
While many works focus on learning predictive models with distributed data, learning the data distribution itself is also increasingly popular \cite{Wang2022FederatedAO,pmlr-v132-acharya21b,chen2021breaking,Chen2021PointwiseBF}. 
In various applications, it is often required to reconstruct the data distribution from scattered measurements. Examples include sensor networks and P2P (Peer2Peer) systems, load balancing and query processing (see, \eg, \cite{Nowak2003DistributedEA,Zhou2012EffectiveDD,Slavov2013AGA} and references therein).
In these scenarios, communication costs and bandwidth are often bottlenecks on the performance of learning algorithms \cite{Garg2014OnCC,pmlr-v23-balcan12a,huang2022lb,Daum2012EfficientPF}.
The bottlenecks become even more severe in federated analytics \cite{Kairouz2021AdvancesAO}, 
where many users coordinate with a server to learn central models, while communication via the wireless links is typically expensive and operates at low rates.

This paper considers learning high-dimensional discrete distributions from user data in the distributed setting.
In this setting, several communication-efficient methods have been proposed, and their optimality  under communication constraints has been established under various models \cite{Han2018DistributedSE,Barnes2019LowerBF,Han2021GeometricLB,Acharya2020InferenceUI,pmlr-v132-acharya21b,chen2021breaking,Chen2021PointwiseBF}. 
However, the key challenge of  \emph{heterogeneity}, \ie, that users' distributions can differ, is rarely considered. 
Heterogeneity is common, as users inevitably have 
unique characteristics \cite{NEURIPS2020_222afbe0}. 
Meanwhile, heterogeneity can cause a significant performance drop for learning algorithms designed only for i.i.d data \cite{McMahan2017CommunicationEfficientLO,Li2020OnTC,Dobriban2018DistributedLR}. 
To use all the data, one needs to learn some central structure, transferable to all individual users. 
Then one may locally learn--or, finetune---some unique components for each user \cite{Du2021FewShotLV,Tripuraneni2021ProvableMO,Collins2021ExploitingSR,Xu2021LearningAB}. 

To study this paradigm, we first need to introduce a suitable model of heterogeneity.
We consider, as an example, the heterogeneous frequencies of words across different texts, \eg, news articles, books, plays (tragedies and comedies), viewed as users. 
Most words appear with nearly the same probabilities in different texts, 
however, a few can have very different probabilities, such as  ``sorrowful'' being common in tragedies and ``convivial'' being common in comedies. 
Motivated by this, we formulate a model of sparse  heterogeneity. 
Specifically, suppose that the discrete distributions of all users differ from an underlying  central distribution in at most $s \geq0$ entries, where $s$ is much smaller than the dimension $d \geq0$. 
Sparse heterogeneity is relevant to applications such as recommendation systems \cite{Hu2019HERSMI,Qian2015StructuredSR,Liu2019RecommenderSW,Bastani2021PredictingWP} and medical risk scoring \cite{Subbaswamy2019FromDT,QuioneroCandela2009DatasetSI,Mullainathan2017DoesML}.

However, given data generated by multiple distributions with sparse heterogeneity, previous works \cite{Han2018DistributedSE,Acharya2020InferenceUI,pmlr-v132-acharya21b,Chen2021PointwiseBF} either do not use all the data, 
or suffer from bias due to heterogeneity that does not vanish as the sample size increases.
Here we propose a novel \underline{\textbf{s}}parse \underline{\textbf{h}}eterogeneity-\underline{\textbf{i}}nspired collaboration and \underline{\textbf{f}}ine-\underline{\textbf{t}}uning method (SHIFT) where we first collaboratively learn the central distribution, 
and then fine-tune the central estimate to individual distributions. 
Our method makes full use of heterogeneous data, leading to a significant improvement in error rates compared to prior methods. See Table \ref{tab:comparsion} for an overview, explained in detail later.

\begin{table}[t]
\centering 
\caption{\small Estimation error $\E[\|\widehat{\bp}^t-\bp^t\|_2^2]$ of various methods when $n$ is sufficiently large:
$\bp^t$ is the test distribution,
$\widehat{\bp}^t$ is the estimator,
$\boldsymbol{\delta}^t= T^{-1}\sum_{t^\prime \in[T]}\bp^{t^\prime}-\bp^t$ is a non-vanishing measure of heterogeneity. 
See Section \ref{cont} for other notations.
Constants and logarithmic factors are omitted for clarity. 
The  ``data usage'' column indicates whether the estimate is obtained for each cluster separately or by pooling data. }
\label{tab:comparsion}
\begin{tabular}{lccc}
\toprule
Method & Estimation Error & Data Usage & Bound Type\\
\midrule
Unif. Group./Hash. \cite{{Han2018DistributedSE}} & $O\left(\frac{d}{2^b n}\right)$ & Separate & Upper \\
Unif. Group./Hash. \cite{{Han2018DistributedSE}}  & $O\left(\|\boldsymbol{\delta}^t\|_2^2+\frac{d}{2^bTn}\right)$ & Pool & Upper \\
Localize-then-Refine$^*$ \cite{Chen2021PointwiseBF} & ${O}\left(\frac{\|\bp^t\|_{1/2}}{2^b n}
\right)$ & Separate & Upper \\
Localize-then-Refine$^*$ \cite{Chen2021PointwiseBF} & ${O}\left(\|\boldsymbol{\delta}^t\|_2^2+\frac{\|\frac{1}{T}\sum_{t^\prime \in[T]}\bp^{t^\prime}\|_{1/2}}{2^bT n}\right)$ & Pool & Upper\\
\midrule
SHIFT (Theorem \ref{thm:median-collab-final}) & $\tilde{O}\left(\frac{\max\{2^b,s\}}{ 2^b n}+\frac{d}{2^b Tn}\right)$ & $-$ & Upper \\
SHIFT (Theorem \ref{thm:collab-lower-bound}) & ${\Omega}\left(\frac{\max\{2^b,s\}}{2^b n}+\frac{d}{2^b Tn}\right)$ & $-$ & Lower \\
\bottomrule
	\multicolumn{4}{l}{$^*$\,\footnotesize{This method \cite{Chen2021PointwiseBF} requires interactive communication protocols, while other methods are non-interactive.}}
\end{tabular}
\end{table}

\subsection{Contributions}\label{cont}
We consider the problem of learning $d$-dimensional distributions with $s$-sparse heterogeneity. 
We assume there are $T$ clusters of user datapoints, 
and allow each datapoint to be transmitted in a message with at most $b$ bits of information to the server. 
Our setting embraces heterogeneous data and thus is a significant generalization of the models from \cite{Han2018DistributedSE,Barnes2019LowerBF,Han2021GeometricLB,Acharya2020InferenceUI}.
Our technical contributions are as follows:
\begin{itemize}
    \item We propose the SHIFT method to  learn heterogeneous distributions with collaboration and tuning, in a sample-efficient manner. 
    Our method can, in principle, be used with an arbitrary robust estimate of the probability of each entry/coordinate. 
    When entry-wise median and trimmed mean are used, we provide upper bounds on the estimation error of individual distributions in the $\ell_2$ and $\ell_1$ norms.
    We show a factor of (the order of) $\min\{T,d/\max\{s,2^b\}\}$ improvement in sample complexity compared to previous works; showing the benefit of collaboration (large $T$) and sparsity (small $s$), despite communication constraints and heterogeneity.

    \item To justify the optimality of our method, we prove minimax lower bounds on the estimation errors of individual distributions in the $\ell_2$ and $\ell_1$ norms, holding for all, possibly interactive, learning methods. 
    These lower bounds, combined with our upper bounds, imply that our median-based method is minimax optimal.
    \item We support our method with experiments on both synthetic and empirical datasets, showing a significant improvement over previous methods.
\end{itemize}

\subsection{Related Works}\label{sec:related-work} 

\paragraph{Learning with heterogeneity.}
Learning with heterogeneity is commonly found in the broader context of multi-task learning \cite{RCaruana,Thrun1998LearningTL,Baxter2000AMO} and federated learning \cite{Achituve2021PersonalizedFL,Shamsian2021PersonalizedFL,Grimberg2021OptimalMA,smith2017federated}, where a central model or representation is learned from multiple heterogeneous datasets. 
These central representations can be useful for few-shot learning, \ie, for new problems with a small sample size \cite{Sun2017RevisitingUE,Goyal2019ScalingAB} due to their ability to adapt to new tasks efficiently. 
In heterogeneous linear regression, \cite{Tripuraneni2021ProvableMO,Du2021FewShotLV} show improved sample complexities by assuming a low dimensional central representation, compared to the i.i.d. setting \cite{Grimberg2021OptimalMA,Maurer2016TheBO}. 
Related results are proved in \cite{Collins2021ExploitingSR} for personalized federated learning. 
\cite{Xu2021LearningAB} study a bandit problem where the unknown parameter in each dataset equals a global parameter plus a sparse instance-specific term. We study a different setting, learning distributions with sparse heterogeneity under communication constraints.

\paragraph{Estimating distributions under communication constraints. }
Estimating discrete distributions has a rich literature \cite{lehmann2006theory,agresti2003categorical}. Under communication constraints, \cite{Han2018DistributedSE,Barnes2019LowerBF,Han2021GeometricLB,Acharya2020InferenceUI} consider the non-interactive scenario, and establish the minimax optimal rates, in terms of data dimension and communication budget, via potentially shared randomness, when all users' data is homogeneous. 
The optimality for the general
interactive (or, blackboard) methods is developed by \cite{Acharya2020GeneralLB}. 
A few  works study the estimation of  sparse distributions.
In particular, \cite{pmlr-v132-acharya21b} consider $s$-sparse distributions and establish minimax optimal rates under communication and privacy constraints, which are further improved  by  localization strategies in \cite{chen2021breaking}. 
Complementary to minimax rates, \cite{Chen2021PointwiseBF} provides pointwise rates, governed by the half-norm of the distribution to be learned, instead of its dimension. Our setting embraces heterogeneous data, and thus is a generalization of the one studied in above works.

\paragraph{Robust estimation \& learning.} Robust statistics and learning study algorithms resilient to unknown data corruption \cite{huber1981robust,hampel2011robust, Anscombe1960RejectionOO,Tukey1960ASO,Huber1964RobustEO}. 
The median-of-means method \cite{Lerasle2011ROBUSTEM,Minsker2015GeometricMA,Minsker2019DistributedSE} partitions the data into subsets, computes an estimate (such as the mean) from each, and takes their median. 
Similarly, some works study robustness from the optimization perspective, proposing to robustly aggregate gradients of the loss functions \cite{Su2016FaultTolerantMO,Su2016NonBayesianLI,Blanchard2017ByzantineTolerantML,Yin2018ByzantineRobustDL}.
We adapt some analysis techniques from \cite{Minsker2019DistributedSE,Yin2018ByzantineRobustDL} to our significantly different setting: estimation with heterogeneity and communication constraints.

\subsection{Notations}
Throughout the paper, for an integer $d\geq 1$, we write $[d]$ for both $\{1,\dots,d\}$ and $\{e_1,\dots,e_d\}\subseteq \RR^d$, where $e_k$ is the $k$-th canonical basis vector of $\RR^d$. For a vector $\bv\in\RR^d$, we refer to the entries of $\bv$ by both $[\bv]_1,\dots,[\bv]_d$ and $v_1,\dots,v_d$.
We denote 
$\|\bv\|_p = (\sum_{k\in[d]} |v_k|^p)^\frac{1}{p}$ for all $p>0$ with $\|\bv\|_0$ defined additionally as the number of non-zero entries. 
We let $\cP_d:= \{\bp = (p_1,\dots, p_d) \in [0, 1]^d: p_1 + \cdots + p_d = 1\}$ be the simplex of all $d$-dimensional discrete probability distributions. For $\bp\in\cP_d$, we denote by $\mathbb{B}_s(\bp)$ the $s$-distinct neighborhood $\{\bp^\prime\in\cP_d:\|\bp^\prime-\bp\|_0\leq s\}$.
For a random variable $X$, we denote $n$ i.i.d. copies of $X$  by $X^{[n]}$. 
Given any index set $\cI$, we write $|\cI|$ for its cardinality and denote by $[\bv]_{\cI}$ the sub-vector $([\bv]_k)_{k\in\cI}$ indexed by $\cI$. 
We use the Bachmann-Landau asymptotic notations $\Omega(\cdot)$, $\Theta(\cdot)$, $O(\cdot)$ to hide constant factors, and use $\tilde{\Omega}(\cdot)$, $\tilde{O}(\cdot)$ to also hide logarithmic factors. 
We denote the  categorical distribution with class probability vector $\bp\in\cP_d$ by $\Cat(\bp)$. In our algorithm to be introduced shortly, we use $\widecheck{\cdot}$ and $\widehat{\cdot}$ to indicate the intermediate estimate and the final estimate, respectively.

\section{Problem Setup}\label{sec:setup}
We consider the problem of collaboratively learning distributions defined according to the following model of heterogeneity (see Figure~\ref{fig:problem-structure} for an illustration).
There are $T \geq1$  clusters $\{\cC^t\triangleq (X^{t,j})_{j\in[n]}:t\in[T]\}$ of  user datapoints, each of which contains $n$ i.i.d. local datapoints. 
Each datapoint $X^{t,j}$ is in a one-hot format, \ie, $X^{t,j} \in \{e_1,e_2,\ldots,e_d\}$, and 
follows the categorical distribution $\Cat(\bp^t)$ where $\bp^t\in\cP_d$ is unknown. 
Thus, user datapoints in the same cluster $\cC^t$ have an identical distribution $\bp^t$, 
while the distribution $\bp^t$ can vary, \ie, be heterogeneous, across clusters $t\in[T]$. 
The datapoint $X^{t,j}$ is encoded by its user into a message $Y^{t,j}$, and then transmitted to a central server. We assume that the message sent by each datapoint is encoded into no more than $b$ bits and $b$ can be significantly smaller than $\log_2 d$ so that the communication is efficient.
We also assume the server knows which cluster $t \in [T]$ each $Y^{t, j}$ belongs to, as well as the number of clusters $T$.
This paper mainly addresses the communication bottleneck and does not involve privacy concerns. 

The goal here is to collaboratively learn the distributions $\bp^t$ from the collection of messages $\{Y^{t^\prime,[n]}\triangleq (Y^{t^\prime,j})_{j\in[n]}:t^\prime\in[T]\} $ despite heterogeneity. More precisely, we aim to design per-cluster estimators $\widehat{\bp}^t$, $t\in[T]$, with  $\widehat{\bp}^t: \{Y^{t^\prime,[n]}:t^\prime\in[T]\} \to \mathcal{P}_d$ to minimize the $\ell_2$ errors  
\begin{equation*}
    \E[\|\widehat{\bp}^t-{\bp}^t\|_2^2],\quad \text{for all $t\in[T]$}.
\end{equation*}
We also study the widely-used $\ell_1$ error metric (in addition to the $\ell_2$ metric).
When $T=1$, \ie, all user datapoints are homogeneous and there is a single distribution to learn, the problem reduces to the one studied by \cite{Han2018DistributedSE,Barnes2019LowerBF,Han2021GeometricLB,Acharya2020InferenceUI,Acharya2020GeneralLB}.

\begin{figure}
    \centering
    \includegraphics[width = 0.5
    \textwidth]{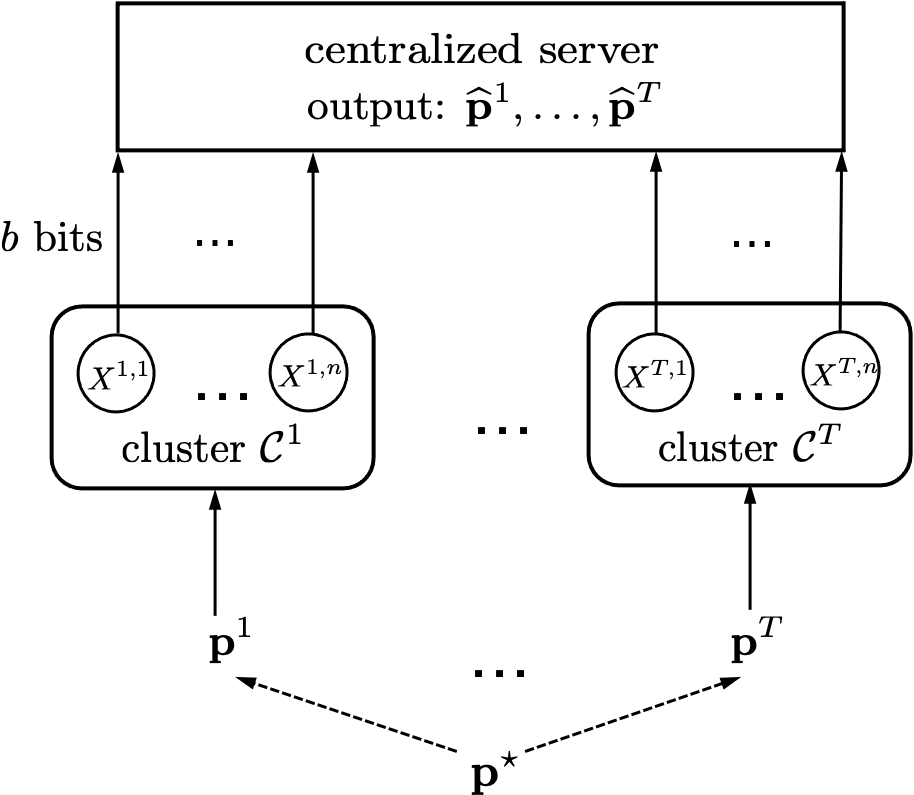}
    \caption{\small Learning distributions with heterogeneity and communication constraints.}
    \label{fig:problem-structure}
\end{figure}

\paragraph{Model of heterogeneity. }
In heterogeneous settings, 
collaboration among the users is most beneficial
if the local distributions are related. 
We model this by 
assuming that the local distributions are sparse perturbations of an unknown
\emph{central distribution} $\bp^\star\in \cP_d$. 
The distribution $\bp^t$ of each cluster $t$ differs from $\bp^\star$ in at most $s \geq0$ entries:
\begin{equation}\label{eqn:sparse-hetero}
    \|\bp^t-\bp^\star\|_0\leq s,\quad \forall\,t\in[T].
\end{equation}
The  central distribution $\bp^\star$ can be viewed as the central structure across heterogeneous clusters of datapoints. 
The level of heterogeneity is controlled by the parameter $s$. 
When $s$ is much smaller than $d$, the local distributions differ from the center in a small number of entries. 

While motivated by word frequencies of different texts, our model of sparse heterogeneity is also relevant for recommendation systems, where the high-dimensional item-preference vectors of users can vary sparsely \cite{Hu2019HERSMI,Qian2015StructuredSR,Liu2019RecommenderSW,Bastani2021PredictingWP}; and for medical risk scoring, where hospitals can have similar characteristics,  with a few systematic differences in diagnosis behavior, healthcare utilization, etc. \cite{Subbaswamy2019FromDT,QuioneroCandela2009DatasetSI,Mullainathan2017DoesML}.

\section{Algorithm}\label{sec:algorithm}
We now introduce our method for leveraging heterogeneous data to improve per-cluster estimation accuracy.  We first discuss a hashing-based method to handle the communication constraint. 
Since the communication between each user datapoint and the server is restricted to at most $b>0$ bits (where we may have $2^b\ll d$), 
the datapoint $X^{t,j}$ needs to be encoded by an encoding function $W^{t,j}:\mathcal{X}\triangleq [d]\rightarrow \mathcal{Y}$.
The $b$-bit constraint enforces $|\mathcal{Y}|\leq 2^b$.
Then the encoded message $Y^{t,j}:=W^{t,j}(X^{t,j})$ is sent to the server, where it is decoded and used.

There are relatively sophisticated communication protocols under which the design of encoding functions can be \emph{interactive}  \cite{Chen2021PointwiseBF,chen2021breaking,Acharya2020GeneralLB}, \ie, depend on previously sent messages. 
Here we adopt a \emph{non-interactive} encoding-decoding scheme, 
based on uniform hashing  \cite{pmlr-v132-acharya21b,chen2021breaking} where $W^{t,j}$ depends only on $X^{t,j}$ and is independent of other messages. 
Specifically, each datapoint $X^{t,j}$ is encoded  via an independent random hash function $h^{t,j}:[d]\rightarrow [2^b]$.
Upon receiving all messages, the server counts the empirical frequencies of all symbols, leading to hashed estimates $\widecheck{\bb}^t$. 
The communication scheme based on uniform hashing is summarized below.
\begin{align*}
    &{(\mathrm{Encoding}):\;}\text{Send the message $Y^{t,j}=h^{t,j}(X^{t,j})$ encoded by a hash function $h^{t,j}:[d]\rightarrow[2^b]$};\\
    &{(\mathrm{Decoding}):\;}\text{Count $N_k^t(Y^{t,[n]})=|\{j\in [n]:h^{t,j}(k)=Y^{t,j}\}|$ and return $[\widecheck{\bb}^t]_k=N_k^t/n$}.
\end{align*}
One can readily verify that  $\E[\widecheck{\bb}^t]=[(2^b-1)\bp^t+1]/2^b\triangleq \bb^t$; 
and thus the hashed estimate $\widecheck{\bb}^t$ is biased for $\bp^t$. We also write $\bb^\star=[(2^b-1)\bp^\star+1]/2^b$ for the mean of a hashed datapoint sampled from the central distribution.
More details on the hashed estimator $\widecheck{\bb}^t$ are given in Appendix \ref{app:uniform-hash}.

\subsection{The SHIFT Method}
We now introduce the SHIFT method, which consists of two stages: \emph{collaborative learning} and \emph{fine-tuning}.
The first stage estimates the central hashed distribution $\bb^\star$ using all hashed estimates $\{\widecheck{\bb}^t:t\in[T]\}$. 
This is achieved via methods from robust statistics such as the median or trimmed mean. 
The key insight here is that, since the heterogeneity is sparse, for each entry where the individual distributions mostly match with the central one, most datapoints (used to estimate that entry) are sampled from the probability of the central distribution. 
Hence, to estimate those entries of the central distribution, we can treat the datapoints generated by heterogeneous users as outliers, and leverage robust statistical methods to mitigate their influence.

\begin{algorithm}[t]
\caption{SHIFT: Sparse Heterogeneity Inspired collaboration
and Fine-Tuning}\label{alg:meta-method}
\begin{algorithmic}
\STATE \textbf{input:} individual hashed estimators $\widecheck{\bb}^1,\dots,\widecheck{\bb}^T$, threshold parameter $\alpha$
\vspace{1mm}
\STATE $\triangleright$ Stage I: Collaborative Learning
\vspace{1mm}
\STATE Estimate $\bb^\star$ via robust statistical methods: $\widecheck{\bb}^\star\,\leftarrow\,\mathrm{robust\_estimate}(\{\widecheck{\bb}^t:t\in[T]\})$
\vspace{1mm}
\STATE $\triangleright$ Stage II: Fine-Tuning
\vspace{1mm}
\FOR{$k=1,\dots,d$}
\FOR{$t = 1,\dots,T$}
\STATE $[\widehat{\bb}^t]_k\,\leftarrow\,[\widecheck{\bb}^\star]_k$\textbf{\; if \;}$|[\widecheck{\bb}^\star]_k-[\widecheck{\bb}^t]_k|\leq \sqrt{{\alpha[\widecheck{\bb}^t]_k}/{n}}$\textbf{\; else \;}$[\widecheck{\bb}^t]_k$
\STATE $[\widehat{\bp}^t]_k\,\leftarrow \,\proj_{[0,1]}(\frac{2^b [\widehat{\bb}^t]_k-1}{2^b-1})$  
\ENDFOR
\ENDFOR
\STATE \textbf{output:} estimates $\widehat{\bp}^1,\dots, \widehat{\bp}^T$
\end{algorithmic}
\end{algorithm}

In the second stage---fine-tuning---we detect mismatched entries between individual hashed estimates $\widecheck{\bb}^t$ and the central estimate $\widecheck{\bb}^\star$. 
Recall that the central and individual distributions differ in only a few entries.
For entries $k\in[d]$ such that $|[\widecheck{\bb}^\star]_k-[\widecheck{\bb}^t]_k|$ is below
$({\alpha[\widecheck{\bb}^\star]_k}/{n})^{1/2}$ for some threshold parameter $\alpha$, we may expect that $p^t_k=p_k^\star$. 
As a result, we expect the estimate $[\widecheck{\bb}^\star]_k$ of $b^\star_k$ to be more accurate, as it is learned collaboratively using a larger sample size.
Thus, we assign $[\widecheck{\bb}^\star]_k$ as the final estimate $[\widehat{\bb}^t]_k$ of $b^\star_k$.

On the other hand, for the entries where the central and individual distributions differ, \ie, $p^t_k\neq p_k^\star$,  the threshold is more likely to be exceeded. 
In this case, 
we keep the individual estimate $[\widecheck{\bb}^t]_k$ as $[\widehat{\bb}^t]_k$. 
Finally, since the hashed distributions $\bb^t$ are biased, we debias them in the final estimates of $\bp^t$ where $\proj_{[0,1]}(\cdot)$ in Algorithm \eqref{alg:meta-method} indicates truncating the input to the $[0,1]$ interval. 
Our method does not require sample splitting, despite using two stages, leading to increased sample-efficiency.
We remark that the SHIFT does not require the knowledge of the sparse heterogeneity $s$.

\paragraph{Knowledge transfer to new clusters. } The collaboratively learned central distribution from Algorithm \ref{alg:meta-method} is adaptable to new clusters, which possibly only have a few datapoints. 
This can be particularly beneficial for sample efficiency, because most entries of the target distribution are well-estimated through collaborative learning. 
One can transfer those entries, and it suffices to estimate the few remaining entries, instead of the whole distribution.
See Theorem \ref{thm:transfer} for the details. 
The knowledge transfer utility further motivates the importance of collaborative learning.

\subsection{Median-Based SHIFT}
In this section, we provide upper bounds on the error for the median-based SHIFT method, where $\mathrm{robust\_estimate}(\{\widecheck{\bb}^t:t\in[T]\})$ in Algorithm \ref{alg:meta-method} is the entry-wise median.
Specifically, we let 
\begin{equation*}
    [\widecheck{\bb}^\star]_k =\med\big(\{[\widecheck{\bb}^t]_k:t\in[T]\}\big),\quad \text{for each $k\in[d]$}.
\end{equation*}
When there is no ambiguity, we write $\widecheck{\bb}^\star =\med\big(\{\widecheck{\bb}^t\}_{t\in[T]}\big)$.
We also provide results for the trimmed-mean-based SHIFT method, see Appendix \ref{app:trimmed-mean}.

By setting the threshold parameter $\alpha$ in Algorithm \ref{alg:meta-method} as  $\alpha =\Theta(\ln(n))$, we prove the following upper bounds on the final individual $\ell_2$ estimation errors. 
The results for the $\ell_1$ error are  in Appendix \ref{app:median}.
\begin{theorem}\label{thm:median-collab-final}
Suppose  $n \geq 2^{b + 6}\ln(n)$ and $\alpha =\Theta(\ln(n))$\footnote{To be precise, we require $\alpha=O(\ln(n))$ and $\alpha \geq c\ln(n)$ for some absolute constant $c$. The analogous statement applies in Theorem \ref{thm:transfer}.}. Then, for the median-based SHIFT method, for any $t\in[T]$,
\begin{equation*}\label{eqn:median-final-knowledge}
\setlength{\abovedisplayskip}{3pt}\setlength{\belowdisplayskip}{1pt}
 \E\left[\|\widehat{\bp}^t-\bp^t\|_2^2\right]=\tilde{O}\left(\frac{\max\{2^b,s\}}{2^bn}+\frac{d}{2^bTn}+\frac{d}{n^2}\right).
\end{equation*}
\end{theorem}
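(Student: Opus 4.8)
The plan is to decompose the squared error coordinate-wise and, for each coordinate $k$ and cluster $t$, condition on whether the fine-tuning step selects the collaborative estimate $[\widecheck{\bb}^\star]_k$ or keeps the local estimate $[\widecheck{\bb}^t]_k$. Since $\proj_{[0,1]}$ is $1$-Lipschitz and $\bp^t$ is debiased from $\bb^t$ by the affine map $x\mapsto (2^b x-1)/(2^b-1)$, we have $\|\widehat{\bp}^t-\bp^t\|_2^2 \le \big(\tfrac{2^b}{2^b-1}\big)^2 \|\widehat{\bb}^t-\bb^t\|_2^2 = O(\|\widehat{\bb}^t-\bb^t\|_2^2)$ under $2^b \ge 2$, so it suffices to control $\E[\|\widehat{\bb}^t-\bb^t\|_2^2] = \sum_{k\in[d]} \E\big[([\widehat{\bb}^t]_k - b^t_k)^2\big]$. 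I would split the index set $[d]$ into the "matched" set $\cM^t = \{k : p^t_k = p^\star_k\}$ (of size at least $d-s$) and the "perturbed" set $\cS^t = [d]\setminus \cM^t$ (of size at most $s$).

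First I would bound the contribution from the collaborative estimate itself: I need $\E[([\widecheck{\bb}^\star]_k - b^\star_k)^2]$ for the entry-wise median of $T$ independent hashed estimates. This is the core robust-statistics step. For $k\in\cM^t$ (so $b^t_k = b^\star_k$), among the $T$ values $\{[\widecheck{\bb}^t]_k\}$, at most $s' := |\{t': k\in\cS^{t'}\}|$ are "corrupted"; but crucially $s'$ can be as large as $T$ in the worst case — what saves us is a counting argument that $\sum_k s' \le sT$, so on average few coordinates are heavily corrupted. Adapting the median-of-means / Byzantine-robust analysis (à la \cite{Minsker2019DistributedSE,Yin2018ByzantineRobustDL}), each individual $[\widecheck{\bb}^t]_k$ concentrates around $b^\star_k$ at scale $\sqrt{b^\star_k/n} \lesssim \sqrt{1/(2^b n)}$ (using $b^\star_k = [(2^b-1)p^\star_k + 1]/2^b$, so $b^\star_k \le (p^\star_k + 1)/2^b$ summing to roughly $2/2^b$ overall, and pointwise $\le 1/2^b$ up to the $p^\star_k$ term). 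The median over $T$ samples, when a majority are uncorrupted and well-concentrated, then achieves variance $\lesssim b^\star_k/(Tn)$ plus a term accounting for the corruption fraction. Summing $b^\star_k/(Tn)$ over all $d$ coordinates gives the $d/(2^bTn)$ term; the corruption/bias term, after the $\sum_k s' \le sT$ accounting and using that the bias it contributes is at most $O(\sqrt{\ln n}\,/\sqrt{2^b n})$ per corrupted coordinate, yields the $s/(2^b n)$ contribution (the $\ln n$ from $\alpha = \Theta(\ln n)$ controlling tail events).

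Next I would handle the fine-tuning selection error. For $k\in\cM^t$: if the test $|[\widecheck{\bb}^\star]_k - [\widecheck{\bb}^t]_k| \le \sqrt{\alpha[\widecheck{\bb}^t]_k/n}$ passes, we incur $([\widecheck{\bb}^\star]_k - b^\star_k)^2$, already bounded above; if it fails, we incur $([\widecheck{\bb}^t]_k - b^\star_k)^2$, but a failure is a low-probability event (both estimates concentrate around $b^\star_k$ within $\sqrt{\ln n /(2^b n)}$, while the threshold is of that same order, so failure requires a $\sqrt{\ln n}$-type deviation, happening with probability $\mathrm{poly}(1/n)$ by Bernstein). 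The rare-failure contribution, with the crude bound $([\widecheck{\bb}^t]_k - b^\star_k)^2 = O(1)$ and $d$ coordinates, produces the $d/n^2$ term — this is exactly why that term appears. For $k\in\cS^t$: we'd like to keep the local estimate, which has error $O(b^t_k/n)$; if the test erroneously passes, we incur $([\widecheck{\bb}^\star]_k - b^t_k)^2$, but then $|b^\star_k - b^t_k|$ must be comparable to the threshold $\sqrt{\alpha b^t_k/n}$ (else concentration forces the test to fail), so this is again $O(b^t_k\ln n/n)$; summing over the $\le s$ perturbed coordinates and bounding $b^t_k \le 1$ gives $O(s\ln n/n) = \tilde O(s/n)$, absorbed into $\tilde O(\max\{2^b,s\}/(2^b n))$ since $s/n \le 2^b\cdot s/(2^b n) \le \max\{2^b,s\}\cdot\ldots$ — wait, more carefully $s/n = (s/n)$ and $\max\{2^b,s\}/(2^bn) \ge s/(2^b n)$, so we need $s/n \lesssim \max\{2^b,s\}/(2^b n)$, i.e. $s\cdot 2^b \lesssim \max\{2^b,s\}\cdot n$... this holds since $2^b \le n$ by hypothesis and $s \le \max\{2^b,s\}$. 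Collecting all pieces gives the claimed bound.

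The main obstacle I anticipate is the robust-estimation step: getting a clean bound on $\E[([\widecheck{\bb}^\star]_k - b^\star_k)^2]$ for the median that (i) is valid uniformly over coordinates with an arbitrary corruption pattern, (ii) correctly interpolates between the "few corruptions" regime (where the median nearly matches the mean, variance $\sim b^\star_k/(Tn)$) and the "many corruptions" regime, and (iii) when summed against the constraint $\sum_k |\{t': k\in\cS^{t'}\}| \le sT$ produces $s/(2^b n)$ rather than something larger. This requires a careful quantile/concentration argument — showing that the median falls within $O(\sqrt{b^\star_k/(Tn)} + (\text{corruption fraction})\cdot\sqrt{b^\star_k/n})$ of $b^\star_k$ with high probability — and then a delicate summation. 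The no-sample-splitting feature adds a subtlety: the same $\widecheck{\bb}^t$ used in Stage I reappears in the Stage II threshold, so the "failure probability" bounds in fine-tuning cannot treat $\widecheck{\bb}^\star$ as independent of $\widecheck{\bb}^t$; I would handle this by conditioning on the high-probability event that all $\{[\widecheck{\bb}^{t'}]_k\}_{t'}$ simultaneously concentrate, on which the median is a deterministic-looking good estimate, and bounding the complementary event's contribution by the crude $d/n^2$-type term.
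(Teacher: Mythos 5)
Your decomposition matches the paper's: debias to $\bb$ via the Lipschitz property of $\proj_{[0,1]}$ (Proposition \ref{prop:debias}), split coordinates by whether the fine-tuning test selects the central or local estimate (the sets $\cK^t_\alpha$ and its complement, Lemma \ref{lem:term-I}/\ref{lem:term-II}/Proposition \ref{prop:main-analysis}), and the two features you flag as the crux --- the per-coordinate median concentration under partial corruption, and the accounting $\sum_k |\cB_k| \le sT$ with a ``delicate summation'' --- are exactly the paper's Lemma \ref{lem:median-entry} and Lemma \ref{lem:f-g}. You also correctly locate the $d/n^2$ term (rare threshold failures on well-aligned matched entries, bounded crudely) and the no-sample-splitting subtlety (handled by conditioning on a high-probability concentration event, which is what the events $\cE^t_k,\cF^t_k$ in the paper implement).

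Where you leave a genuine gap is precisely where you anticipated it. Your target bound for the median --- roughly $\sqrt{b^\star_k/(Tn)} + (|\cB_k|/T)\sqrt{b^\star_k/n}$ --- is the right shape, but you do not say how to get the variance-dependent $\sqrt{b^\star_k/(Tn)}$ first term from the median. A naive Chernoff/Hoeffding argument on $[0,1]$-valued averages gives $1/\sqrt{Tn}$, not $\sqrt{b^\star_k/(Tn)}$, and that loss is fatal when summed over $d$ coordinates (you would get $d/(Tn)$ rather than $d/(2^b Tn)$). The paper obtains the variance dependence by arguing that the median lies between the empirical $(\tfrac12 \pm |\cB_k|/T)$-quantiles of the clean sub-sample and controlling those quantiles via the Berry--Esseen theorem (Lemma \ref{lem:berry}) applied to the standardized $\widecheck{b}^t_k$, plus Hoeffding for the empirical CDF and Bernstein for the per-cluster binomial; this is Lemmas \ref{lem:median-funda} and \ref{lem:median-funda-2}. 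Also note the paper splits into a small-$T$ regime ($T = O(\ln n)$), handled by a simple union bound over clusters, and a large-$T$ regime ($T = \Omega(\ln n)$), handled by the quantile argument --- you implicitly assume the latter throughout. Finally, your heuristic that the corrupted-coordinate contribution is ``$O(\sqrt{\ln n}/\sqrt{2^b n})$ per coordinate times $s$ coordinates'' undershoots: the local error $b^t_k/n$ summed over the $\lceil s/\eta\rceil$ poorly-aligned coordinates is $\max\{2^b, s/\eta\}/(2^b n)$, which can be $1/n$ rather than $s/(2^b n)$ when $s < 2^b$; the paper's Lemma \ref{lem:f-g} (a rearrangement-inequality argument) handles this sharply.
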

When $n\geq 2^{b+6}\ln(n) \text{ and }
n=\Omega( 2^b\min\{T, d/\max\{2^b,s\}\})$, the rate further becomes
\begin{equation}\label{eqn:ghoqenfq}
\setlength{\abovedisplayskip}{3pt}\setlength{\belowdisplayskip}{3pt}
 \E\left[\|\widehat{\bp}^t-\bp^t\|_2^2\right]=\tilde{O}\left(\frac{\max\{2^b,s\}}{2^bn}+\frac{d}{2^bTn}\right).
\end{equation}
The upper bound in \eqref{eqn:ghoqenfq} consists of two terms. The first term---$\max\{2^b,s\}/(2^bn)$---is independent of the dimension $d$, and is smaller than the rate $d/(2^bn)$ obtained by the minimax optimal method using only homogeneous datapoints \cite{Han2018DistributedSE} by a factor ${d}/{\max\{2^b,s\}}$.
Thus, it brings a significant benefit under sparse heterogeneity and reasonable communication restrictions, \ie, when $\max\{2^b,s\}\ll d$. 
Meanwhile, the second, dimension-dependent, term $d/(2^bTn)$ is $T$ times smaller than  $d/(2^bn)$, since it depends on the \emph{total sample-size} $Tn$ used collaboratively, despite heterogeneity. Therefore, our method shows a factor of $\min\{T,{d}/{\max\{2^b,s\}}\}$ improvement in sample efficiency, compared to previous work designed for homogeneous datapoints.

For completeness, we also consider a heuristic application of  estimators from prior works  \cite{Han2018DistributedSE,Chen2021PointwiseBF}, in which datapoints from all clusters are pooled to learn a global distribution $T^{-1}\sum_{t\in[T]}\bp^t$, which is then used by each cluster. 
While this uses all datapoints, it inevitably introduces a non-vanishing bias $\boldsymbol{\delta}^t=\bp^t-T^{-1}\sum_{t^\prime\in[T]}\bp^{t^\prime}$ in estimating individual distributions, and can behave poorly when the bias is large. See Table \ref{tab:comparsion} for more details.

Finally, we discuss our results on knowledge transfer. The central estimator $\widecheck{\bb}^\star$ is adaptable to a new cluster $\cC^{T+1}$ of size $\tilde{n}$ in the following way. 
We adjust the fine-tuning procedure in Algorithm \ref{alg:meta-method} to $[\widehat{\bb}^{T+1}]_k\,\leftarrow\,[\widecheck{\bb}^\star]_k$
if
$|[\widecheck{\bb}^\star]_k-[\widecheck{\bb}^{T+1}]_k|\leq \sqrt{{\alpha[\widecheck{\bb}^{T+1}]_k}/{\tilde{n}}}$,
and
$[\widehat{\bb}^{T+1}]_k\,\leftarrow\, [\widecheck{\bb}^{T+1}]_k$
otherwise. 
We then show the following result.
\begin{theorem}\label{thm:transfer}
Let $\widecheck{\bb}^{T+1}$ be the hashed estimate of any new cluster $\cC^{T+1}$ with $\tilde{n}$ datapoints such that $n\geq \tilde{n}\geq 2^{b+6} \ln(\tilde{n})$ and $\tilde{n}=\Omega(2^b \min\{T,d/\max\{2^b,s\}\})$.
Let the threshold parameter be $\alpha =\Theta(\ln( \tilde{n}))$. Then,
the median-based SHIFT  method has error bounded by
\begin{equation*}\label{eqn:median-final-transfer}
\setlength{\abovedisplayskip}{3pt}\setlength{\belowdisplayskip}{2pt}
 \E\left[\|\widehat{\bp}^{T+1}-\bp^{T+1}\|_2^2\right]=\tilde{O}\left(\frac{\max\{2^b,s\}}{2^b\tilde{n}}+\frac{d}{2^bTn}\right).
\end{equation*}
\end{theorem}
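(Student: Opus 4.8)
The plan is to reduce the $\ell_2$ error on $\widehat{\bp}^{T+1}$ to a coordinatewise error on the hashed means $\bb^{T+1}$, and then to reuse the analysis behind Theorem~\ref{thm:median-collab-final}: the central estimate $\widecheck{\bb}^\star$ is already known to be accurate on all but $O(s)$ coordinates, so fine-tuning with the new cluster's $\tilde n$ datapoints only has to pay for those coordinates and for the occasional spurious firing of the threshold test. First I would note that, since $b^{T+1}_k=[(2^b-1)p^{T+1}_k+1]/2^b$ with $p^{T+1}_k\in[0,1]$, we have $p^{T+1}_k=\proj_{[0,1]}\big((2^b b^{T+1}_k-1)/(2^b-1)\big)$, so by the $1$-Lipschitzness of $\proj_{[0,1]}$ and $2^b\geq 2$,
\begin{equation*}
\|\widehat{\bp}^{T+1}-\bp^{T+1}\|_2^2\leq\Big(\tfrac{2^b}{2^b-1}\Big)^2\|\widehat{\bb}^{T+1}-\bb^{T+1}\|_2^2\leq 4\sum_{k\in[d]}\big([\widehat{\bb}^{T+1}]_k-b^{T+1}_k\big)^2,
\end{equation*}
and it suffices to bound the expectation of the right-hand side. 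I would also record the elementary facts that $[\widecheck{\bb}^t]_k=N^t_k/n$ with $N^t_k\sim\Bin(n,b^t_k)$, that $\widecheck{\bb}^{T+1}$ is independent of $\{\widecheck{\bb}^t\}_{t\in[T]}$, that $\sum_{k\in\cI}b^{T+1}_k\leq 1+|\cI|/2^b$ for any $\cI\subseteq[d]$, and that $\sum_{k\in[d]}b^\star_k=\Theta(d/2^b)$ since $2^b\ll d$.

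Next I would classify the coordinates. Put $\cK=\{k\in[d]:|\{t\in[T]:p^t_k\neq p^\star_k\}|<T/4\}$; since $\sum_k|\{t:p^t_k\neq p^\star_k\}|=\sum_t\|\bp^t-\bp^\star\|_0\leq Ts$ by \eqref{eqn:sparse-hetero}, we get $|\cK^c|\leq 4s$. For $k\in\cK$, more than $3T/4$ of the independent $[\widecheck{\bb}^t]_k$ are distributed as $\Bin(n,b^\star_k)/n$ and $nb^\star_k\geq\tilde n/2^b=\Omega(\ln\tilde n)$; hence, exactly as in the proof of Theorem~\ref{thm:median-collab-final}, the entrywise median satisfies $\E[([\widecheck{\bb}^\star]_k-b^\star_k)^2]=\tilde O(b^\star_k/(Tn))$ together with a tail bound $\PP[|[\widecheck{\bb}^\star]_k-b^\star_k|>u]\leq 2\exp(-cTnu^2/b^\star_k)$ for $u\lesssim b^\star_k$.

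Then I would split $\sum_k([\widehat{\bb}^{T+1}]_k-b^{T+1}_k)^2$ into three groups. \emph{(i)} The at most $5s$ coordinates with $p^{T+1}_k\neq p^\star_k$ or $k\notin\cK$: here, whatever the test outcome, $|[\widehat{\bb}^{T+1}]_k-[\widecheck{\bb}^{T+1}]_k|\leq\sqrt{\alpha[\widecheck{\bb}^{T+1}]_k/\tilde n}$ by construction, so $([\widehat{\bb}^{T+1}]_k-b^{T+1}_k)^2\leq 2\alpha[\widecheck{\bb}^{T+1}]_k/\tilde n+2([\widecheck{\bb}^{T+1}]_k-b^{T+1}_k)^2$, whose expectation is $\tilde O(b^{T+1}_k/\tilde n)$ since $\alpha=O(\ln\tilde n)$; summing and using $\sum_k b^{T+1}_k\leq 1+5s/2^b$ over this group yields $\tilde O(\max\{2^b,s\}/(2^b\tilde n))$. \emph{(ii)} Coordinates with $p^{T+1}_k=p^\star_k$, $k\in\cK$, on the event the test passes: then $[\widehat{\bb}^{T+1}]_k=[\widecheck{\bb}^\star]_k$ and $b^{T+1}_k=b^\star_k$, contributing $\sum_{k\in\cK}\E[([\widecheck{\bb}^\star]_k-b^\star_k)^2]=\tilde O(d/(2^bTn))$. \emph{(iii)} The same coordinates, on the event the test fails: then $[\widehat{\bb}^{T+1}]_k=[\widecheck{\bb}^{T+1}]_k$, and conditioning on $[\widecheck{\bb}^\star]_k$ (independent of $[\widecheck{\bb}^{T+1}]_k$), the failure event forces either $|[\widecheck{\bb}^{T+1}]_k-b^{T+1}_k|\gtrsim\sqrt{\alpha b^{T+1}_k/\tilde n}$ --- a deviation of $\Omega(\sqrt{\ln\tilde n})$ standard deviations, Bernstein-admissible since $\tilde n b^{T+1}_k=\Omega(\ln\tilde n)$ --- or $|[\widecheck{\bb}^\star]_k-b^\star_k|\gtrsim\sqrt{\alpha b^\star_k/\tilde n}$, which by the tail bound above (and $n\geq\tilde n$) has probability $\tilde n^{-\Omega(T)}$. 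A Bernstein tail estimate and tail integration then bound the expected squared error on this event by $\tilde O(b^\star_k\,\tilde n^{-1-c})$, with $c$ governed by the absolute constant in $\alpha=\Theta(\ln\tilde n)$; summed over $\cK$ this is $\tilde O(d\,\tilde n^{-1-c}/2^b)$.

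Combining groups \emph{(i)}--\emph{(iii)} and multiplying by $4$ gives the claimed bound $\tilde O(\max\{2^b,s\}/(2^b\tilde n)+d/(2^bTn))$, once the residual from \emph{(iii)} is absorbed by the other two terms --- which is exactly where the hypothesis $\tilde n=\Omega(2^b\max\{T,\ln\tilde n\})$ and a large enough absolute constant in $\alpha$ are used. I expect the main obstacle to be group \emph{(iii)}: controlling the contribution of the (many) ``matched'' coordinates on which the threshold test fires spuriously. This needs the right Bernstein-type tail control of $[\widecheck{\bb}^{T+1}]_k$ on the exact event that triggers the test (the threshold itself being random through $[\widecheck{\bb}^{T+1}]_k$), it crucially exploits the independence of cluster $T+1$ from the first $T$ clusters so that $\widecheck{\bb}^\star$ may be conditioned on, and it requires the constants relating $\alpha$, $\tilde n$, $T$ and $n$ to be tracked carefully so that this residual stays below the target rate.
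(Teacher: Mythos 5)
Your decomposition and overall plan match the paper's strategy (which is, in fact, simply to rerun the proof of Theorem~\ref{thm:median-collab-final} with $\tilde n$ in place of $n$ for the new cluster, as the paper remarks when introducing Theorem~\ref{thm:transfer-combo}): debias via $\proj_{[0,1]}$, split coordinates by whether they are well-aligned and whether $p^{T+1}_k=p^\star_k$, pay the local rate for the $O(s)$ bad coordinates, charge the central rate where the test passes, and argue that spurious firings are rare.

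There is, however, a concrete gap in your treatment of the median on $\cK$. You assert that for $k\in\cK$ the entrywise median satisfies $\E[([\widecheck{\bb}^\star]_k-b^\star_k)^2]=\tilde O(b^\star_k/(Tn))$ together with the clean sub-Gaussian-type tail $\PP[|[\widecheck{\bb}^\star]_k-b^\star_k|>u]\leq 2\exp(-cTnu^2/b^\star_k)$. Neither holds as stated when $|\cB_k|$ is a constant fraction of $T$ (which your definition of $\cK$ allows, with $|\cB_k|$ up to $T/4$). The $|\cB_k|$ heterogeneous contributors shift the median by an amount of order $|\cB_k|\sigma_k/(T\sqrt n)$ --- this is a bias, not a fluctuation --- and the correct per-coordinate bound (Lemma~\ref{lem:median-entry-combo}) is
\begin{equation*}
\E[([\widecheck{\bb}^\star]_k-b^\star_k)^2]=\tilde O\left(\frac{|\cB_k|^2 b^\star_k}{T^2 n}+\frac{b^\star_k}{Tn}+\frac{1}{n^2}\right).
\end{equation*}
Your group~(ii) estimate $\sum_{k\in\cK}\E[([\widecheck{\bb}^\star]_k-b^\star_k)^2]=\tilde O(d/(2^bTn))$ therefore drops the first and third terms, and your group~(iii) tail bound is only valid once the threshold $u$ exceeds the heterogeneity bias --- which does hold here because $\sqrt{\alpha b^\star_k/\tilde n}\geq\sqrt{\alpha}\cdot|\cB_k|\sigma_k/(T\sqrt n)$ for $|\cB_k|\leq T/4$ and $n\geq\tilde n$, but this needs to be said (the paper's version of this step is the derivation of the event $\cE^t_k$ and the inequality preceding \eqref{eqn:fqwoindfqw}, which crucially tracks the $G_{k,T,\delta}$ bias). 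The omitted contributions are recoverable: $\sum_{k\in\cK}1/n^2\leq d/n^2\leq d/(2^bTn)$ by $\tilde n=\Omega(2^bT)$ and $n\geq\tilde n$, and the heterogeneity sum $\sum_{k\in\cK}|\cB_k|^2 b^\star_k/(T^2 n)$ is controlled by the rearrangement argument of Lemma~\ref{lem:f-g} to be $\tilde O(\max\{2^b,s\}/(2^bn))$, which is then absorbed by the $\max\{2^b,s\}/(2^b\tilde n)$ term using $n\geq\tilde n$. So the final rate you claim is correct, but the intermediate per-coordinate bound you quote is false, and the proof would not go through without reinstating the $|\cB_k|^2 b^\star_k/(T^2n)$ and $1/n^2$ terms and then showing they are dominated.
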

Similarly, one can see that the adaptation to new clusters with the median-based SHIFT method achieves a factor of $\min\{Tn/\tilde{n},d/\max\{2^b,s\}\}$ improvement in sample-efficiency compared to estimating the distribution of the new cluster without knowledge transfer.

\subsubsection{Highlights of Theoretical Analysis}
In this section, we introduce the key ideas behind the proof of Theorems \ref{thm:median-collab-final} and \ref{thm:transfer}. Our analysis is novel compared to previous analyses for methods with homogeneous datapoints.
The final individual estimation errors relate to the error of estimating the central hashed distribution $\bb^\star$. 
However, we only expect high accuracy at the center for entries with few individual misalignments. 
To quantify the influence of heterogeneity, for any $0< \eta \leq 1$, we define the set
of \emph{$\eta$-well-aligned entries} as 
\begin{equation*}
    \cI_\eta:=\{k\in[d]:|\cB_k|< \eta T\},\quad \text{where}\quad \cB_k\triangleq \{t\in[T]:b_k^t\neq b_k^\star,\,\ie,\,p_k^t\neq p_k^\star\}
\end{equation*}
is the set of clusters whose distribution differs from $\bp^\star$ in the $k$-th entry.
We aim to estimate the $\eta$-well-aligned entries accurately by using robust statistical methods.

Further, we argue that there are few poorly-aligned entries, 
and they affect the final per-cluster error only mildly.
By the pigeonhole principle, the number of entries that are not $\eta$-well-aligned is upper bounded by $|\cI_\eta^c|\triangleq |[d]\backslash\cI_\eta|\leq \frac{s T}{\eta T}=  \frac{s}{\eta}.$
Therefore, given an estimator $\widecheck{\bb}^\star$ that is accurate for the $\eta$-well-aligned entries, 
the entries of  $\bb^\star$ can be estimated accurately except for at most $s/\eta$ entries. 
The following technical lemma bounds the error for each entry $k\in\cI_\eta$.
\begin{lemma}\label{lem:median-entry}
Suppose $\widecheck{\bb}^\star =\med\big(\{\widecheck{\bb}^t\}_{t\in[T]}\big)$. Then for any  $0< \eta \leq  1/5$ and $k\in\cI_\eta$,  it holds that 
\begin{equation*}
\setlength{\abovedisplayskip}{4pt}
    \E[([\widecheck{\bb}^\star]_k-[{\bb}^\star]_k)^2]=\tilde{O}\left( \frac{|\cB_k|^2b_k^\star(1-b_k^\star)}{T^2n}+\frac{b_k^\star(1-b_k^\star)}{Tn}+\frac{1}{n^2}\right).
    \setlength{\belowdisplayskip}{1pt}
\end{equation*}
\end{lemma}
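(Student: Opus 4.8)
The plan is to control the entrywise median error by a standard robust-estimation argument, carefully accounting for the fact that up to $|\cB_k|$ of the $T$ values $\{[\widecheck{\bb}^t]_k\}$ come from the ``wrong'' mean (i.e.\ $b_k^t\ne b_k^\star$) and thus behave as adversarial outliers for the median at coordinate $k$. First I would recall that $[\widecheck{\bb}^t]_k = N_k^t/n$ is an average of $n$ i.i.d.\ indicators, so $n[\widecheck{\bb}^t]_k\sim\Bin(n,b_k^t)$, with $b_k^t = b_k^\star$ for the $T-|\cB_k|$ ``good'' clusters $t\notin\cB_k$. Since $k\in\cI_\eta$ we have $|\cB_k| < \eta T$ with $\eta\le 1/5$, so a strict majority (in fact at least $4/5$) of the samples are well-behaved. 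The core deterministic fact I would use: for any threshold $r>0$, if the median $[\widecheck{\bb}^\star]_k$ deviates from $b_k^\star$ by more than $r$, then at least $T/2 - |\cB_k|$ of the \emph{good} clusters must have $|[\widecheck{\bb}^t]_k - b_k^\star| > r$; since $T/2 - |\cB_k| \ge (1/2 - \eta)T \ge 3T/10$, this is a constant fraction of the good clusters. Hence
\begin{equation*}
\PP\big(|[\widecheck{\bb}^\star]_k - b_k^\star| > r\big) \le \PP\Big(\textstyle\sum_{t\notin\cB_k}\mathds{1}\{|[\widecheck{\bb}^t]_k - b_k^\star|>r\} \ge cT\Big)
\end{equation*}
for a constant $c = 3/10$ (or $1/2 - \eta$ kept symbolic).

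Next I would bound the right-hand side. For a single good cluster, a Bernstein/Chernoff bound on $\Bin(n,b_k^\star)$ gives $\PP(|[\widecheck{\bb}^t]_k - b_k^\star| > r) \le 2\exp(-c'nr^2/(b_k^\star(1-b_k^\star) + r))$. Call this quantity $q(r)$. Because the good clusters are independent, $\sum_{t\notin\cB_k}\mathds{1}\{\cdots\}$ is a sum of $\ge(1-\eta)T$ independent Bernoulli$(q(r))$ variables, so a second Chernoff bound controls $\PP(\text{sum}\ge cT)$. There are two regimes: when $q(r)$ is comparable to or larger than $c$ (i.e.\ $r$ small), the bound is vacuous and we just pay $r^2$ directly; when $q(r)\ll c$, we get $\PP(\text{sum}\ge cT)\le \exp(-c''T\log(c/q(r)))$ or, more simply, $\le (e\,q(r)/c)^{cT}$. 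I would then integrate the tail: $\E[([\widecheck{\bb}^\star]_k - b_k^\star)^2] = \int_0^\infty 2r\,\PP(|[\widecheck{\bb}^\star]_k-b_k^\star|>r)\,dr$, split at the threshold $r_0$ where $q(r_0)$ hits a small constant (this is $r_0 \asymp \sqrt{b_k^\star(1-b_k^\star)\log(1/\varepsilon)/n} + \log(1/\varepsilon)/n$ for the relevant scale), contributing the $b_k^\star(1-b_k^\star)/(Tn)$ and $1/n^2$ terms up to the logarithmic factor hidden in $\tilde O$; and for $r\le r_0$ the integrand is at most $2r$ and contributes $r_0^2$, of the same order.

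The one piece that is genuinely different from a textbook median-of-means bound — and what I expect to be the main obstacle — is the extra term $|\cB_k|^2 b_k^\star(1-b_k^\star)/(T^2 n)$. This is a \emph{bias}-type term: the median of the $T$ values is not centered exactly at $b_k^\star$ but is shifted by roughly $|\cB_k|/T$ quantiles into the bulk of the good-cluster distribution, and the bulk has scale $\sqrt{b_k^\star(1-b_k^\star)/n}$, so the deterministic shift of the population median is of order $(|\cB_k|/T)\sqrt{b_k^\star(1-b_k^\star)/n}$, whose square is exactly the claimed term. To make this rigorous I would not try to locate the median exactly; instead I would keep the constant $c$ in the deterministic step as $c = 1/2 - |\cB_k|/T$ rather than replacing it by $3/10$, and carry it through. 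Then the ``small constant'' threshold $r_0$ picks up a dependence: the Chernoff bound $\PP(\text{sum}\ge cT)$ is only useful once $q(r) < (1/2 - |\cB_k|/T)$ with a margin, and unwinding $q(r) = $ const gives the threshold radius inflated by a factor $\asymp |\cB_k|/T$ in the sub-Gaussian regime, which after squaring yields $|\cB_k|^2 b_k^\star(1-b_k^\star)/(T^2n)$. Concretely I would show: there is a radius $r_\star \asymp \frac{|\cB_k|}{T}\sqrt{\frac{b_k^\star(1-b_k^\star)}{n}}$ such that for $r \le r_\star$ the deterministic step already fails (the outliers alone can push the median this far) so we just pay $r_\star^2$, while for $r \ge C r_\star$ the two-level Chernoff argument above goes through with a fixed constant margin and gives exponential decay, contributing the remaining two terms. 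Summing the three contributions gives the stated $\tilde O(\cdot)$; I would use $a+b\le\sqrt2\sqrt{a^2+b^2}$-type manipulations only at the very end to match the form in the lemma, and track that the logarithmic factor absorbed into $\tilde O$ comes solely from the $\log(1/\varepsilon)$ I set $\asymp \log n$ when choosing where to cut the tail integral (consistent with the $\alpha = \Theta(\ln n)$ choice used elsewhere).
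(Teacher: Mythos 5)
Your deterministic counting step and the cluster-level Chernoff bound are essentially the paper's argument in a different packaging (the paper phrases it through empirical CDFs and Hoeffding, via Lemmas~\ref{lem:median-funda} and~\ref{lem:median-funda-2}). The genuine gap is in the \emph{single-cluster} tail control, and it matters precisely for the bias term you identify as the ``main obstacle.''

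The issue is that Bernstein gives only an \emph{upper} bound on $q_+(r)=\PP(\widecheck{b}^t_k>b^\star_k+r)$, of the form $q_+(r)\le 2\exp(-c'nr^2/(\sigma_k^2+r))$, and this bound is vacuous (exceeds $1/2$) for every $r\lesssim \sigma_k/\sqrt{n}$. But your Chernoff step is only nontrivial once $q_+(r)$ drops below $cT/(T-|\cB_k|)\approx \frac12-\frac{|\cB_k|}{2T}$, which for $k\in\cI_\eta$ is a number within $O(\eta)$ of $1/2$. Inverting the Bernstein tail at a level close to $1/2$ gives a threshold radius of order $\sigma_k\sqrt{\log(\cdot)/n}$ — the \emph{per-cluster} scale — so ``paying $r_0^2$'' in your integral yields $\tilde O(\sigma_k^2/n)$, not the claimed $\tilde O\bigl(|\cB_k|^2\sigma_k^2/(T^2n)+\sigma_k^2/(Tn)+1/n^2\bigr)$. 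Your heuristic that ``unwinding $q(r)=\text{const}$ gives the threshold radius inflated by a factor $\asymp |\cB_k|/T$'' is where this breaks down: inverting a Gaussian-type \emph{tail} bound at level $1/2-x$ gives $r\asymp\sigma_k\sqrt{\log\frac{1}{1/2-x}}/\sqrt n\asymp\sigma_k/\sqrt n$ for all $x\le1/5$, not $x\sigma_k/\sqrt n$. What you actually need is that $q_+(r)$ decreases \emph{linearly} in $r$ near $r=0$ at rate $\approx \sqrt{n}/(\sqrt{2\pi}\,\sigma_k)$, i.e.\ control of the binomial \emph{density} near its median — a statement in the opposite direction from Bernstein. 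The paper obtains this via the Berry--Esseen theorem (Lemma~\ref{lem:berry}), which replaces the exact CDF $\Phi_k$ of the standardized $\widecheck{b}^t_k$ by $\Phi$ up to a $0.4748\,\gamma_k/\sqrt n$ error; from $\Phi$ one reads off that shifting the probability level by $|\cB_k|/T$ moves the quantile by $\asymp(|\cB_k|/T)\sigma_k/\sqrt n$, and the Berry--Esseen correction contributes the $1/n$ term (since $\gamma_k\sigma_k\le 1$). Without a CLT-level ingredient of this kind, your two-level Chernoff plan cannot see the factor $|\cB_k|/T$ at all.

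Two smaller omissions, both handled in the paper: when $T=O(\ln n)$ the above refinement is not needed and the crude bound $|\widecheck{b}^\star_k-b^\star_k|\le\max_{t\notin\cB_k}|\widecheck{b}^t_k-b^\star_k|$ suffices, since $1/n\lesssim 1/(Tn)$ up to logs; and when $\sigma_k$ is as small as $O(1/\sqrt n)$ the empirical distribution is too discrete for the CDF argument, so that case is also dispatched directly with Bernstein. You would need to split off these regimes explicitly.
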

Lemma \ref{lem:median-entry} provides an upper bound, in terms of the frequency ${|\cB_k|}/{T}$ of misalignment (which is smaller than $\eta$), and a variance term $b_k^\star(1-b_k^\star)$. 
This result cannot be obtained by directly applying the standard Chernoff or Hoeffding bounds to random variables distributed in $[0,1]$ as in previous works \cite{Chen2021PointwiseBF} for two reasons: 1) the datapoints are heterogeneous, 2) the variance $b_k^\star(1-b_k^\star)$ here can be small, compared to what it can be for general random variables in $[0,1]$, implying more concentration than what follows from Hoeffding's inequality.
To address these issues, we analyze the concentration of the empirical $(1/2\pm{|\cB_k|}/{T})$-quantiles to mitigate the influence of heterogeneity, and we also use Bernstein's inequality, which is variance-dependent \cite{Uspensky1937IntroductionTM}, to obtain bounds relying on both the sample size $Tn$ and the variance $b_k^\star(1-b_k^\star)$.

Also, the constant $1/5$, upper bounding the  heterogeneity, is not essential and is chosen for clarity. 
It can be replaced with any number less than half; in which case estimating the central probability distribution becomes possible, as the information conveyed by homogeneous datapoints dominates.

Lemma  \ref{lem:median-entry} reveals that well-aligned entries of the central distribution are accurately estimated. Thus one can use the central estimate for the entries where the central distribution $\bp^\star$ aligns with the target distribution $\bp^t$. 
The remaining entries, that are neither well-aligned nor satisfy $p_k^\star=p_k^t$, can be estimated  by the local estimator. 
We argue that a properly chosen threshold parameter $\alpha$ filters out the only desired entries to be estimated individually, with high probability, leading to Theorems \ref{thm:median-collab-final} and \ref{thm:transfer}.

While estimating $\bp^\star$ is not our main goal, one can readily obtain from Lemma  \ref{lem:median-entry} the following bound for estimating $\bp^\star$ by summing up the errors for all entries $k\in[d]=\cI_{\eta}$ with $\eta =\max_{k\in[d]}|\cB_k|/T$.
Corollary \ref{cor:median-uniform-hetero} reveals that the central distribution can be accurately estimated if the  mismatch of distributions happens uniformly across all entries, \ie, each entry differs in  $O(sT/d)$  clusters.

\begin{corollary}\label{cor:median-uniform-hetero}
 Let $\widehat{\bp}^\star=\proj_{[0,1]}(\frac{2^b \widecheck{\bb}^\star-1}{2^b-1})$ be obtained by the debiasing operation from Algorithm \ref{alg:meta-method}.
Suppose $|\cB_k|=O(sT/d)$ for any $k\in [d]$, with $\eta =\max_{k\in[d]}|\cB_k|/T$. 
Then the median-based SHIFT method enjoys
\begin{equation*}
    \E[\|\widehat{\bp}^\star-\bp^\star\|_2^2]=\tilde{O}\left(\frac{s^2}{d2^bn}+\frac{d}{2^bTn}+\frac{d}{n^2}\right).
\end{equation*}
\end{corollary}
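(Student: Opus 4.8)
The plan is to reduce Corollary~\ref{cor:median-uniform-hetero} to a coordinate-wise application of Lemma~\ref{lem:median-entry} followed by a short summation. Fix any $k\in[d]$. The hypothesis $|\cB_k|=O(sT/d)$ means that in the regime $s\ll d$ we have $|\cB_k|<T/5$, so there is some $\eta\in(|\cB_k|/T,\,1/5]$ with $k\in\cI_\eta$; hence Lemma~\ref{lem:median-entry} applies and bounds $\E[([\widecheck{\bb}^\star]_k-[\bb^\star]_k)^2]$ in terms of $|\cB_k|$ alone. (Equivalently, with $\eta=\max_{k\in[d]}|\cB_k|/T=O(s/d)\le 1/5$ every entry is well-aligned and $[d]=\cI_\eta$, as stated in the corollary.)

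Next I would transfer the guarantee from $\widecheck{\bb}^\star$ to $\widehat{\bp}^\star$. Since $b_k^\star=[(2^b-1)p_k^\star+1]/2^b$, the debiasing map of Algorithm~\ref{alg:meta-method} satisfies $\proj_{[0,1]}((2^b b_k^\star-1)/(2^b-1))=p_k^\star$ exactly, so $p_k^\star$ is a fixed point of $\proj_{[0,1]}$; as $\proj_{[0,1]}$ is $1$-Lipschitz,
\begin{equation*}
\big|[\widehat{\bp}^\star]_k-[\bp^\star]_k\big|\;=\;\Big|\proj_{[0,1]}\!\Big(\tfrac{2^b[\widecheck{\bb}^\star]_k-1}{2^b-1}\Big)-\proj_{[0,1]}\big([\bp^\star]_k\big)\Big|\;\le\;\frac{2^b}{2^b-1}\,\big|[\widecheck{\bb}^\star]_k-[\bb^\star]_k\big|.
\end{equation*}
Squaring, summing over $k\in[d]$, and absorbing the $O(1)$ prefactor $(2^b/(2^b-1))^2$ gives $\E[\|\widehat{\bp}^\star-\bp^\star\|_2^2]=O\big(\E[\|\widecheck{\bb}^\star-\bb^\star\|_2^2]\big)$.

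Finally I would sum the bound of Lemma~\ref{lem:median-entry} over all $d$ coordinates. The term $\sum_{k}n^{-2}=d/n^2$ is immediate. For the other two terms the one elementary fact needed is that, because $\sum_{k}p_k^\star=1$,
\begin{equation*}
\sum_{k\in[d]}b_k^\star(1-b_k^\star)\;\le\;\sum_{k\in[d]}b_k^\star\;=\;\frac{(2^b-1)+d}{2^b}\;=\;O\!\left(\frac{d}{2^b}\right),
\end{equation*}
which is exactly what makes the dimension-dependent contribution scale like $d/2^b$ rather than $d$. Plugging $|\cB_k|^2=O(s^2T^2/d^2)$ into the first term of Lemma~\ref{lem:median-entry} gives $\sum_k|\cB_k|^2 b_k^\star(1-b_k^\star)/(T^2n)=O(s^2/(d2^bn))$, the second term gives $\sum_k b_k^\star(1-b_k^\star)/(Tn)=O(d/(2^bTn))$, and together with $d/n^2$ and the hidden logarithmic factors this yields the claimed $\tilde{O}(s^2/(d2^bn)+d/(2^bTn)+d/n^2)$. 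The computation is essentially bookkeeping once Lemma~\ref{lem:median-entry} is available; the only points needing care are checking that the uniform-misalignment hypothesis is consistent with the well-alignment requirement $\eta\le 1/5$ so that the lemma applies to every coordinate, and the summation $\sum_k b_k^\star(1-b_k^\star)=O(d/2^b)$ that controls the variance terms.
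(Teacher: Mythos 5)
Your proposal is correct and follows essentially the same route as the paper: the corollary is obtained by applying Lemma~\ref{lem:median-entry} to every coordinate (the uniform-misalignment hypothesis makes $[d]=\cI_\eta$ with $\eta=O(s/d)\le 1/5$), summing using $\sum_{k}b_k^\star(1-b_k^\star)=O(d/2^b)$, and transferring from $\widecheck{\bb}^\star$ to $\widehat{\bp}^\star$ via the $1$-Lipschitz debiasing map, which is exactly the paper's Proposition~\ref{prop:debias}. You have merely spelled out the bookkeeping that the paper leaves implicit in its one-sentence justification, including the needed observation that $s\ll d$ is what makes the lemma's $\eta\le 1/5$ hypothesis hold.
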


\section{Lower Bounds}\label{sec:lower-bounds}
To complement our upper bounds, we now provide 
minimax lower bounds for estimating distributions under heterogeneity. Since our setting contains $T$ heterogeneous clusters of datapoints, 
our minimax error metric is slightly different from the one studied in \cite{Han2018DistributedSE,Barnes2019LowerBF,Han2021GeometricLB,Acharya2020InferenceUI}. Using the $\ell_2$ error as the loss, the lower bound metric is defined as 
\begin{equation}\label{eqn:vhoeqwfq}
   \inf_{\substack{(W^{t^\prime,[n]})_{t^\prime \in[T]}\\\widehat{\bp}^{t} }}\sup_{\substack{\bp^\star \in\cP_d\\ \{\bp^{t^\prime}:t^\prime\in[T]\}\subseteq\mathbb{B}_s(\bp^\star)}}
    \E\left[\left\|\widehat{\bp}^t-\bp^t\right\|_2^2\right],
\end{equation}
where the supremum is taken over all possible central distributions $\bp^\star\in \cP_d$ and individual distributions $\{\bp^t:t\in[T]\}$ in  $\mathbb{B}_s(\bp^\star)\triangleq \{\bp\in\cP_d:\|\bp-\bp^\star\|_0\leq s\}$, and the infimum is taken over all estimation methods $\hat\bp^t$ that use all heterogeneous messages $\{Y^{t^\prime,j}\triangleq W^{t^\prime,j}(X^{t^\prime,j}):j\in[n], t^\prime\in[T]\}$ encoded (possibly interactively) by any encoding functions
$\{W^{t^\prime,j}:j\in[n], t^\prime\in [T] \}$ with output in $[2^b]$, \eg, the random hashing maps.  The measure \eqref{eqn:vhoeqwfq} characterizes the best possible worst-case performance of estimating distributions under our model of heterogeneity.

Since the supremum is taken over all  distributions $\bp^\star,\bp^1,\dots,\bp^T$ in $\cP_d$ such that $\|\bp^t-\bp^\star\|_0\leq s$ for all $t\in[T]$, we  consider  two representative cases therein: 
First, in the \emph{homogeneous} case where $\bp^1=\cdots=\bp^T=\bp^\star \in\cP_d$,  the setting reduces to the single-cluster problem but with $nT$ datapoints, and with the goal of estimating $\bp^\star$, leading to the lower bound $\Omega(d/(2^bTn))$. 
Second, in the \emph{$s/2$-sparse} case where  $\|\bp^\star\|_0\leq s/2$ and $\|\bp^t\|_0\leq s/2$ for all $t\in[T]$, we have $\{\bp^t:t\in[T]\}\subseteq\mathbb{B}_s(\bp^\star)$. By constructing independent priors for $\{\bp^t:t\in[T]\}$ and $\bp^\star$, one can show that only datapoints generated by $\bp^t$ itself are informative for estimating $\bp^t$. In this case, we show the lower bound $\Omega(\max\{2^b,s\}/(2^bn))$. Combining the two cases, we find the following lower bound. The formal argument is provided in Appendix \ref{app:lower-bounds}.
\begin{theorem}\label{thm:collab-lower-bound}
For any---possibly interactive---estimation method, and for any $t\in[T]$, we have
\begin{equation}\label{eqn:collab-lower-bound}
\setlength{\abovedisplayskip}{3pt}\setlength{\belowdisplayskip}{0.5pt}
    \inf_{\substack{(W^{t^\prime,[n]})_{t^\prime \in[T]}\\ \widehat{\bp}^{t}\, }} 
    \sup_{\substack{\bp^\star \in\cP_d\\  \{\bp^{t^\prime}:t^\prime\in[T]\}\subseteq\mathbb{B}_s(\bp^\star)}}
    \E[\|\widehat{\bp}^t-\bp^t\|_2^2]=\Omega\left(\frac{\max\{2^b,s\}}{2^bn}+\frac{d}{2^bTn}\right).
\end{equation}
\end{theorem}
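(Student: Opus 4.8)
The plan is to prove the lower bound in \eqref{eqn:collab-lower-bound} by establishing each of the two terms separately and then combining them, since a maximum of two quantities is lower bounded (up to a constant) by taking the maximum of two separate lower bounds. Concretely, I will exhibit two sub-families of instances inside the constraint set $\{\bp^\star \in \cP_d,\ \{\bp^{t'}:t'\in[T]\}\subseteq \mathbb{B}_s(\bp^\star)\}$: a \emph{homogeneous} sub-family that forces the $d/(2^bTn)$ term, and an \emph{$s/2$-sparse} sub-family that forces the $\max\{2^b,s\}/(2^bn)$ term. In both cases the adversary is allowed to be interactive, so the argument must go through information-theoretic quantities (mutual information / divergence decouplings) that are robust to interactive encoding, rather than relying on any product structure of the messages.

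For the homogeneous term, I would set $\bp^1=\cdots=\bp^T=\bp^\star$ and reduce to the classical single-distribution estimation problem under $b$-bit communication constraints, now with $Tn$ i.i.d.\ samples distributed among users. The lower bound $\Omega(d/(2^bTn))$ for this problem is exactly the one from \cite{Han2018DistributedSE,Barnes2019LowerBF,Acharya2020InferenceUI} (and, for interactive protocols, \cite{Acharya2020GeneralLB}); I would invoke it essentially as a black box, after checking that our loss $\E[\|\widehat\bp^t-\bp^t\|_2^2]$ coincides with the $\ell_2^2$ risk there and that restricting the supremum to this homogeneous slice only decreases the minimax value. For the sparse term, I would take a prior in which $\bp^\star$ is supported on a first block of $s/2$ coordinates and each $\bp^t$ is supported on a \emph{disjoint} second block of $s/2$ coordinates, drawn independently across $t$; then $\{\bp^t\}\subseteq \mathbb{B}_s(\bp^\star)$ is automatic, and — crucially — the messages coming from clusters $t'\neq t$ are independent of $\bp^t$, so they carry zero information about it. This decouples the problem into estimating $\bp^t$ (an $s/2$-sparse distribution on a known $s/2$-element support) from its own $n$ samples under the $b$-bit constraint, for which the known minimax rate is $\Omega(\max\{2^b,s\}/(2^bn))$ — here I would use the sparse-distribution lower bounds of \cite{pmlr-v132-acharya21b} (or build a direct Assouad/Fano argument on the $s/2$-element support: when $2^b\le s$ the $2^b/(2^bn)=1/n$ term is the communication-limited rate, and when $2^b\ge s$ the $s/(2^bn)$ term is the unconstrained rate for an $s$-dimensional simplex, so a two-regime construction recovers $\max\{2^b,s\}/(2^bn)$).

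The main obstacle I anticipate is making the ``other clusters are uninformative'' step fully rigorous in the presence of \emph{interactive} encoding: even though $\bp^t\perp\!\!\!\perp(\bp^{t'})_{t'\neq t}$ and $\bp^t\perp\!\!\!\perp X^{t',j}$ for $t'\neq t$, an interactive protocol lets the encoding function $W^{t,j}$ depend on messages from other clusters, so one must verify that conditioning on the transcript of the other clusters leaves the conditional law of $X^{t,[n]}$ unchanged and that the per-message budget is still $b$ bits; the clean way is to note that, conditionally on the other clusters' transcript, the problem of estimating $\bp^t$ from $X^{t,[n]}$ under a (transcript-dependent but still $b$-bit) protocol is still an instance of the constrained single-distribution problem, so the sparse lower bound applies conditionally and then in expectation. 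A secondary, more routine obstacle is bookkeeping the constant $s/2$ versus $s$ and the floor/ceiling issues when $s$ is odd or when $s>d$ — these only affect constants, which are hidden by $\Omega(\cdot)$. Finally, combining the two bounds: since each holds over a sub-family of the full supremum set, the full minimax value is at least the larger of the two, hence $\Omega\!\left(\max\left\{\tfrac{\max\{2^b,s\}}{2^bn},\tfrac{d}{2^bTn}\right\}\right)=\Omega\!\left(\tfrac{\max\{2^b,s\}}{2^bn}+\tfrac{d}{2^bTn}\right)$, which is the claimed bound.
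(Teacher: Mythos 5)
Your proof follows the paper's argument essentially verbatim: the paper also restricts to a \emph{homogeneous} slice ($\bp^1=\cdots=\bp^T=\bp^\star$) and invokes \cite[Cor.\,7]{Barnes2019LowerBF}, \cite[Thm.\,2]{Han2018DistributedSE} for $\Omega(d/(2^bTn))$, then restricts to an \emph{$s/2$-sparse} slice with $\supp(\bp^t)\cap\bigl(\cup_{t'\neq t}\supp(\bp^{t'})\bigr)=\emptyset$ so that other clusters' data carries no information about $\bp^t$, invokes \cite[Thm.\,2]{chen2021breaking} for $\Omega(\max\{2^b,s\}/(2^bn))$, and takes the maximum. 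Two minor notes: the paper keeps $\bp^t$'s support disjoint from \emph{all other clusters'} supports (so the argument is by literally non-overlapping symbols) rather than placing all $\{\bp^{t'}\}$ on a common block and invoking independence, though both handle interactivity the same way; and in your parenthetical two-regime remark the cases are swapped --- $\max\{2^b,s\}/(2^bn)$ equals $s/(2^bn)$ (communication-limited) when $2^b\le s$ and equals $1/n$ (unconstrained) when $2^b\ge s$ --- which does not affect the main route since you rely on the citation rather than the direct construction.
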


By a similar argument but with an additional $(T+1)$-st cluster of $\tilde{n}$ users, we obtain a lower bound for adapting to a new cluster.  
\begin{theorem}\label{thm:transfer-lower-bound}
For any---possibly interactive---estimation method, and a new cluster $\cC^{T+1}$, we have
\begin{equation}\label{eqn:transfer-lower-bound}
\setlength{\abovedisplayskip}{3pt}\setlength{\belowdisplayskip}{1pt}
    \inf_{\substack{(W^{t^\prime,[n]})_{t^\prime \in[T]}\\W^{T+1,[\tilde{n}]},\widehat{\bp}^{T+1} }}  \sup_{\substack{\bp^\star \in\cP_d\\ \{\bp^{t^\prime}:t^\prime\in[T+1]\}\subseteq\mathbb{B}_s(\bp^\star)}}\E[\|\widehat{\bp}^{T+1}-\bp^{T+1}\|_2^2]=\Omega\left(\frac{\max\{2^b,s\}}{2^b\tilde{n}}+\frac{d}{2^bTn}\right).
\end{equation}
\end{theorem}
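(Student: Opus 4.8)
The plan is to reduce this to the two-case argument already sketched for Theorem~\ref{thm:collab-lower-bound}, applied to the $(T+1)$-st cluster, and then combine the resulting bounds with a standard ``max is at least the average'' step. Since the supremum in \eqref{eqn:transfer-lower-bound} is over all $\bp^\star, \bp^1, \dots, \bp^{T+1}$ with $\{\bp^{t'}\}_{t'\in[T+1]} \subseteq \mathbb{B}_s(\bp^\star)$, it suffices to lower bound the risk separately on two sub-families of instances and take the larger bound (up to a constant factor of $2$): a lower bound of $\Omega(A)$ on one sub-family and $\Omega(B)$ on another both imply $\Omega(A) $ and $ \Omega(B)$ for the full supremum, hence $\Omega(A+B)$.

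First I would establish the term $\Omega(\max\{2^b, s\}/(2^b\tilde{n}))$. Restrict to the \emph{$s/2$-sparse} sub-family: take $\bp^\star$ with $\|\bp^\star\|_0 \le s/2$ and each $\bp^{t'}$ (including $\bp^{T+1}$) with $\|\bp^{t'}\|_0 \le s/2$, so automatically $\|\bp^{t'} - \bp^\star\|_0 \le s$. Put independent priors on $\bp^{T+1}$ and on $(\bp^\star, \bp^1, \dots, \bp^T)$ so that, conditionally on $\bp^{T+1}$, the messages $\{Y^{t',j}: t' \in [T]\}$ carry no information about $\bp^{T+1}$; then only the $\tilde n$ messages from cluster $\cC^{T+1}$ are informative. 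This is exactly the single-cluster $b$-bit communication-constrained estimation problem with sample size $\tilde n$ over $s/2$-sparse distributions, whose minimax $\ell_2^2$ risk is $\Omega(\max\{2^b, s\}/(2^b\tilde n))$ by the lower bounds of \cite{Han2018DistributedSE, pmlr-v132-acharya21b} (or their adaptation in Appendix~\ref{app:lower-bounds}). The one point needing care is that the encoding $W^{T+1,[\tilde n]}$ may be interactive and depend on the messages from the other $T$ clusters; but since those messages are (under the product prior) independent of $\bp^{T+1}$, conditioning on them reduces the problem to a non-adaptive-in-distribution instance of the known interactive lower bound, so the bound survives.

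Next, the term $\Omega(d/(2^b T n))$: restrict to the \emph{homogeneous} sub-family $\bp^\star = \bp^1 = \cdots = \bp^{T+1} \in \cP_d$ (which trivially satisfies the sparsity constraint). Then estimating $\bp^{T+1} = \bp^\star$ is the single-distribution problem with the pooled data of the first $T$ clusters, i.e.\ $Tn$ datapoints each under a $b$-bit constraint (we may simply discard the $\tilde n$ messages of $\cC^{T+1}$, which only helps the estimator, so a lower bound ignoring them still holds); this gives $\Omega(d/(2^bTn))$ by \cite{Han2018DistributedSE, Barnes2019LowerBF}. Taking the maximum of the two bounds and using $\max\{A,B\} \ge (A+B)/2$ yields \eqref{eqn:transfer-lower-bound}. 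The main obstacle is the first term: one must carefully set up the two independent priors so that the interactive protocol on $\cC^{T+1}$ genuinely cannot exploit the other clusters' messages, and then invoke the communication-constrained sparse-distribution lower bound in a form that tolerates interactive encoding — this is the only nontrivial step, and it is handled by the same information-theoretic (e.g.\ Assouad/Fano with a data-processing step for the $b$-bit channel) machinery used in Appendix~\ref{app:lower-bounds} for Theorem~\ref{thm:collab-lower-bound}.
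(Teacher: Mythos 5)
Your overall strategy matches the paper's intended proof: the paper only says Theorem~\ref{thm:transfer-lower-bound} follows ``from the same analysis'' as Theorem~\ref{thm:collab-lower-bound}, and you correctly reproduce that two-sub-family argument (an $s/2$-sparse family in which the $(T+1)$-st cluster is decoupled from the rest, giving the $\tilde n$-dependent term; and a homogeneous family, giving the $Tn$-dependent term). Your treatment of the first term mirrors the paper's disjoint-support construction and is essentially sound, including your remark about why interactivity does not let the other clusters' messages leak information about $\bp^{T+1}$.

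However, your justification of the second term is inverted. You write that ``we may simply discard the $\tilde n$ messages of $\cC^{T+1}$, which only helps the estimator, so a lower bound ignoring them still holds.'' This is the wrong direction: giving the estimator \emph{more} data (the $\tilde n$ extra messages) enlarges the class of estimators, which can only make the infimum \emph{smaller}, so a lower bound proved for the $Tn$-message problem does \emph{not} transfer to the $(Tn+\tilde n)$-message problem. The correct step is to apply the homogeneous lower bound to the full pooled sample of $Tn + \tilde n$ $b$-bit messages, which gives $\Omega\big(d/(2^b(Tn+\tilde n))\big)$, and then invoke the paper's standing assumption for the transfer setting that $\tilde n \leq n$ (cf.\ Theorem~\ref{thm:transfer}), so that $Tn+\tilde n \leq 2Tn$ and the bound becomes $\Omega(d/(2^bTn))$. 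Without some restriction of this kind on $\tilde n$, the claimed second term would in fact be false: for $\tilde n \gg Tn$, using the new cluster's data alone already attains error $O(d/(2^b\tilde n)) \ll d/(2^bTn)$.
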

Theorem \ref{thm:collab-lower-bound} and \ref{thm:transfer-lower-bound}, combined with the upper bounds in Section \ref{sec:algorithm}, imply that our method is minimax optimal up to logarithmic terms. 
We provide similar lower bounds for the $\ell_1$ error in Appendix \ref{app:lower-bounds}.

\section{Experiments}\label{sec:experi}
We test SHIFT on synthetic data as well as on the Shakespeare dataset \cite{caldas2018leaf}.
As a baseline method, we use the estimator in \cite{Han2018DistributedSE} that is minimax optimal under homogeneity.
We apply the baseline method both locally and globally.
In the local case, the estimator $\widehat{\mathbf{p}}^t$ for each cluster is computed without datapoints from other clusters.
In the global case, we pool data from all clusters, 
and compute estimators $\widehat{\mathbf{p}} = \widehat{\mathbf{p}}^1 = \cdots = \widehat{\mathbf{p}}^T$. 
The performance measure for estimating $\widehat{\mathbf{p}}^t$, $t \in [T]$ is taken as $T^{-1}\sum_{t = 1}^T \Vert \mathbf{p}^t - \widehat{\mathbf{p}}^t \Vert_2^2.$

\subsection{Synthetic Data}\label{sec:synthetic}
We set the uniform distribution, $\mathbf{p}^\star = (1/d, \dots, {1}/{d})$ as the central distribution.
In Appendix \ref{app:experiments}, we also experiment on the truncated geometric distribution, and compare our method with the localization-refinement method \cite{Chen2021PointwiseBF}.
Among the $d$ entries of $\mathbf{p}^\star$, we draw $s$ entries uniformly at random 
and assign new values for them uniformly at random over $[0,1]$, with re-normalization to preserve their sum.
We repeat this procedure $T$ times to obtain sparsely perturbed distributions $\mathbf{p}^1, \dots, \mathbf{p}^T \in \mathcal{P}_d$.
Then, $n$ i.i.d. datapoints $X^{t, 1}, \dots, X^{t, n} \sim \Cat(\mathbf{p}^t)$ are generated for each cluster $t \in [T]$.
We set the dimension to $d = 300$ and run the simulation by varying $n, T, s$.
As we see from \eqref{eqn:ghoqenfq}, the error of our method depends on $s$ only when $2^b < s$.
For this reason, we let $b = 2$ in our experiments.

\begin{figure}[t]
  \centering
  \includegraphics{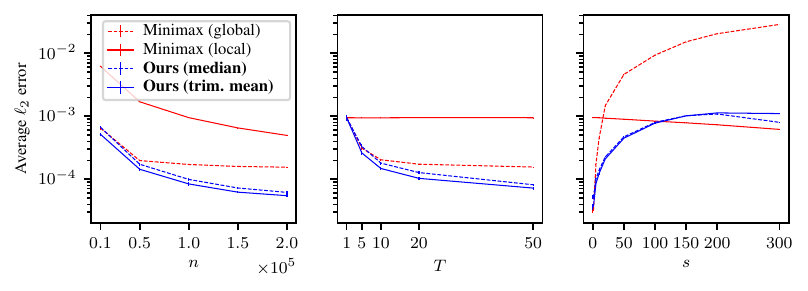}
  \caption{Average $\ell_2$ estimation error in synthetic experiment. (Left):  Fixing $s = 5, \,T = 30$ and varying $n$.
  (Middle): Fixing $s = 5,\,n = 100,000$ and varying $T$.
  (Right): Fixing  $T = 30, \,n = 100,000$ and varying $s$.
  The standard error bars are obtained from 10 independent runs.}
  \label{fig:synthetic}
\end{figure}

We run SHIFT with the entry-wise median and entry-wise trimmed mean as the robust estimate.
We set the threshold parameter $\alpha = \ln(n)$ and the trimming proportion $\omega = 0.1$.
In Appendix \ref{app:experiments}, we provide results for other choices of the hyperparameters $\alpha, \omega$ and discuss a heuristic for choosing $\alpha$.
Figure~\ref{fig:synthetic} illustrates that our method outperforms the baseline method for most choices of $n, T, s$.
Specifically, as Theorem \ref{thm:median-collab-final} reveals, the $\ell_2$ error of our method decreases as the number of clusters $T$ increases.
On the other hand, when the baseline methods are applied globally, without considering heterogeneity, 
they show a bias that does not disappear as the sample size $n$ or the number of clusters $T$ increases.
This shows that the fine-tuning step in SHIFT is crucial for the estimation of heterogeneous distributions.
Finally, the right panel of Figure~\ref{fig:synthetic} shows that our method is effective only when $s$ is small compared to the dimension $d$; which highlights the crucial role of sparse heterogeneity.
When $s$ is close to $d$, the distributions $\mathbf{p}^1, \dots \mathbf{p}^T$ could be considerably different without any meaningful central structure, making collaboration less useful than local estimation.

\subsection{Shakespeare Dataset}
The Shakespeare dataset was proposed as a benchmark for federated learning in \cite{caldas2018leaf}.
The dataset consists of dialogues of 1,129 speaking roles in Shakespeare's 35 different plays.
In our experiment, we study the distribution of $k$-grams ($k$-tuples of consecutive letters from the 26-letter English alphabet, see Chapter 3 of \cite{leskovec2020mining}) appearing in the dialogues.
We consider each play as a cluster $\mathcal{C}^t$ and estimate the distribution $\mathbf{p}^t \in \mathcal{P}_{d}$, $d=26^k$ of $k$-grams.
Since the ground-truth distribution $\mathbf{p}^t$ is unknown, we regard the empirical frequency as $\mathbf{p}^t$.

\begin{table}[h]
\centering
\begin{tabular}{lcccc} 
\toprule
\multicolumn{1}{c}{$k = 2$} & $b = 2$       & $b = 4$       & $b = 6$        & $b = 8$         \\ 
\hline
Minimax (local)             & $640 \pm 6.0$ & $142 \pm 1.2$ & $40 \pm 0.40$  & $14 \pm 0.13$   \\
Minimax (global)            & $33 \pm 1.8$  & $17 \pm 0.37$ & $14 \pm 0.081$ & $13 \pm 0.037$  \\
SHIFT (median)               & $47 \pm 2.4$  & $21 \pm 0.66$ & $14 \pm 0.17$  & $11 \pm 0.10$   \\
SHIFT (trimmed mean)         & $36 \pm 2.2$  & $19 \pm 0.51$ & $13 \pm 0.24$  & $10 \pm 0.062$  \\ 
\hline
\multicolumn{1}{c}{$k = 3$} & $b = 2$       & $b = 4$       & $b = 6$        & $b = 8$         \\ 
\hline
Minimax (local)             & $15000 \pm 21$ & $3000 \pm 5.9$ & $720 \pm 2.1$  & $180 \pm 0.39$   \\
Minimax (global)           & $4400 \pm 5.7$  & $100 \pm 1.4$ & $ 38 \pm 0.35$ & $23 \pm 0.090$  \\
SHIFT (median)               & $7300 \pm 9.6$  & $180 \pm 2.1$ & $ 53 \pm 1.0$  & $20 \pm 0.18$   \\
SHIFT (trimmed mean)         & $5100 \pm 6.3$  & $140 \pm 2.3$ & $ 43 \pm 0.66$  & $ 18 \pm 0.18$  \\
\bottomrule
\end{tabular}
\caption{Average $\ell_2$ error for estimating distributions of $k$-grams in the Shakespeare dataset. Numbers are scaled by $10^{-5}$.}
\label{tbl:res}
\end{table}

To verify the heterogeneity, we run the chi-squared goodness-of-fit test for each pair of distributions from distinct clusters $\mathbf{p}^u$ and $\mathbf{p}^v$.
Resulting p-values are essentially zero within machine precision, which suggests that the distributions of $k$-grams are strongly heterogeneous.
We also perform entry-wise tests comparing $[\mathbf{p}^u]_i$ and $[\mathbf{p}^v]_i$ for all $u \neq v \in [T], i \in [d]$.
In total, 25.8\% of the tests are rejected at the  5\% level of significance.
This is again consistent with heterogeneity.

We draw  $n = 20,000$ datapoints with replacement from each cluster  and test SHIFT with communicati-on budgets $b \in \{2, 4, 6, 8\}$.
We set the fine-tuning threshold $\alpha = \ln(n)$ and the trimming proportion $\omega = 0.1$, which we choose following the heuristic discussed in Appendix \ref{app:experiments}.
We repeat the experiment ten times by randomly drawing different datapoints, and report the average $\ell_2$ error of estimation in Table \ref{tbl:res}.
The standard deviations are small even over ten repetitions.
SHIFT shows competitive performance on the empirical dataset, even though we do not rigorously know if the sparse heterogeneity model \eqref{eqn:sparse-hetero} applies.

\section{Conclusion and Future Directions}
We formulate the problem of learning distributions under sparse heterogeneity and communication constraints. We propose the SHIFT method, which first learns a central distribution, and then fine-tunes the estimate to adapt to individual distributions. 
We provide both theoretical and experimental results to show  its sample-efficiency improvement compared to classical methods that target only homogeneous data. 
Many interesting directions remain to be explored, including investigating if there is a point-wise optimal method with rate depending on $\{\bp^t:t\in[T]\}$ and $\bp^\star$; and designing methods for other information
constraints, such as local differential privacy constraints.

\section*{Acknowledgements}
During this work, Xinmeng Huang was supported in part by the NSF TRIPODS 1934960, NSF DMS 2046874 (CAREER), NSF CAREER award CIF-1943064; Donghwan Lee was supported in part by ARO W911NF-20-1-0080, DCIST, Air Force Office of Scientific Research Young Investigator Program (AFOSR-YIP)
award
\#FA9550-20-1-0111.

\bibliography{reference}
\bibliographystyle{abbrv}

\section{Appendix}
\paragraph{Additional Notations. }
In the appendix, we use the following additional notations.
For an integer $d\geq 1$, and a vector $\bv\in\RR^d$, the support $\supp(\bv) = \{j\in[d]: \bv_j\neq 0\}$ denotes the indices of non-zero entries.
For an event $A$ on a probability space $(\Omega,B,P)$ (which is usually self-understood from the context), we denote by $I(A)$, $\mathds{1}\{A\}$, or $\mathds{1}(A)$ its indicator function, such that $I(A)(\omega)=1$ if $\omega\in A$, and zero otherwise.
We denote by $\Phi$ the cumulative distribution function of the standard normal random
variable.
For two scalars $a,b\in\R$, we write $a\wedge b = \min(a,b)$.

\section{Properties of Uniform Hashing}\label{app:uniform-hash}

\begin{algorithm}[h]
\caption{Encoding-Decoding via Uniform Hashing }\label{alg:hashed-estimate}
\begin{algorithmic}
\STATE \textbf{input:} cluster $\cC^t$ with $n\ge1$ users having data $X^{t,j}$, $j=1,\ldots,n$
\FOR{$j = 1,\dots,n$}
\STATE Generate a uniformly random hash function $h^{t,j}:[d]\rightarrow[2^b]$ using shared randomness 
\STATE Encode message $Y^{t,j}=h^{t,j}(X^{t,j})$ and send it to the server \hspace{25mm}$\triangleright$ Encoding
\ENDFOR
\FOR{$k = 1,\dots,d$}
\STATE Count $N_k^t(Y^{t,[n]})\,\leftarrow \, |\{j\in [n]:h^{t,j}(k)=Y^{t,j}\}|$ \hspace{43mm}$\triangleright$ Decoding
\STATE Estimate $\widecheck{b}^t_k\,\leftarrow \,N_k^t/n$
\ENDFOR
\STATE \textbf{output:}  $\widehat{\bb}^t$
\end{algorithmic}
\end{algorithm}

Recall that for all $t\in[T]$ and $k\in[d]$, $b_k^t=\frac{p_k^t(2^b-1)+1}{2^b}\in\left[\frac{1}{2^b},1\right]$.

\begin{proposition}[Properties of Hashed Estimates]\label{lem:uniform-hashing-basic}
For each $t\in[T]$, 
suppose $\widecheck{\bb}^t$ is computed in cluster $\cC^t$ as in Algorithm \ref{alg:hashed-estimate} with i.i.d datapoints $X^{t,j}\sim \Cat(\bp^t)$, $\forall\,j\in[n]$.
Then, it holds that
\begin{enumerate}
    \item $\widecheck{\bb}^1,\dots,\widecheck{\bb}^T\in[0,1]$ are independent;
    \item for any $t\in[T]$ and $k\in[d]$, $N_k^t\sim \Bin(n,b^t_k)$;
    \item  $\supp(\bp^t-\bp^\star)=\supp(\bb^t-\bb^\star)$ and $p_k^\star=1$ (or $0$) is equivalent to $b_k^\star=1$ (or $\frac{1}{2^b}$, respectively).
\end{enumerate}
\end{proposition}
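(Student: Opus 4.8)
The plan is to verify each of the three properties of the hashed estimates directly from the encoding-decoding construction in Algorithm~\ref{alg:hashed-estimate}, exploiting the independence of the hash functions across users and clusters.

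\textbf{Property 1 (independence and range).} The range claim is immediate: each $[\widecheck{\bb}^t]_k = N_k^t/n$ is a count divided by $n$, so it lies in $[0,1]$. For independence, I would note that the datapoints $\{X^{t,j}\}$ are independent across clusters $t$ (users in different clusters draw their data independently), and the hash functions $\{h^{t,j}\}$ are generated independently (via fresh shared randomness) for each $(t,j)$. Since $\widecheck{\bb}^t$ is a deterministic function of $\{(X^{t,j},h^{t,j})\}_{j\in[n]}$, and these collections are mutually independent across $t\in[T]$, the vectors $\widecheck{\bb}^1,\dots,\widecheck{\bb}^T$ are independent.

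\textbf{Property 2 (binomial law).} Fix $t\in[T]$ and $k\in[d]$. Write $N_k^t = \sum_{j\in[n]} \mathds{1}\{h^{t,j}(k)=Y^{t,j}\} = \sum_{j\in[n]} \mathds{1}\{h^{t,j}(k)=h^{t,j}(X^{t,j})\}$. The summands are independent across $j$ because the pairs $(X^{t,j},h^{t,j})$ are independent across $j$. For a single $j$: condition on $X^{t,j}=e_\ell$. If $\ell=k$ the indicator is $1$ with probability $1$; if $\ell\neq k$, then $h^{t,j}(k)$ and $h^{t,j}(\ell)$ are independent uniform draws on $[2^b]$, so they collide with probability $1/2^b$. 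Hence $\PP(h^{t,j}(k)=h^{t,j}(X^{t,j})) = p_k^t\cdot 1 + (1-p_k^t)\cdot \frac{1}{2^b} = \frac{p_k^t(2^b-1)+1}{2^b} = b_k^t$. Therefore $N_k^t$ is a sum of $n$ i.i.d.\ $\Ber(b_k^t)$ variables, i.e.\ $N_k^t\sim\Bin(n,b_k^t)$. This also re-derives $\E[\widecheck{\bb}^t]=\bb^t$.

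\textbf{Property 3 (support preservation and boundary cases).} Since $b_k^t = \frac{(2^b-1)p_k^t+1}{2^b}$ is an affine, strictly increasing bijection of $p_k^t$, we have $b_k^t - b_k^\star = \frac{2^b-1}{2^b}(p_k^t - p_k^\star)$, so $b_k^t\neq b_k^\star \iff p_k^t\neq p_k^\star$, giving $\supp(\bb^t-\bb^\star)=\supp(\bp^t-\bp^\star)$. The boundary equivalences follow by plugging in: $p_k^\star=1\iff b_k^\star=\frac{2^b-1+1}{2^b}=1$, and $p_k^\star=0\iff b_k^\star=\frac{1}{2^b}$. I expect the only mild subtlety — and the main point requiring care rather than difficulty — is the conditioning argument in Property~2: one must be careful that $h^{t,j}$ is drawn independently of $X^{t,j}$, so that conditioning on $X^{t,j}=e_\ell$ leaves $h^{t,j}$ a uniformly random function and the two hash values $h^{t,j}(k),h^{t,j}(\ell)$ remain independent uniform for $\ell\neq k$; everything else is a routine computation.
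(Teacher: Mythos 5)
Your proof is correct and follows essentially the same approach as the paper: the same decomposition into the event $\{k = X^{t,j}\}$ and the collision event for $k \neq X^{t,j}$ in Property~2, the same independence-of-clusters argument for Property~1, and the same affine-bijection observation for Property~3. You simply spell out the conditioning and independence steps in a bit more detail than the paper does.
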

\begin{proof}
Property 1 holds because $\widehat{\bb}^1,\dots,\widehat{\bb}^T$ are obtained by cluster-wise encoding-decoding of independent datapoints. 
To see property 2, 
we have for any $j\in[n]$ and $k\in[d]$ that
\begin{align*}
    \PP(h^{t,j}(k)=Y^{t,j})&=\PP(k=X^{t,j})+\PP(k\neq X^{t,j}\text{ and }h^{t,j}(k)=h^{t,j}(X^{t,j}))\\
    &=
    p^t_k+(1-p^t_k)\cdot \frac{1}{2^b}= b^t_k\in\left[\frac{1}{2^b},1\right].
\end{align*}
Thus, $I(h^{t,j}(k)=Y^{t,j})$ is a $\mathrm{Bernoulli}$ variable with success probability $b^t_k$.
Since each datapoint is encoded with an independent hash function,
$N_k^t$ has a binomial distribution with $n$ trials and parameter $b^t_k$. 
Property 3 directly follows from $\bb^t-\bb^\star=(\bp^t-\bp^\star)(2^b-1)/2^b$ and as $b>0$.
\end{proof}
\begin{proposition}[Property of Debiasing]\label{prop:debias}
For any $\by,\by^\star \in\RR^d$, let $\bx=\proj_{[0,1]}(\frac{2^b \by-1}{2^b-1})$ and $\bx^\star=\proj_{[0,1]}(\frac{2^b \by^\star-1}{2^b-1})$. 
Then it holds that  for $q=1,2$, $\E[\|\bx-\bx^\star\|_q^q]=O( \E[\|\by-\by^\star\|_q^q])$. 
In particular, we have  for $q=1,2$ and any $t\in[T]$, $\E[\|\widehat{\bp}^t-\bp^t\|_q^q]=O( \E[\|\widehat{\bb}^t-\bb^t\|_q^q])$, where $\widehat{\bp}^t=\proj_{[0,1]}(\frac{2^b \widehat{\bb}^t-1}{2^b-1})$ is the final per-cluster estimate obtained in Algorithm~\ref{alg:meta-method}.
\end{proposition}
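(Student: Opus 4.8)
The plan is to observe that the debiasing map acts coordinate-wise and is the composition of an affine map of bounded slope with the non-expansive projection onto $[0,1]$, so it is Lipschitz with a constant depending only on $b$. Concretely, for each $k\in[d]$ one has $[\bx]_k=\proj_{[0,1]}(\phi([\by]_k))$ where $\phi(u)=\frac{2^b u-1}{2^b-1}$ is affine with slope $\frac{2^b}{2^b-1}$; since $b\geq 1$ this slope lies in $(1,2]$, so $\phi$ is $2$-Lipschitz on $\RR$. The scalar projection $\proj_{[0,1]}:\RR\to[0,1]$ is non-expansive, being the projection onto a closed convex set, so the composition $u\mapsto\proj_{[0,1]}(\phi(u))$ is $2$-Lipschitz, whence $|[\bx]_k-[\bx^\star]_k|\leq 2\,|[\by]_k-[\by^\star]_k|$ for every $k\in[d]$ (more generally with constant $\frac{2^b}{2^b-1}$).

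Summing this coordinate-wise bound, which holds for each realization of $\by,\by^\star$, gives $\|\bx-\bx^\star\|_1\leq 2\|\by-\by^\star\|_1$; squaring it first and then summing gives $\|\bx-\bx^\star\|_2^2\leq 4\|\by-\by^\star\|_2^2$. Taking expectations on both sides then yields $\E[\|\bx-\bx^\star\|_q^q]=O(\E[\|\by-\by^\star\|_q^q])$ for $q=1,2$, with absolute constants $2$ and $4$. For the ``in particular'' claim I would apply this with $\by=\widehat{\bb}^t$ and $\by^\star=\bb^t$ and then check that $\bx^\star=\bp^t$: from $\bb^t=[(2^b-1)\bp^t+\mathbf{1}]/2^b$ one gets $\phi(b^t_k)=p^t_k$ for each $k$, and since $\bp^t\in\cP_d\subseteq[0,1]^d$ we have $\proj_{[0,1]}(\bp^t)=\bp^t$, so $\bx^\star=\proj_{[0,1]}(\frac{2^b\bb^t-1}{2^b-1})=\bp^t$, while $\bx=\widehat{\bp}^t$ by definition; the general bound then specializes to $\E[\|\widehat{\bp}^t-\bp^t\|_q^q]=O(\E[\|\widehat{\bb}^t-\bb^t\|_q^q])$.

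This argument has essentially no obstacle; the only points requiring care are that $\proj_{[0,1]}$ is understood entry-wise, so it genuinely commutes with the coordinate decomposition used above, and that the affine slope $\frac{2^b}{2^b-1}$ must be controlled uniformly in $b$ — it is at most $2$ for every integer $b\geq 1$, which is all the $O(\cdot)$ needs to absorb.
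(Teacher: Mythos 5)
Your argument is correct and follows the same route as the paper's proof: both rely on the non-expansiveness of $\proj_{[0,1]}$ composed with the affine map of slope $\frac{2^b}{2^b-1}\le 2$, applied coordinate-wise and then summed. The only addition you make is to explicitly check that $\proj_{[0,1]}(\frac{2^b\bb^t-1}{2^b-1})=\bp^t$ for the ``in particular'' claim, which the paper leaves implicit.
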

\begin{proof}
Using the inequality that $|\proj_{[0,1]}(x)-\proj_{[0,1]}(y)|\leq |x-y|$ for any $x,y\in\RR$, we have
\begin{align*}
    \E[\|\bx-\bx^\star\|_q^q]&=\sum_{k\in[d]}\E\left[\left|\proj_{[0,1]}\left(\frac{2^b y_k-1}{2^b-1}\right)-\proj_{[0,1]}\left(\frac{2^b y^\star_k-1}{2^b-1}\right)\right|^q\right]\\
    &\leq \sum_{k\in[d]}\E\left[\left|\frac{2^b( y_k-y^\star_k)}{2^b-1}\right|^q\right]= 
   \left(\frac{2^b}{2^b-1}\right)^q \E[\|\by-\by^\star\|_q^q]=O(\E[\|\by-\by^{\star}|_q^q]).
\end{align*}
In the last step, we used that $2^b/(2^b-1) \leq 2$ for all $b\geq1$, and thus the $O(\cdot)$ only depends on universal constants.
\end{proof}

\section{General Lemmas}
In this section, we state some general lemmas that will be used in the analysis.
\begin{lemma}[Berry-Esseen Theorem;  \cite{Vershynin2018HighDimensionalP}]\label{lem:berry}
Assume that $Z_{1}, \ldots, Z_{n}$ are $i . i . d .$ copies of a random variable $Z$ with mean $\mu$, variance $\sigma^{2}>0$, and such that $\mathbb{E}\left[|Z-\mu|^{3}\right]<\infty$. Then,
\[
\sup _{x \in \mathbb{R}}\left|\mathbb{P}\left\{\sqrt{n} \frac{\bar{Z}-\mu}{\sigma} \leq x\right\}-\Phi(x)\right| \leq0.4748 \frac{\gamma(Z)}{\sqrt{n}}.
\]
where $\bar{Z}=\frac{1}{n} \sum_{i=1}^{n} Z_{i}$ and $\gamma(Z)=\E[|Z-\mu|^3]/\sigma^3$ is the absolute skewness of $Z$.
\end{lemma}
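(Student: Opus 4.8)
\emph{Proof idea.} The plan is the classical Fourier-analytic argument built on Esseen's smoothing inequality, with constants tracked throughout. First I would standardize: replacing each $Z_i$ by $(Z_i-\mu)/\sigma$ leaves both the studentized sum $\sqrt{n}(\bar Z-\mu)/\sigma$ and the absolute skewness $\gamma(Z)$ unchanged, so we may assume $\mu=0$, $\sigma=1$ and write $\rho:=\E[|Z|^3]=\gamma(Z)$. Let $S_n:=n^{-1/2}\sum_{i=1}^n Z_i$ with CDF $F_n$; by Lyapunov's inequality $\rho\ge 1$, so the estimate is only non-vacuous once $\sqrt{n}$ exceeds a constant multiple of $\rho$, and the target reduces to $\sup_x|F_n(x)-\Phi(x)|\le 0.4748\,\rho/\sqrt{n}$.

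The first tool is Esseen's smoothing lemma: for every $T>0$,
\[
\sup_x|F_n(x)-\Phi(x)|\ \le\ \frac1\pi\int_{-T}^{T}\left|\frac{f_n(t)-e^{-t^2/2}}{t}\right|dt\ +\ \frac{24}{\pi\,T\sqrt{2\pi}},
\]
where $f_n(t)=\E[e^{itS_n}]=\varphi(t/\sqrt{n})^{n}$ and $\varphi(t)=\E[e^{itZ}]$; this follows by convolving $F_n-\Phi$ with a smooth mollifier and using $\|\Phi'\|_\infty=1/\sqrt{2\pi}$. The second ingredient is the Taylor bound $\varphi(u)=1-u^2/2+r(u)$ with $|r(u)|\le\rho|u|^3/6$ (using $\E Z=0$, $\E Z^2=1$), which for $|u|\le 1/\rho$ gives $|\varphi(u)|\le 1-u^2/3\le e^{-u^2/3}$. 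Restricting to $|t|\le T:=c\sqrt{n}/\rho$ with $c\le 1$ (so $|\varphi(t/\sqrt n)|$ is controlled), and combining this with the elementary inequality $|a^n-b^n|\le n\max(|a|,|b|)^{n-1}|a-b|$ for complex $a,b$ applied to $a=\varphi(t/\sqrt n)$ and $b=e^{-t^2/(2n)}$, I would derive a pointwise bound $|f_n(t)-e^{-t^2/2}|\le C\,(\rho|t|^3/\sqrt{n})\,e^{-t^2/3}$ on that range. Dividing by $|t|$ and integrating over $\RR$ bounds the integral term by $O(\rho/\sqrt n)$, while the choice of $T$ makes $24/(\pi T\sqrt{2\pi})$ also $O(\rho/\sqrt n)$; adding them yields the qualitative bound $\sup_x|F_n-\Phi|=O(\rho/\sqrt n)$. (An alternative route is Stein's method, but it typically produces a worse absolute constant.)

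The main obstacle — and the genuinely delicate part — is sharpening the absolute constant from the crude $C$ above to the stated $0.4748$. This requires the refined bookkeeping of Shevtsova/Tyurin: split $\int_{-T}^{T}$ into a near-origin region $|t|\le a\sqrt n$ and a tail region $a\sqrt n<|t|\le T$, use the sharper bound $|\varphi(u)|\le\exp(-u^2/2+\rho|u|^3/6)$ near the origin together with a separate small-modulus bound on $|f_n(t)|$ on the tail, and then optimize jointly over the free parameters $a,T$ and over how the total error budget is allocated between the integral term and the $T$-term; a short direct check handles the regime where $\sqrt n/\rho$ is small. For the purposes of this paper I would not reproduce this optimization and would simply invoke the cited statement of \cite{Vershynin2018HighDimensionalP}, which records exactly the constant $0.4748$.
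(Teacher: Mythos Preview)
Your sketch is correct: it is the standard Fourier-analytic route (Esseen's smoothing inequality plus a Taylor bound on the characteristic function), and you correctly flag that the constant $0.4748$ comes from the Tyurin--Shevtsova optimization rather than the crude argument. However, the paper itself offers no proof at all for this lemma --- it is stated as a classical result and attributed to the cited reference, to be used as a black box in the proof of Lemma~\ref{lem:median-funda}. So there is nothing to compare against: you have supplied strictly more than the paper does, and your final sentence (invoking the cited statement for the sharp constant) is in fact exactly what the paper does in its entirety.
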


\begin{lemma}[Hoeffding's Inequality; \cite{Vershynin2018HighDimensionalP}]\label{h}
Let $Z_{1}, \ldots, Z_{n}\in[l,r]$, $l<r$, be independent random variables and let $\bar{Z}=\frac{1}{n}\sum_{j=1}^nZ_j$. 
Then for any $\delta \geq 0$,
\[
\max\{\mathbb{P}(\bar{Z}-\mathbb{E}[\bar{Z}] > \delta ),\mathbb{P}(\bar{Z}-\mathbb{E}[\bar{Z}] <- \delta)\} \leq \exp \left(-\frac{2n \delta^{2}}{(r-l)^2}\right).
\]
\end{lemma}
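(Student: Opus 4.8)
The plan is to run the classical exponential-moment (Chernoff) argument. It suffices to prove the one-sided bound
\[
\mathbb{P}\big(\bar Z-\mathbb{E}[\bar Z]>\delta\big)\le \exp\!\left(-\frac{2n\delta^2}{(r-l)^2}\right),
\]
since applying this to the independent variables $-Z_1,\dots,-Z_n$ (which also lie in an interval of width $r-l$, with sample mean $-\bar Z$ and $-\bar Z-\mathbb{E}[-\bar Z]=-(\bar Z-\mathbb{E}[\bar Z])$) yields the matching bound for the lower tail, hence for the maximum of the two. When $\delta=0$ the asserted bound is $\le 1$ and holds trivially, so I may assume $\delta>0$.

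For the upper tail, fix $\lambda>0$ and apply Markov's inequality to the nonnegative random variable $e^{\lambda n(\bar Z-\mathbb{E}[\bar Z])}$:
\[
\mathbb{P}\big(\bar Z-\mathbb{E}[\bar Z]>\delta\big)\le e^{-\lambda n\delta}\,\mathbb{E}\Big[e^{\lambda\sum_{j=1}^n(Z_j-\mathbb{E}[Z_j])}\Big]
= e^{-\lambda n\delta}\prod_{j=1}^n \mathbb{E}\big[e^{\lambda(Z_j-\mathbb{E}[Z_j])}\big],
\]
where the last equality uses independence of $Z_1,\dots,Z_n$ (hence of $Z_j-\mathbb{E}[Z_j]$). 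The problem is thus reduced to bounding the moment generating function of each centered, bounded summand.

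The core step is the auxiliary \emph{Hoeffding lemma}: if $X\in[a,b]$ almost surely with $\mathbb{E}[X]=0$, then $\mathbb{E}[e^{\lambda X}]\le e^{\lambda^2(b-a)^2/8}$ for every $\lambda\in\mathbb{R}$. I would prove this by writing $\psi(\lambda):=\log\mathbb{E}[e^{\lambda X}]$ and computing $\psi(0)=0$ and $\psi'(\lambda)=\mathbb{E}[Xe^{\lambda X}]/\mathbb{E}[e^{\lambda X}]$, so $\psi'(0)=\mathbb{E}[X]=0$, while $\psi''(\lambda)$ equals the variance of $X$ under the exponentially tilted law $d\mathbb{Q}_\lambda\propto e^{\lambda x}\,d\mathbb{P}_X$; since $\mathbb{Q}_\lambda$ is supported in $[a,b]$ and any $[a,b]$-valued random variable has variance at most $((b-a)/2)^2$ (maximized by the two-point law on the endpoints), we get $\psi''(\lambda)\le (b-a)^2/4$ for all $\lambda$. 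A second-order Taylor expansion with Lagrange remainder then gives $\psi(\lambda)\le \lambda^2(b-a)^2/8$. Applying this with $X=Z_j-\mathbb{E}[Z_j]$, which lies in the interval $[l-\mathbb{E}[Z_j],\,r-\mathbb{E}[Z_j]]$ of width $r-l$, yields $\mathbb{E}[e^{\lambda(Z_j-\mathbb{E}[Z_j])}]\le e^{\lambda^2(r-l)^2/8}$ for each $j$.

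Substituting back,
\[
\mathbb{P}\big(\bar Z-\mathbb{E}[\bar Z]>\delta\big)\le \exp\!\left(-\lambda n\delta+\frac{n\lambda^2(r-l)^2}{8}\right),\qquad \text{for all }\lambda>0.
\]
Finally I would optimize the (convex quadratic) exponent over $\lambda$: it is minimized at $\lambda^\star=4\delta/(r-l)^2>0$, with minimum value $-2n\delta^2/(r-l)^2$, which proves the one-sided bound and hence the lemma. The only genuinely substantive ingredient is the auxiliary Hoeffding lemma — and within it, the elementary variance bound $\mathrm{Var}(X)\le (b-a)^2/4$ for $X\in[a,b]$, which can be argued either by the tilting/Popoviciu route above or, more directly, via $\mathrm{Var}(X)\le \mathbb{E}[(X-\tfrac{a+b}{2})^2]\le ((b-a)/2)^2$; everything else is routine bookkeeping.
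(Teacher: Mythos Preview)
Your proof is correct and follows the standard Chernoff--Hoeffding argument (Markov's inequality applied to the MGF, Hoeffding's lemma via the tilted-measure variance bound, and optimization over $\lambda$). The paper does not actually prove this lemma; it simply cites it as a known result from \cite{Vershynin2018HighDimensionalP}, so there is nothing further to compare.
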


\begin{lemma}[Bernstein's Inequality; \cite{Uspensky1937IntroductionTM}]\label{b}
Let $Z_{1}, \ldots, Z_{n}$ be $i . i . d .$ copies of a random variable $Z$ with $|Z-\E[Z]|\leq M$, $M>0$ and $\mathrm{Var}(Z_1)=\sigma^2>0$,  and let $\bar{Z}=\frac{1}{n}\sum_{j=1}^nZ_j$. 
Then for any $\delta \geq 0$,
\begin{equation}\label{eqn:jgowmfqww}
    \mathbb{P}(|\bar{Z}-\mathbb{E}[\bar{Z}]| > \delta) \leq2\exp \left(-\frac{n \delta^{2}}{2(\sigma^2+M\delta)}\right)\leq 2\exp \left(-\frac{n}{4}\min\left\{\frac{\delta^2}{\sigma^2},\frac{\delta}{M}\right\}\right).
\end{equation}
\end{lemma}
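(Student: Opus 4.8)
The plan is to apply the classical Chernoff (exponential moment) method. Center the variables by setting $W_j = Z_j - \E[Z_j]$, so that $W_1,\dots,W_n$ are i.i.d., mean zero, satisfy $|W_j|\le M$, and $\E[W_j^2]=\sigma^2$. The only structural input we need is a moment bound: since $|W_j|\le M$, for every integer $k\ge 2$,
\begin{equation*}
\E[W_j^k]\le \E[|W_j|^k]\le M^{k-2}\,\E[W_j^2]=\sigma^2 M^{k-2}\le \tfrac{k!}{2}\,\sigma^2 M^{k-2},
\end{equation*}
where the last step just uses $k!\ge 2$.

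First I would bound the moment generating function of $W_j$. For $0<\lambda<1/M$, expanding $e^{\lambda W_j}$ in its power series (interchange of sum and expectation being justified by $|W_j|\le M$) and using $\E[W_j]=0$ together with the moment bound gives
\begin{equation*}
\E[e^{\lambda W_j}] = 1+\sum_{k\ge 2}\frac{\lambda^k\E[W_j^k]}{k!}\le 1+\frac{\sigma^2\lambda^2}{2}\sum_{k\ge 2}(\lambda M)^{k-2} = 1+\frac{\sigma^2\lambda^2}{2(1-\lambda M)}\le \exp\!\Big(\frac{\sigma^2\lambda^2}{2(1-\lambda M)}\Big),
\end{equation*}
using the geometric series (valid since $\lambda M<1$) and $1+x\le e^x$. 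By independence, $\E[\exp(\lambda\sum_{j=1}^n W_j)]\le \exp\big(\tfrac{n\sigma^2\lambda^2}{2(1-\lambda M)}\big)$, and Markov's inequality applied to $\exp(\lambda\sum_j W_j)$ yields, for every $\lambda\in(0,1/M)$,
\begin{equation*}
\PP(\bar Z-\E[\bar Z]>\delta)=\PP\Big(\sum_{j=1}^n W_j>n\delta\Big)\le \exp\Big(\frac{n\sigma^2\lambda^2}{2(1-\lambda M)}-n\lambda\delta\Big).
\end{equation*}
The crucial choice is $\lambda=\delta/(\sigma^2+M\delta)$, which lies in $(0,1/M)$ when $\delta>0$; then $1-\lambda M=\sigma^2/(\sigma^2+M\delta)$, and a one-line computation shows the exponent collapses to exactly $-\,n\delta^2/\big(2(\sigma^2+M\delta)\big)$. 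Applying the identical argument to $-W_j$ controls the lower tail, and a union bound over the two events supplies the factor $2$, giving the first inequality of \eqref{eqn:jgowmfqww}. The boundary case $\delta=0$ is trivial, since the right-hand side is then $2\ge 1$.

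For the second inequality I would simply note $\sigma^2+M\delta\le 2\max\{\sigma^2,M\delta\}$, so that
\begin{equation*}
\frac{\delta^2}{2(\sigma^2+M\delta)}\ \ge\ \frac{\delta^2}{4\max\{\sigma^2,M\delta\}}\ =\ \frac14\min\Big\{\frac{\delta^2}{\sigma^2},\,\frac{\delta}{M}\Big\},
\end{equation*}
and substitute this into the exponent. The argument is essentially routine; the only mildly delicate points are obtaining a clean closed form for the MGF bound (handled by the $\lambda M<1$ geometric series, which is why the bound is stated with a bounded $W_j$ rather than a sub-exponential tail) and the ``magic'' substitution $\lambda=\delta/(\sigma^2+M\delta)$, which reproduces the exact Bernstein exponent without having to solve the messier exact optimization over $\lambda$.
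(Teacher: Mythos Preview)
Your proof is correct. The paper does not actually prove the first inequality in \eqref{eqn:jgowmfqww}---it simply cites it as Bernstein's inequality from \cite{Uspensky1937IntroductionTM}---and for the second inequality the paper gives only the one-line remark that it ``directly follows from $\frac{1}{a+b}\geq \frac{1}{2}\min\{\frac{1}{a},\frac{1}{b}\}$ for any $a,b>0$,'' which is exactly your observation $\sigma^2+M\delta\le 2\max\{\sigma^2,M\delta\}$ in reciprocal form; so your treatment of that step coincides with the paper's, and you have additionally supplied the standard Chernoff derivation the paper omits.
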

The second inequality above directly follows from $\frac{1}{a+b}\geq \frac{1}{2}\min\{\frac{1}{a},\frac{1}{b}\}$ for any $a,b>0$. Note that \eqref{eqn:jgowmfqww} also allows $\sigma=0$ because $ \mathbb{P}(|\bar{Z}-\mathbb{E}[\bar{Z}]| > \delta)=0$ and $\min\left\{{\delta^2}/{\sigma^2}\triangleq+\infty,{\delta}/{M}\right\}=\frac{\delta}{M}$. Therefore, we use this lemma for all $\sigma\geq0$ below.

\subsection{Analysis Framework}
For each $t\in[T]$, we denote  by 
\begin{equation}\label{kt}
    {\cK}^t_\alpha=\{k\in[d]: (\widecheck{b}^\star_k-\widecheck{b}^{t}_k)^2\leq {\alpha\widecheck{b}^t_k}/{n}\}
\end{equation} 
the set of entries in which the central estimate $[\widehat{\bb}^\star]_k$ is adapted to cluster $\cC^t$.
In this language, the final estimates can be expressed as $\widehat{b}^t_k=\widecheck{b}^\star_k\mathds{1}\{k\in {\cK}^t_\alpha\}+ \widecheck{b}^t_k\mathds{1}\{k\notin {\cK}^t_\alpha\} $ for $t\in [T]$.
Therefore, it holds that, for $q=1,2$,
\begin{equation}\label{eqn:adapt-two-terms-l1}
    \E[\|\widehat{\bb}^t-\bb^t\|_q^q]
    =\sum_{k\in[d]}\E[\mathds{1}\{k\in{\cK}^t_\alpha\}|\widecheck{b}^{\star}_k-b^t_k|^q]
    +\sum_{k\in[d]}\E[\mathds{1}\{k\notin{\cK}^t_\alpha\}|\widecheck{b}^{t}_k-b^t_k|^q].
\end{equation}

Let $\cI^t\triangleq\{k\in[d]:b_k^t= b_k^\star,\,\ie,\,p_k^t= p_k^\star\}$ be set of entries at which  the $t$-th cluster's distribution $\bp^t$ \emph{aligns} with the central distribution $\bp^\star$.
We next bound the two terms from \eqref{eqn:adapt-two-terms-l1} in 
Lemmas \ref{lem:term-I}  and \ref{lem:term-II}.
These do not need the independence of $\widecheck{\bb}^\star$ and $\widecheck{\bb}^t$, and hence do not require sample splitting despite the division between stages. 
\begin{lemma}\label{lem:term-I} 
For any $t\in[T]$, $\alpha \geq 1$ and $\eta \in(0,1]$, with ${\cK}^t_\alpha$  from \eqref{kt}, we have, for $q=1,2$ 
\begin{equation*}
    \sum_{k\in[d]}\E[\mathds{1}\{k\in{\cK}^t_\alpha\}|\widecheck{b}^\star_k-b^t_k|^q]=O\left(
    \E[\|\widecheck{b}^\star_{\cI_\eta\cap \cI^t }-{b}^\star_{\cI_\eta\cap \cI^t}\|_q^q]+\sum_{k\notin \cI_\eta\cap \cI^t }\left(\frac{\alpha b_k^t}{n}\right)^{q/2}\right).
\end{equation*}
\end{lemma}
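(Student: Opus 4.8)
The plan is to bound the first term in \eqref{eqn:adapt-two-terms-l1}, namely $\sum_{k\in[d]}\E[\mathds{1}\{k\in{\cK}^t_\alpha\}|\widecheck{b}^\star_k-b^t_k|^q]$, by splitting the index set $[d]$ into the ``good'' entries $k \in \cI_\eta \cap \cI^t$ and the ``bad'' entries $k \notin \cI_\eta \cap \cI^t$, and handling each group separately. For the bad entries, the key observation is that on the event $\{k\in{\cK}^t_\alpha\}$ we have, by definition of ${\cK}^t_\alpha$ in \eqref{kt}, $(\widecheck{b}^\star_k-\widecheck{b}^t_k)^2 \le \alpha\widecheck{b}^t_k/n$, so the central estimate $\widecheck{b}^\star_k$ is forced to be close to the local estimate $\widecheck{b}^t_k$. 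The idea is to write $\widecheck{b}^\star_k - b^t_k = (\widecheck{b}^\star_k - \widecheck{b}^t_k) + (\widecheck{b}^t_k - b^t_k)$, bound the first summand deterministically on the event by $\sqrt{\alpha\widecheck{b}^t_k/n}$, and bound the (expectation of the $q$-th power of the) second summand by a standard variance/concentration estimate for the binomial $N_k^t \sim \Bin(n, b_k^t)$, which gives $O((b_k^t/n)^{q/2})$ for $q=1,2$ (using $\E[\widecheck b_k^t]=b_k^t$ and $\mathrm{Var}(\widecheck b_k^t)=b_k^t(1-b_k^t)/n \le b_k^t/n$, plus $\alpha\ge 1$). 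A small technical point is that $\widecheck{b}^t_k$ appears inside the square root, so I would replace $\widecheck{b}^t_k$ by $b_k^t$ at the cost of a manageable lower-order term, using $\E[|\widecheck b_k^t - b_k^t|^{q/2}]$-type bounds or simply $\E[\sqrt{\widecheck b_k^t}] \le \sqrt{\E[\widecheck b_k^t]} = \sqrt{b_k^t}$ by Jensen for $q=1$, and $\E[\widecheck b_k^t] = b_k^t$ for $q=2$; combined this yields the $\sum_{k\notin\cI_\eta\cap\cI^t}(\alpha b_k^t/n)^{q/2}$ term.

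For the good entries $k \in \cI_\eta\cap\cI^t$, the point is even simpler: here $b_k^t = b_k^\star$ (since $k\in\cI^t$), so $|\widecheck b_k^\star - b_k^t|^q = |\widecheck b_k^\star - b_k^\star|^q$, and dropping the indicator $\mathds{1}\{k\in{\cK}^t_\alpha\} \le 1$ gives $\sum_{k\in\cI_\eta\cap\cI^t}\E[|\widecheck b_k^\star - b_k^\star|^q] = \E[\|\widecheck b^\star_{\cI_\eta\cap\cI^t} - b^\star_{\cI_\eta\cap\cI^t}\|_q^q]$, which is exactly the first term in the claimed bound. So the structure is: (i) partition $[d] = (\cI_\eta\cap\cI^t) \sqcup (\cI_\eta\cap\cI^t)^c$; (ii) on the first piece, discard the indicator and use $b_k^t = b_k^\star$; (iii) on the second piece, use the event ${\cK}^t_\alpha$ to get the deterministic $\sqrt{\alpha\widecheck b_k^t/n}$ bound plus the local binomial fluctuation bound, then convert $\widecheck b_k^t$ to $b_k^t$ in expectation.

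The main obstacle I anticipate is the careful handling of the $\widecheck b_k^t$ inside the threshold in step (iii) when $q=1$: one needs $\E[\mathds{1}\{k\in{\cK}^t_\alpha\}\,|\widecheck b_k^\star - b_k^t|] = O(\sqrt{\alpha b_k^t/n} + \sqrt{b_k^t/n})$, and the triangle-inequality split produces a term $\E[\sqrt{\widecheck b_k^t}]$ which is fine by Jensen, but one should double-check that no term of order $\sqrt{1/n}$ without the $\sqrt{b_k^t}$ factor sneaks in — it does not, because $b_k^t \ge 1/2^b$ is bounded below, but more directly because every error term is genuinely $O(\sqrt{b_k^t/n})$. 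For $q=2$ everything is a clean second-moment computation: $(\widecheck b_k^\star - b_k^t)^2 \le 2(\widecheck b_k^\star - \widecheck b_k^t)^2 + 2(\widecheck b_k^t - b_k^t)^2 \le 2\alpha\widecheck b_k^t/n + 2(\widecheck b_k^t - b_k^t)^2$ on the event, and taking expectations gives $2\alpha b_k^t/n + 2b_k^t(1-b_k^t)/n = O(\alpha b_k^t/n)$ since $\alpha\ge 1$. I would also note explicitly, as the paper emphasizes, that this argument nowhere uses independence of $\widecheck b^\star$ and $\widecheck b^t$ — the good-entry bound discards the $\widecheck b^\star$ randomness into the first term verbatim, and the bad-entry bound only uses the deterministic event structure plus the marginal law of $\widecheck b_k^t$ — so no sample splitting is needed.
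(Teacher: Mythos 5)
Your proposal is correct and follows essentially the same argument as the paper: partition $[d]$ into $\cI_\eta\cap\cI^t$ and its complement, discard the indicator on the good entries (where $b_k^t=b_k^\star$), and on the bad entries use the definition of $\cK_\alpha^t$ together with the triangle inequality, Jensen's inequality for $\E[\sqrt{\widecheck{b}_k^t}]\le\sqrt{b_k^t}$, and the binomial variance bound $\E[(\widecheck{b}_k^t-b_k^t)^2]\le b_k^t/n$. The observations you flag---that $\alpha\ge1$ absorbs the fluctuation term into the threshold term, and that no independence between $\widecheck{\bb}^\star$ and $\widecheck{\bb}^t$ is used---are exactly the ones the paper relies on.
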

\begin{proof}
We first take $q=1$.
For any $k\in [d]$, clearly
\begin{equation}\label{eqn:ohbjrowwe}
\E[\mathds{1}\{k\in{\cK}^t_\alpha\}|\widecheck{b}^\star_k-b^t_k|]\leq \E[|\widecheck{b}^\star_k-b^t_k|].    
\end{equation}
We use this bound for $k\in \cI_\eta \cap \cI^t$.
For $k\notin \cI_\eta \cap \cI^t$, we instead bound
\begin{align*}
    \E[\mathds{1}\{k\in{\cK}^t_\alpha\}|\widecheck{b}^\star_k-b^t_k|]\leq \E[\mathds{1}\{k\in{\cK}^t_\alpha\}|\widecheck{b}^\star_k-\widecheck{b}^t_k|]+\E[\mathds{1}\{k\in{\cK}^t_\alpha\}|\widecheck{b}^t_k-b^t_k|].
\end{align*}
If $k\in{\cK}^t_\alpha$, it holds by definition that $|\widecheck{b}^\star_k-\widecheck{b}^t_k|\leq  \sqrt{\alpha\widecheck{b}^t_k /n}$, thus we further have
\begin{align}
    \E[\mathds{1}\{k\in{\cK}^t_\alpha\}|\widecheck{b}^\star_k-b^t_k|]\leq& \E\left[\mathds{1}\{k\in{\cK}^t_\alpha\}\sqrt{\alpha\widecheck{b}^t_k /n}\right]+\E[\mathds{1}\{k\in{\cK}^t_\alpha\}|\widecheck{b}^t_k-b^t_k|]\nonumber\\
    \leq &\E\left[\sqrt{\alpha\widecheck{b}^t_k /n}\right]+\E[|\widecheck{b}^t_k-b^t_k|].\label{eqn:gvbjoejqqw}
\end{align}
By Jensen's inequality and since $n\widecheck{b}^t_k\sim\mathrm{Binom}(n,b^t_k)$, we have 
\begin{equation}\label{eqn:jgvoegw}
    \E\left[\sqrt{\widecheck{b}^t_k}\right]\leq \sqrt{\E[{\widecheck{b}^t_k}]}=\sqrt{b^t_k}
\end{equation}
and 
\begin{equation}\label{eqn:jgvoegw-222}
    \E[|\widecheck{b}^t_k-b^t_k|]\leq\sqrt{\E[(\widecheck{b}^t_k-b^t_k)^2]}=\sqrt{\frac{b^t_k(1-b^t_k)}{n}}\leq \sqrt{\frac{b^t_k}{n}}.
\end{equation}
Plugging \eqref{eqn:jgvoegw} and \eqref{eqn:jgvoegw-222} into \eqref{eqn:gvbjoejqqw}, we find
\begin{align}
    \E[\mathds{1}\{k\in{\cK}^t_\alpha\}|\widecheck{b}^\star_k-b^t_k|]
    \leq &(\sqrt{\alpha}+1)\sqrt{\frac{b^t_k}{n}}=O\left(\sqrt{\frac{\alpha b^t_k}{n}}\right).\label{eqn:hvieqocwdq}
\end{align}
Summing up \eqref{eqn:ohbjrowwe} over all entries in $\cI_\eta\cap \cI^t$ and summing up \eqref{eqn:hvieqocwdq} over all entries not in $\cI_\eta\cap \cI^t$ leads to the claim for $q=1$ in Lemma \ref{lem:term-I}.  
The case $q=2$ follows by a similar argument.
\end{proof}
\begin{lemma}\label{lem:term-II} For any $t\in[T]$, $\alpha \geq 1$ and $\eta \in(0,1]$, with ${\cK}^t_\alpha$  from \eqref{kt}, we have, for $q=1,2$ 
\begin{align*}
    &\sum_{k\in[d]}\E[\mathds{1}\{k\notin{\cK}^t_\alpha\}
    |\widecheck{b}^{t}_k-b^t_k|^q]\\
    =&O\left(
  \sum_{k\in {\cI_\eta}\cap \cI^t}
  \PP(k\notin {\cK}^t_\alpha)
  \wedge
  \left(\frac{ b_k^t(1-b_k^t)}{n}\right)^{q/2}
  +\sum_{k\notin \cI_\eta\cap \cI^t }
  \left(\frac{b_k^t}{n}\right)^{q/2}
  \right).
\end{align*}
\end{lemma}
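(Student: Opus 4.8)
The plan is to split the sum over $[d]$ into the entries $k\in\cI_\eta\cap\cI^t$ and the remaining entries $k\notin\cI_\eta\cap\cI^t$, and to bound each summand using only that $n\widecheck{b}^t_k\sim\Bin(n,b^t_k)$ (Proposition~\ref{lem:uniform-hashing-basic}) together with the trivial facts that $|\widecheck{b}^t_k-b^t_k|\le 1$ and $\mathds{1}\{\cdot\}\le1$. For the entries $k\notin\cI_\eta\cap\cI^t$, I would discard the indicator and invoke the binomial second moment $\E[(\widecheck{b}^t_k-b^t_k)^2]=b^t_k(1-b^t_k)/n$; for $q=1$, Jensen's inequality upgrades this to $\E[|\widecheck{b}^t_k-b^t_k|]\le\sqrt{b^t_k(1-b^t_k)/n}$. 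In either case the summand is $O((b^t_k(1-b^t_k)/n)^{q/2})\le O((b^t_k/n)^{q/2})$, which is the second sum in the claim.

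For the entries $k\in\cI_\eta\cap\cI^t$ I would keep the indicator and combine two competing bounds. On the one hand, since $|\widecheck{b}^t_k-b^t_k|^q\le1$,
\begin{equation*}
\E[\mathds{1}\{k\notin{\cK}^t_\alpha\}\,|\widecheck{b}^t_k-b^t_k|^q]\;\le\;\E[\mathds{1}\{k\notin{\cK}^t_\alpha\}]\;=\;\PP(k\notin{\cK}^t_\alpha);
\end{equation*}
on the other hand, dropping the indicator and repeating the moment computation above gives $\E[\mathds{1}\{k\notin{\cK}^t_\alpha\}\,|\widecheck{b}^t_k-b^t_k|^q]=O((b^t_k(1-b^t_k)/n)^{q/2})$. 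Taking the smaller of the two yields the term $\PP(k\notin{\cK}^t_\alpha)\wedge(b^t_k(1-b^t_k)/n)^{q/2}$ up to a universal constant, which is the first sum in the claim. Adding the two groups of entries and absorbing the universal constants finishes the proof.

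The statement is essentially an accounting step, so I do not anticipate a genuine obstacle; the points to watch are that the $q=1$ case must go through Jensen's inequality rather than the variance directly, and that one should deliberately \emph{retain} the $(b^t_k(1-b^t_k)/n)^{q/2}$ alternative on the entries in $\cI_\eta\cap\cI^t$. The latter matters because $\PP(k\notin{\cK}^t_\alpha)$ will only be exploited (in the subsequent analysis, via Bernstein's inequality, where it becomes exponentially small) for those entries where it beats the variance term; quantifying $\PP(k\notin{\cK}^t_\alpha)$ is not needed for this lemma.
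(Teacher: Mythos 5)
Your proof is correct and follows essentially the same route as the paper: split the index set as $\cI_\eta\cap\cI^t$ versus its complement, bound the first group by the minimum of $\PP(k\notin\cK^t_\alpha)$ (via $|\widecheck{b}^t_k-b^t_k|\le 1$) and the binomial moment bound, and bound the second group by dropping the indicator and using the moment bound with Jensen's inequality for $q=1$. Your remark about deliberately retaining the variance alternative on $\cI_\eta\cap\cI^t$ matches the paper's intent as well.
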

\begin{proof}
For $q=1$, note that 
\begin{equation}\label{eqn:vjoweqnfqw}
    \E[\mathds{1}\{k\notin{\cK}^t_\alpha\}|\widecheck{b}^{t}_k-b^t_k|]\leq  \PP(k\notin {\cK}^t_\alpha)
\end{equation}
and 
\begin{equation}\label{eqn:gvjoewnfoq}
    \E[\mathds{1}\{k\notin{\cK}^t_\alpha\}|\widecheck{b}^{t}_k-b^t_k|]\leq\E[|\widecheck{b}^{t}_k-b^t_k|].
\end{equation}
Combining \eqref{eqn:vjoweqnfqw}, \eqref{eqn:gvjoewnfoq} with the first inequality in \eqref{eqn:jgvoegw-222}  for $k\in {\cI_\eta}\cap \cI^t$, and 
using the last inequality in \eqref{eqn:jgvoegw-222}  for $k\notin {\cI_\eta}\cap \cI^t$
leads to the claim with $q=1$. 
We can similarly obtain the bound with $q=2$.
\end{proof}

Combing Lemma \ref{lem:term-I} and \ref{lem:term-II} with \eqref{eqn:adapt-two-terms-l1}, we find the following proposition:
\begin{proposition}\label{prop:main-analysis}
For any $\alpha\geq 1$, and $q=1,2$, it holds that 
\begin{align*}
    &\E[\|\widehat{\bb}^t-\bb^t\|_q^q]
    =O\left(\sum_{k\notin \cI_\eta\cap \cI^t }
    \left(\frac{\alpha b_k^t}{n}\right)^{q/2}\right.\\
    &\left.+\sum_{k\in {\cI_\eta}\cap \cI^t}
    \PP(k\notin {\cK}^t_\alpha)
    \wedge
    \left(\frac{ b_k^t(1-b_k^t)}{n}\right)^{q/2}
    +\E[\|\widecheck{b}^\star_{\cI_\eta\cap \cI^t }-{b}^\star_{\cI_\eta\cap \cI^t}\|_q^q]\right).
\end{align*}
\end{proposition}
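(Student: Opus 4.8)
The plan is to combine the exact error decomposition from \eqref{eqn:adapt-two-terms-l1} with the two bounds already established in Lemmas~\ref{lem:term-I} and~\ref{lem:term-II}, so essentially no new analytic work is required. First I would fix an arbitrary $\eta\in(0,1]$ (implicit in the statement, since $\cI_\eta$ appears on the right-hand side) and recall that, for $q=1,2$,
\[
\E[\|\widehat{\bb}^t-\bb^t\|_q^q]
=\sum_{k\in[d]}\E[\mathds{1}\{k\in{\cK}^t_\alpha\}|\widecheck{b}^{\star}_k-b^t_k|^q]
+\sum_{k\in[d]}\E[\mathds{1}\{k\notin{\cK}^t_\alpha\}|\widecheck{b}^{t}_k-b^t_k|^q].
\]
This identity holds for every realization of the (possibly dependent) estimators $\widecheck{\bb}^\star$ and $\widecheck{\bb}^t$, so no sample splitting is invoked — the entire content of the proposition is to bound the two sums on the right and add them.

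Next I would bound the first sum by Lemma~\ref{lem:term-I}, which yields $O\big(\E[\|\widecheck{b}^\star_{\cI_\eta\cap \cI^t}-{b}^\star_{\cI_\eta\cap \cI^t}\|_q^q]+\sum_{k\notin \cI_\eta\cap \cI^t}(\alpha b_k^t/n)^{q/2}\big)$, and the second sum by Lemma~\ref{lem:term-II}, which yields $O\big(\sum_{k\in{\cI_\eta}\cap \cI^t}\PP(k\notin{\cK}^t_\alpha)\wedge (b_k^t(1-b_k^t)/n)^{q/2}+\sum_{k\notin \cI_\eta\cap \cI^t}(b_k^t/n)^{q/2}\big)$, both with the same $\eta$. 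Summing the two bounds reproduces, term for term, the right-hand side of the proposition, plus the single extra summand $\sum_{k\notin \cI_\eta\cap \cI^t}(b_k^t/n)^{q/2}$ coming from Lemma~\ref{lem:term-II}.

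Finally I would absorb that extra summand: since $\alpha\ge 1$ we have $(b_k^t/n)^{q/2}\le(\alpha b_k^t/n)^{q/2}$ entrywise, hence $\sum_{k\notin\cI_\eta\cap\cI^t}(b_k^t/n)^{q/2}\le\sum_{k\notin\cI_\eta\cap\cI^t}(\alpha b_k^t/n)^{q/2}$, which is already one of the surviving terms, so it disappears into the $O(\cdot)$ without changing the rate; collecting terms gives exactly the claimed bound. The ``hard part'' here is purely bookkeeping — keeping the index sets $[d]$, $\cI_\eta\cap\cI^t$, and its complement aligned across the two lemmas, handling $q=1$ and $q=2$ uniformly, and checking this last absorption — since all the genuine probability-theoretic content (the median/Bernstein concentration underlying the central-estimate term) is deferred to Lemma~\ref{lem:median-entry}, and the control of $\PP(k\notin{\cK}^t_\alpha)$ is postponed to the downstream application of this proposition in the proofs of Theorems~\ref{thm:median-collab-final} and~\ref{thm:transfer}.
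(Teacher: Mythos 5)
Your proposal is correct and follows exactly the paper's route: decompose via \eqref{eqn:adapt-two-terms-l1}, apply Lemmas~\ref{lem:term-I} and~\ref{lem:term-II}, and use $\alpha\ge 1$ to absorb the extra $\sum_{k\notin\cI_\eta\cap\cI^t}(b_k^t/n)^{q/2}$ term into $\sum_{k\notin\cI_\eta\cap\cI^t}(\alpha b_k^t/n)^{q/2}$. The paper gives this derivation in a single sentence, so your write-up simply spells out the same bookkeeping explicitly.
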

Proposition \ref{prop:main-analysis}  does not rely on how $\widecheck{\bb}^\star$ is obtained.
The next part is devoted to proving that when $\widecheck{\bb}^\star$ is obtained via a certain robust estimate,
the  bounds in Proposition \ref{prop:main-analysis}
are small for certain values of $\alpha$ and $\eta$.


\section{Median-Based Method}\label{app:median}
In this section, we provide the proofs for the median-based SHIFT method. 
We first re-state the detailed version of some key results that apply to both the $\ell_2$ and $\ell_1$ errors. 

Below, we use $\sigma_k= \sqrt{b_k^\star(1-b_k^\star)}$ to denote the standard deviation of the $\mathrm{Bernoulli}$ variable with success probability $b_k^\star= p_k^\star+(1-p_k^\star)/2^b$. 
We also recall that $\cB_k$ is defined as the set of clusters with distributions mismatched with the central distribution at the $k$-th entry, \ie, $\cB_k=\{t\in[T]:p^t_k\neq p^\star_k\}$, and $\cI_\eta $ is defined as the $\eta$-well-aligned entries, \ie, $\cI_\eta=\{k\in[d]:|\cB_k|<\eta T\}$.
 
\begin{lemma}[Detailed statement of Lemma \ref{lem:median-entry}]\label{lem:median-entry-combo}
Suppose $\widecheck{\bb}^\star =\med\big(\{\widecheck{\bb}^t\}_{t\in[T]}\big)$. Then for any  $0< \eta \leq  \frac{1}{5}$, $k\in\cI_\eta$, and $q=1,2$, it holds that 
\begin{align*}
    \E[|\widecheck{b}^\star_k-{b}^\star_k|^q]
    =\tilde{O}\left( 
    \left(\frac{|\cB_k|\sigma_k}{T\sqrt{n}}\right)^q
    +\left(\frac{\sigma_k}{\sqrt{Tn}}\right)^q+
    \left(\frac{1}{n}\right)^q\right).
\end{align*}
\end{lemma}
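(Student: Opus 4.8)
\textbf{Proof plan for Lemma \ref{lem:median-entry-combo}.}
The plan is to fix an entry $k\in\cI_\eta$ and analyze the concentration of the coordinate-wise median $\widecheck{b}^\star_k=\med(\{\widecheck{b}^t_k\}_{t\in[T]})$ around $b^\star_k$. The starting point is the standard quantile trick for medians: for any threshold $u>0$, the event $\{\widecheck{b}^\star_k > b^\star_k + u\}$ implies that at least $T/2$ of the individual estimates $\widecheck{b}^t_k$ exceed $b^\star_k+u$; I would split the clusters into the ``aligned'' ones $t\notin\cB_k$ (for which $b^t_k=b^\star_k$, so $\widecheck{b}^t_k$ has mean $b^\star_k$) and the ``misaligned'' ones $t\in\cB_k$ (at most $|\cB_k|<\eta T\le T/5$ of them, which I pessimistically assume all exceed the threshold). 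Hence the event forces at least $T/2-|\cB_k|\ge (1/2-\eta)T\ge (3/10)T$ of the \emph{aligned} estimates to exceed $b^\star_k+u$. Since the aligned $\widecheck{b}^t_k$ are i.i.d.\ with $n\widecheck{b}^t_k\sim\Bin(n,b^\star_k)$, this is a large-deviation event for a binomial proportion among $|\cI_\eta^c|$-free clusters, and I can bound its probability by a Chernoff/Bernstein argument on the number of clusters that deviate.

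The key quantitative step is to get a two-regime tail. Let $m=|T\setminus\cB_k|\ge (4/5)T$ be the number of aligned clusters. For a single aligned cluster, Bernstein's inequality (Lemma \ref{b}) with variance $\sigma_k^2/n$ and bound $M\le 1$ gives $\PP(\widecheck{b}^t_k-b^\star_k>u)\le 2\exp(-\tfrac{n}{4}\min\{u^2/\sigma_k^2,\,u\})=:p_u$. The median event requires that the empirical count of such clusters exceeds $(3/10)T$, while its mean is only $m\,p_u$. So as soon as $u$ is large enough that $p_u\le c$ for a small absolute constant $c$ (independent of the $|\cB_k|$ slack, using $\eta\le 1/5$), a second Chernoff bound on $\Bin(m,p_u)$ gives $\PP(\widecheck{b}^\star_k>b^\star_k+u)\le \exp(-c'Tp_u)$ for suitable constants; symmetrically for the lower tail. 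There is, however, a more refined bound available that captures the $|\cB_k|\sigma_k/(T\sqrt n)$ term: instead of only using the crude ``$|\cB_k|$ clusters are adversarial'' bound, one analyzes the empirical $(1/2\pm|\cB_k|/T)$-quantiles of the aligned estimates via the Berry--Esseen theorem (Lemma \ref{lem:berry}). Concretely, the median of all $T$ estimates lies between the $(1/2-|\cB_k|/T)$- and $(1/2+|\cB_k|/T)$-sample-quantiles of the $m$ aligned estimates; each aligned estimate has CDF close (within $O(\gamma_k/\sqrt n)$, $\gamma_k$ the absolute skewness) to that of a $\mathcal{N}(b^\star_k,\sigma_k^2/n)$; and the $(1/2\pm|\cB_k|/T)$-population-quantile of that normal is $b^\star_k\pm \Phi^{-1}(\cdot)\,\sigma_k/\sqrt n = b^\star_k \pm O\big(\tfrac{|\cB_k|}{T}\big)\tfrac{\sigma_k}{\sqrt n}$, plus the DKW-type sampling fluctuation of order $\sigma_k/\sqrt{Tn}$ and the Berry--Esseen correction of order $\gamma_k/(\sqrt n \cdot \text{density})=O(1/n)$ after dividing by the density $\asymp \sqrt n/\sigma_k$ near the quantile. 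This directly yields a high-probability bound $|\widecheck{b}^\star_k-b^\star_k|=\tilde O\big(\tfrac{|\cB_k|\sigma_k}{T\sqrt n}+\tfrac{\sigma_k}{\sqrt{Tn}}+\tfrac1n\big)$.

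From a high-probability bound to the moment bound $\E[|\widecheck{b}^\star_k-b^\star_k|^q]$ for $q=1,2$: I would integrate the tail. For deviations $u$ beyond the stated $\tilde O(\cdot)$ scale the tail decays like $\exp(-c' T p_u)$, and one checks $\int_0^\infty q u^{q-1}\PP(|\widecheck b^\star_k-b^\star_k|>u)\,du$ is dominated, up to logarithmic factors in $T,n$, by the contribution from $u\lesssim \tfrac{|\cB_k|\sigma_k}{T\sqrt n}+\tfrac{\sigma_k}{\sqrt{Tn}}+\tfrac1n$ (the $\log$ factors absorbed into $\tilde O$ arise exactly from the threshold $u_0$ at which $Tp_{u_0}\asymp\log(Tn)$); the deterministic bound $|\widecheck b^\star_k-b^\star_k|\le 1$ controls the far tail. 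The main obstacle, and the place I expect the real work to be, is the Berry--Esseen / quantile argument: one must carefully handle the possibly small standard deviation $\sigma_k$ (when $b^\star_k$ is close to $1/2^b$ or $1$, so that Hoeffding would be lossy and the skewness $\gamma_k$ is large), ensure the normal density near the relevant quantile is bounded below so that perturbations in the CDF translate to controlled perturbations in the quantile, and verify that $n=\Omega(2^b\ln n)$ (hence $n b^\star_k=\Omega(\ln n)$) is enough for the Berry--Esseen error term to be manageable; getting the clean additive $1/n$ term rather than something worse requires this care. The rest — the union bound over the two tails, the reduction from the full-sample median to aligned-cluster quantiles using $|\cB_k|<\eta T\le T/5$, and the tail integration — is routine.
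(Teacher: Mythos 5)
Your proposal is correct and tracks the paper's own proof quite closely: the core of the argument — sandwiching the full-sample median between the $(1/2\pm|\cB_k|/T)$-sample-quantiles of the aligned estimates, invoking Berry--Esseen to transfer to Gaussian quantiles (Lemmas \ref{lem:median-funda} and \ref{lem:median-funda-2} in the paper), and handling the small-$\sigma_k$ regime separately by a direct Bernstein/max-over-aligned bound — is exactly what the paper does. The only cosmetic differences are that the paper also makes an explicit case split for $T=O(\ln n)$ (where the crude max-over-aligned bound already suffices, since the union bound cost is logarithmic), and converts high-probability bounds to moment bounds via the elementary inequality $\E[X]\le\delta+\PP(X\ge\delta)$ for $X\in[0,1]$ rather than a full tail integration; neither difference is substantive.
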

Let us define, for $q=1,2$,
$$
E(q) \triangleq
E(q;n,d,b,T) :=
\frac{d}{(2^bTn)^{q/2}}
    +\frac{d}{n^{q}}.
$$
\begin{proposition}\label{prop:median-global-combo}
Suppose $\widecheck{\bb}^\star =\med\big(\{\widecheck{\bb}^t\}_{t\in[T]}\big)$. Then for any $0< \eta \leq  \frac{1}{5}$ and $q=1,2$, it holds that 
\begin{align*}
    \E[\|\widecheck{b}^\star_{\cI_\eta}-{b}^\star_{\cI_\eta}\|_q^q]
    =\tilde{O}\left(\sum_{k\in\cI_\eta} \left(\frac{|\cB_k|\sigma_k}{T\sqrt{n}}\right)^q
    +E(q)\right).
\end{align*}
\end{proposition}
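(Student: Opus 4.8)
The plan is to obtain the bound coordinate-by-coordinate from Lemma~\ref{lem:median-entry-combo} and then sum over the well-aligned entries. Since the $\ell_q^q$ error decomposes additively, I would first write
\[
\E[\|\widecheck{b}^\star_{\cI_\eta}-{b}^\star_{\cI_\eta}\|_q^q] = \sum_{k\in\cI_\eta}\E[|\widecheck{b}^\star_k-{b}^\star_k|^q],
\]
and, since $\eta \le 1/5$, apply Lemma~\ref{lem:median-entry-combo} to each summand, bounding it by $\tilde{O}\big((|\cB_k|\sigma_k/(T\sqrt n))^q + (\sigma_k/\sqrt{Tn})^q + n^{-q}\big)$. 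Splitting the sum into these three pieces, the first piece is exactly the term $\sum_{k\in\cI_\eta}(|\cB_k|\sigma_k/(T\sqrt n))^q$ appearing in the claim, so I would leave it untouched; the task then reduces to showing the other two pieces are $O(E(q))$.

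The piece $\sum_{k\in\cI_\eta} n^{-q}$ is trivially at most $d\,n^{-q}$, which is one of the two terms of $E(q)$. For the piece $(Tn)^{-q/2}\sum_{k\in\cI_\eta}\sigma_k^q$, the key step is to control $\sum_{k\in\cI_\eta}\sigma_k^q$ using $\sigma_k^2 = b_k^\star(1-b_k^\star) \le b_k^\star = p_k^\star + (1-p_k^\star)2^{-b} \le p_k^\star + 2^{-b}$. For $q=2$ this immediately gives $\sum_{k\in\cI_\eta}\sigma_k^2 \le \sum_{k\in[d]}(p_k^\star + 2^{-b}) = 1 + d\,2^{-b} = O(d\,2^{-b})$, invoking $2^b \le d$ from the problem setup; for $q=1$ I would pass through Cauchy--Schwarz, $\sum_{k\in\cI_\eta}\sigma_k \le \sqrt{d\sum_{k\in\cI_\eta}\sigma_k^2} = O(d\,2^{-b/2})$. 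Either way $\sum_{k\in\cI_\eta}\sigma_k^q = O(d\,2^{-bq/2})$, so this piece is $O(d/(2^bTn)^{q/2})$, the remaining term of $E(q)$. Adding the three bounds yields the proposition.

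I do not expect a genuine obstacle here: the statement is just a summation over coordinates of the already-established per-entry Lemma~\ref{lem:median-entry-combo}, and the only mildly delicate points are handling the $q=1$ case through Cauchy--Schwarz and using $2^b \le d$ to write $1 + d\,2^{-b} = O(d\,2^{-b})$. All the real difficulty---the concentration of empirical quantiles under heterogeneity and the variance-sensitive Bernstein bound---is already packaged inside Lemma~\ref{lem:median-entry-combo}, which I take as given.
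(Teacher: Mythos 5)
Your proof is correct and follows exactly the route the paper indicates: it explicitly omits this proof, noting only that Proposition~\ref{prop:median-global-combo} is ``a direct corollary of Lemma~\ref{lem:median-entry-combo} by using $\sum_{k\in[d]}\sigma_k^q=O(d/2^{bq/2})$ for $q=1,2$.'' You have simply filled in the justification of that bound (the $q=2$ case from $\sigma_k^2\le b_k^\star$ and $\sum_k b_k^\star = O(d/2^b)$ given $2^b\le d$, and the $q=1$ case via Cauchy--Schwarz), which matches the intended argument.
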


 We omit the proofs  of Proposition \ref{prop:median-global-combo} and Theorem \ref{thm:transfer-combo} (below),
 as Proposition \ref{prop:median-global-combo} is a direct corollary of Lemma \ref{lem:median-entry-combo} by using $\sum_{k\in[d]}\sigma_k^q=O(d/2^{bq/2})$ for $q=1,2$,
and Theorem \ref{thm:transfer-combo} follows from the same analysis as Theorem \ref{thm:median-collab-final-combo}.

\begin{theorem}[Detailed statement of Theorem \ref{thm:median-collab-final}]\label{thm:median-collab-final-combo}
Suppose  $n\geq  2^{b+6} \ln(n)$ and $\alpha \geq 2(8+\sqrt{8\ln(n)})^2$ with $\alpha =O(\ln(n))$. Then for the median-based SHIFT method, for any $0<\eta \leq \frac{1}{5}$, $q=1,2$, and $t\in[T]$,
\begin{equation*}
 \E\left[\|\widehat{\bp}^t-\bp^t\|_q^q\right]=
 \tilde{O}\left(\sum_{k\notin \cI_\eta\cap \cI^t}\left(\frac{ b_k^t}{n}\right)^{q/2}
 +\sum_{k\in\cI_\eta \cap \cI^t} 
 \left(\frac{|\cB_k|^2b_k^\star}{T^2n}\right)^{q/2}
    +E(q)\right).
\end{equation*}
Furthermore, by setting $\eta = \Theta(1)$ with $\eta\leq \frac{1}{5}$, we have 
\begin{equation*}
 \E\left[\|\widecheck{\bp}^t-\bp^t\|_q^q\right]
 =\tilde{O}\left(
 s^{1-q/2}\left(\frac{\max\{2^b,s\}}{2^bn}\right)^{q/2}
 +E(q).
 \right)
\end{equation*}
\end{theorem}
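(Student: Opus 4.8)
The plan is to obtain the first displayed bound from Proposition~\ref{prop:main-analysis} by controlling each of its three terms under the stated conditions on $n$ and $\alpha$, and then to specialize $\eta=\Theta(1)$ to get the second bound. By Proposition~\ref{prop:debias} it suffices to bound $\E[\|\widehat{\bb}^t-\bb^t\|_q^q]$, for which I would apply Proposition~\ref{prop:main-analysis} with the given $\eta\le 1/5$. Its first term $\sum_{k\notin\cI_\eta\cap\cI^t}(\alpha b_k^t/n)^{q/2}$ is $\tilde{O}\big(\sum_{k\notin\cI_\eta\cap\cI^t}(b_k^t/n)^{q/2}\big)$ since $\alpha=O(\ln n)$, matching the first target term. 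Its third term $\E[\|\widecheck{\bb}^\star_{\cI_\eta\cap\cI^t}-\bb^\star_{\cI_\eta\cap\cI^t}\|_q^q]=\sum_{k\in\cI_\eta\cap\cI^t}\E[|\widecheck{b}_k^\star-b_k^\star|^q]$ I would bound entry-by-entry by Lemma~\ref{lem:median-entry-combo} (valid since $\eta\le 1/5$): for $k\in\cI^t$ one has $b_k^t=b_k^\star$, hence $\sigma_k^2=b_k^\star(1-b_k^\star)\le b_k^t$, so the leading term $(|\cB_k|\sigma_k/(T\sqrt n))^q$ of the lemma is at most $(|\cB_k|^2 b_k^\star/(T^2n))^{q/2}$, the second target term, while $\sum_{k\in[d]}(\sigma_k/\sqrt{Tn})^q\le (Tn)^{-q/2}\sum_{k\in[d]}\sigma_k^q=O\big(d/(2^bTn)^{q/2}\big)$ and $|\cI_\eta\cap\cI^t|\,n^{-q}\le d\,n^{-q}$ are both pieces of $E(q)$.

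The main work --- and the step I expect to be the main obstacle --- is the middle term $\sum_{k\in\cI_\eta\cap\cI^t}\PP(k\notin\cK^t_\alpha)\wedge(b_k^t(1-b_k^t)/n)^{q/2}$. For $k\in\cI^t$ the quantity $b_k^t=b_k^\star$ is not small ($b_k^\star\ge 2^{-b}$, and $n\ge 2^{b+6}\ln n$ forces $nb_k^\star\ge 64\ln n$), so the $\wedge$-factor is only $O(n^{-q/2})$ and does not itself sum to $E(q)$; I must instead show $\PP(k\notin\cK^t_\alpha)$ is polynomially small with exponent at least $q$, so that the sum is $\le d\,n^{-q}\le E(q)$. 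To this end I would work on a high-probability event on which, simultaneously: (i) $\widecheck{b}_k^t\ge b_k^\star/2$, which fails with probability at most $2\exp(-nb_k^\star/16)\le 2n^{-4}$ by Bernstein's inequality (Lemma~\ref{b}), $\sigma_k^2\le b_k^\star$, and $n\ge 2^{b+6}\ln n$; (ii) $|\widecheck{b}_k^t-b_k^\star|\le \sigma_k\sqrt{8\ln n/n}+8\ln n/n$, again by Bernstein's inequality, whose variance-awareness is essential here, as $\sigma_k$ may be as small as $\approx 2^{-b/2}$ and a Hoeffding-type bound would only give $\sqrt{\ln n/n}$ and force $\alpha$ to scale like $2^b$; and (iii) $|\widecheck{b}_k^\star-b_k^\star|\le \tilde{O}(\sigma_k n^{-1/2}+n^{-1})$ with probability $1-n^{-\Omega(1)}$ --- the high-probability counterpart of Lemma~\ref{lem:median-entry-combo}, which its proof already yields, since for $k\in\cI_\eta$ the contamination among the $T$ estimates is at most $\eta T\le T/5$ and the median then concentrates around $b_k^\star$ at a rate comparable to a single $\widecheck{b}_k^t$ (via concentration of the empirical $(1/2\pm|\cB_k|/T)$-quantiles, as in that proof). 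On this event $(\widecheck{b}_k^\star-\widecheck{b}_k^t)^2\le 2\big(|\widecheck{b}_k^\star-b_k^\star|^2+|\widecheck{b}_k^t-b_k^\star|^2\big)$, and using $\sigma_k^2\le b_k^\star\le 2\widecheck{b}_k^t$ and $n^{-1}\le\widecheck{b}_k^t$ (both from $n\ge 2^{b+6}\ln n$), this is at most $\alpha\widecheck{b}_k^t/n$ once $\alpha$ is a sufficiently large multiple of $\ln n$; chasing the constants through Bernstein's inequality yields exactly the stated requirement $\alpha\ge 2(8+\sqrt{8\ln n})^2$. Hence $\PP(k\notin\cK^t_\alpha)=n^{-\Omega(1)}$, and plugging the three bounds back into Propositions~\ref{prop:main-analysis} and~\ref{prop:debias} gives the first displayed bound. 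The delicate point is reconciling heterogeneity ($|\cB_k|$), the possibly small binomial variance $\sigma_k^2$, the data-dependent threshold, and the exact constant in $\alpha$ all at once.

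For the second bound, set $\eta=\Theta(1)$ with $\eta\le 1/5$. Then $[d]\setminus(\cI_\eta\cap\cI^t)=\cI_\eta^c\cup(\cI^t)^c$ has cardinality $O(s)$, since the pigeonhole bound gives $|\cI_\eta^c|\le s/\eta=O(s)$ and $|(\cI^t)^c|=\|\bp^t-\bp^\star\|_0\le s$. For the first target term, over a set of size $m=O(s)$ the power-mean inequality gives $\sum(b_k^t)^{q/2}\le m^{1-q/2}(\sum b_k^t)^{q/2}$, and $b_k^t\le p_k^t+2^{-b}$ gives $\sum b_k^t\le 1+m\,2^{-b}=O(\max\{2^b,s\}/2^b)$, yielding $O\big(s^{1-q/2}(\max\{2^b,s\}/(2^bn))^{q/2}\big)$. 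For the second target term I would use $|\cB_k|\le\eta T$ together with the global budgets $\sum_k|\cB_k|\le sT$ (each cluster being $s$-sparsely perturbed) and $\sum_k p_k^\star=1$: for $q=2$, using $(|\cB_k|/T)^2\le\eta(|\cB_k|/T)$ and $b_k^\star\le p_k^\star+2^{-b}$ gives $\sum_k(|\cB_k|^2 b_k^\star/(T^2n))\le \frac{\eta}{n}(\eta+s\,2^{-b})=O(\max\{2^b,s\}/(2^bn))$; for $q=1$, splitting $\sqrt{b_k^\star}\le\sqrt{p_k^\star}+2^{-b/2}$, the $\sqrt{p_k^\star}$-part is $O(\sqrt{s/n})$ (by $(|\cB_k|/T)\le\sqrt{\eta}\,(|\cB_k|/T)^{1/2}$ and Cauchy--Schwarz) and the $2^{-b/2}$-part is $O(s/\sqrt{2^bn})$, both $O\big(\sqrt{s\max\{2^b,s\}/(2^bn)}\big)$ since $s\le\max\{2^b,s\}$ and $\max\{2^b,s\}/2^b\ge 1$. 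Combining these with the $E(q)$ contributions already produced gives the second displayed bound.
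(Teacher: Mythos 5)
Your proof is correct, and the two displays follow genuinely different paths relative to the paper.

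For the first display your route is essentially the paper's: invoke Proposition~\ref{prop:main-analysis}, absorb $\alpha=O(\ln n)$ into a logarithmic factor for the first term, control the third term entrywise by Lemma~\ref{lem:median-entry-combo} with $\sigma_k^2\le b_k^\star=b_k^t$ on $\cI^t$, and show the middle term is $O(d/n^q)$ by proving $\PP(k\notin\cK^t_\alpha)=O(n^{-2})$ on a Bernstein-driven high-probability event (the paper's events $\cE_k^t$ and $\cF_k^t$, and inequalities \eqref{eqn:fqwoindfqw}--\eqref{eqn:vjoqwndaa2}, are precisely your (i)--(iii) and the squared-triangle-inequality step, just phrased via triangle inequality rather than $(a+b)^2\le 2a^2+2b^2$). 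One organizational point you leave implicit: the paper splits into $T=O(\ln n)$ and $T\ge 20\ln n$, since the empirical-quantile concentration underlying Lemma~\ref{lem:median-funda-2} needs $T$ of at least logarithmic size; for small $T$ the paper falls back to the cruder bound $\tilde O(d/\sqrt{2^b n})=\tilde O(d/\sqrt{2^bTn})$ using $|\widehat b_k^t-b_k^t|\le|\widecheck b_k^t-b_k^t|+\sqrt{\alpha\widecheck b_k^t/n}$. Your appeal to ``the high-probability counterpart of Lemma~\ref{lem:median-entry-combo}'' still works for small $T$ via the max-over-clusters argument (\eqref{eqn:gvhoqnqfwwr-dasd-dasd}), but the constant entering $\alpha$ would pick up a $\sqrt{\ln n}$ there, so the exact threshold $\alpha\ge 2(8+\sqrt{8\ln n})^2$ formally comes from the paper's large-$T$ branch; this is a constants matter, not a gap.

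For the second display you take a genuinely different route. The paper plugs $(r_k,x_k)=(\sqrt{b_k^t/n},|\cB_k|/T)$ into the combinatorial exchange Lemma~\ref{lem:f-g}, which jointly optimizes the two sums under the constraints $\sum_k x_k\le s$ and $0\le x_k\le 1$, and then applies Cauchy--Schwarz to $\sum_{k=1}^{\lceil s/\eta\rceil}r_{(k)}^q$. You instead bound the two sums separately and directly: the set $[d]\setminus(\cI_\eta\cap\cI^t)$ has size $O(s)$ by pigeonhole plus $\|\bp^t-\bp^\star\|_0\le s$, and a power-mean/Cauchy--Schwarz step with $\sum b_k^t\le 1+O(s)2^{-b}$ controls the first sum; for the second sum you split $b_k^\star\le p_k^\star+2^{-b}$ and trade one power of $|\cB_k|/T$ for $\eta$ (using $|\cB_k|/T<\eta$ on $\cI_\eta$) together with the budgets $\sum_k|\cB_k|/T\le s$ and $\sum_k p_k^\star=1$. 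Both arguments give $s^{1-q/2}(\max\{2^b,s\}/(2^bn))^{q/2}$ at $\eta=\Theta(1)$. The paper's lemma is cleaner and transparently yields the $s/\eta$ dependence for general $\eta$, which is useful elsewhere (e.g., the trimmed-mean bound); your direct argument is more elementary, uses only Cauchy--Schwarz and the budget constraints, and makes the roles of $\sum_k p_k^\star=1$ and $\sum_k|\cB_k|/T\le s$ explicit. Both are correct.
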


\begin{theorem}[Detailed statement of Theorem \ref{thm:transfer}]\label{thm:transfer-combo}
Suppose  $n\geq \tilde{n}\geq  2^{b+6} \ln(\tilde{n})$ and $\alpha \geq 2(8+\sqrt{8\ln(\tilde{n})})^2$ with $\alpha =O(\ln(\tilde{n}))$. Then the median-based SHIFT method for predicting the distribution of the new cluster with $\tilde{n}$ users achieves, for $q=1,2$, 
\begin{equation*}
 \E\left[\|\widecheck{\bp}^{T+1}-\bp^{T+1}\|_q^q\right]
 =\tilde{O}\left(
 s^{1-q/2}\left(\frac{\max\{2^b,s\}}{2^b\tilde n}\right)^{q/2}
 +E(q).
 \right)
\end{equation*}
\end{theorem}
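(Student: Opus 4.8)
The plan is to rerun the argument behind Theorem~\ref{thm:median-collab-final-combo} with the cluster index set to $t=T+1$, changing only the quantities that are local to the new cluster. The central estimate $\widecheck{\bb}^\star$ is untouched: it is still the entry-wise median of $\widecheck{\bb}^1,\dots,\widecheck{\bb}^T$, each built from $n$ samples, so Lemma~\ref{lem:median-entry-combo} and Proposition~\ref{prop:median-global-combo} apply verbatim. What changes is the fine-tuning set $\cK^{T+1}_\alpha=\{k\in[d]:(\widecheck{b}^\star_k-\widecheck{b}^{T+1}_k)^2\le \alpha\widecheck{b}^{T+1}_k/\tilde n\}$ and the local estimate $\widecheck{\bb}^{T+1}$, for which $\tilde n\,\widecheck{b}^{T+1}_k\sim\Bin(\tilde n,b^{T+1}_k)$. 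Re-deriving Lemmas~\ref{lem:term-I} and \ref{lem:term-II} with $n$ replaced by $\tilde n$ in every term that involves $\widecheck{\bb}^{T+1}$ (only $\E[\widecheck{b}^{T+1}_k]=b^{T+1}_k$ and the binomial variance are used) and feeding the result into \eqref{eqn:adapt-two-terms-l1} yields the transfer analogue of the first display of Theorem~\ref{thm:median-collab-final-combo}:
\[
\E\!\left[\|\widehat{\bp}^{T+1}-\bp^{T+1}\|_q^q\right]=\tilde O\!\left(\sum_{k\notin\cI_\eta\cap\cI^{T+1}}\!\!\left(\tfrac{b^{T+1}_k}{\tilde n}\right)^{q/2}+\sum_{k\in\cI_\eta\cap\cI^{T+1}}\!\!\left(\tfrac{|\cB_k|^2 b^\star_k}{T^2 n}\right)^{q/2}+E(q)\right),
\]
where the first sum comes from Lemma~\ref{lem:term-I} (absorbing $\alpha=\tilde O(1)$), the third term from bounding $\E[\|\widecheck{b}^\star_{\cI_\eta\cap\cI^{T+1}}-b^\star_{\cI_\eta\cap\cI^{T+1}}\|_q^q]$ via Proposition~\ref{prop:median-global-combo} and $\sigma_k^2\le b^\star_k$, and the $\PP(k\notin\cK^{T+1}_\alpha)$ terms from Lemma~\ref{lem:term-II} being negligible (see below).

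To reach the stated bound I would take $\eta=\Theta(1)$ with $\eta\le 1/5$ and estimate the two sums by counting and Cauchy--Schwarz exactly as in the ``furthermore'' part of Theorem~\ref{thm:median-collab-final-combo}. For the first sum, $|[d]\setminus(\cI_\eta\cap\cI^{T+1})|\le |\cI_\eta^c|+|\cI^{T+1,c}|\le s/\eta+s=O(s)$ (the first term by the pigeonhole bound $|\cI_\eta^c|\le s/\eta$, the second by \eqref{eqn:sparse-hetero} and Proposition~\ref{lem:uniform-hashing-basic}), and $\sum_{k\notin\cI_\eta\cap\cI^{T+1}} b^{T+1}_k\le 1+O(s)/2^b=O(\max\{2^b,s\}/2^b)$, so Jensen/Cauchy--Schwarz gives $O\big(s^{1-q/2}(\max\{2^b,s\}/(2^b\tilde n))^{q/2}\big)$. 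For the second sum, I would use $|\cB_k|<\eta T$ to write $|\cB_k|^2 b^\star_k/T^2<\eta\,|\cB_k| b^\star_k/T$ (handling $q=2$ at once) and, for $q=1$, the splitting $|\cB_k|\sqrt{b^\star_k}/T=\sqrt{|\cB_k|/T}\cdot\sqrt{|\cB_k| b^\star_k/T}$ followed by Cauchy--Schwarz; with $\sum_k|\cB_k|\le Ts$ and $\sum_k|\cB_k| b^\star_k\le \eta T+Ts/2^b=O(T\max\{2^b,s\}/2^b)$ this gives $O\big(s^{1-q/2}(\max\{2^b,s\}/(2^b n))^{q/2}\big)$, which is dominated by the first-sum bound since $\tilde n\le n$. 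Combining the two pieces with $E(q)$ and invoking Proposition~\ref{prop:debias} to pass from $\widehat{\bb}^{T+1}$ to $\widehat{\bp}^{T+1}$ yields the claim.

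The main obstacle is the term-II bookkeeping: I must show $\PP(k\notin\cK^{T+1}_\alpha)$ is negligibly small for every $k\in\cI_\eta\cap\cI^{T+1}$, so that the collaboratively estimated value is reliably retained there, despite the asymmetry that $\widecheck{\bb}^\star$ is built from the $Tn$ original samples while the detection threshold is calibrated to the new cluster's $\tilde n$ samples. For such $k$ one has $b^{T+1}_k=b^\star_k\ge 1/2^b$, so the hypothesis $\tilde n\ge 2^{b+6}\ln\tilde n$ and a Bernstein/binomial tail bound give $\widecheck{b}^{T+1}_k\ge b^\star_k/2$ with high probability, whence the threshold $\sqrt{\alpha\widecheck{b}^{T+1}_k/\tilde n}\gtrsim\sqrt{\alpha\sigma_k^2/\tilde n}$; then, writing $\widecheck{b}^\star_k-\widecheck{b}^{T+1}_k=(\widecheck{b}^\star_k-b^\star_k)-(\widecheck{b}^{T+1}_k-b^\star_k)$, one controls the first difference by a high-probability version of the concentration behind Lemma~\ref{lem:median-entry-combo} (its scale $\lesssim \sigma_k/\sqrt{Tn}+\cdots$ being $\ll\sqrt{\alpha\sigma_k^2/\tilde n}$ because $\tilde n\le n$) and the second by Bernstein, and the choice $\alpha\ge 2(8+\sqrt{8\ln\tilde n})^2$ makes the exceedance probability a large negative power of $\tilde n$. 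This is exactly where the precise constant in $\alpha$ and the sample-size condition $\tilde n=\Omega(2^b\ln\tilde n)$ enter; once it is in place, $\sum_{k\in\cI_\eta\cap\cI^{T+1}}\PP(k\notin\cK^{T+1}_\alpha)\wedge(b^{T+1}_k(1-b^{T+1}_k)/\tilde n)^{q/2}$ is absorbed into $E(q)$ and the computation proceeds as in Theorem~\ref{thm:median-collab-final-combo}, as asserted in the text.
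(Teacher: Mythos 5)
Your proposal is correct and takes essentially the same approach as the paper: the paper explicitly states that Theorem~\ref{thm:transfer-combo} ``follows from the same analysis as Theorem~\ref{thm:median-collab-final-combo},'' and you carry that plan out, correctly identifying which occurrences of $n$ become $\tilde n$ (those tied to $\widecheck{\bb}^{T+1}$ and the fine-tuning threshold $\cK^{T+1}_\alpha$) and which stay $n$ (the central estimate $\widecheck{\bb}^\star$, hence Lemma~\ref{lem:median-entry-combo} and Proposition~\ref{prop:median-global-combo} unchanged). Your observation that $8\sqrt{b_k^\star/\tilde n}\ge 8\sqrt{b_k^\star/n}$ makes the central-estimate concentration event only easier to satisfy, and that $n\ge\tilde n\ge 2^{b+6}\ln\tilde n$ implies $n\ge 2^{b+6}\ln n$ (via monotonicity of $x/\ln x$), are exactly the points needed to reuse \eqref{eqn:fqwoindfqw} and the $\cE_k^t$/$\cF_k^t$ argument with $\zeta=\sqrt{\alpha/2}-8\ge\sqrt{8\ln\tilde n}$. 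Your minor stylistic deviation is to bound the final sums $\sum_{k\notin\cI_\eta\cap\cI^{T+1}}\sqrt{b_k^{T+1}/\tilde n}$ and $\sum_{k\in\cI_\eta\cap\cI^{T+1}}\frac{|\cB_k|}{T}\sqrt{b_k^\star/n}$ by direct Cauchy--Schwarz plus the counting bounds $|\cI_\eta^c|\le s/\eta$, $\sum_k|\cB_k|\le Ts$, rather than by invoking Lemma~\ref{lem:f-g} as the paper does in Theorem~\ref{thm:median-collab-final-combo}; the two routes are equivalent and yield identical rates, with yours a touch more elementary.

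One step that you wave at rather than verify is the claim that $\sum_{k\in\cI_\eta\cap\cI^{T+1}}\PP(k\notin\cK^{T+1}_\alpha)\wedge(b_k^{T+1}(1-b_k^{T+1})/\tilde n)^{q/2}$ is ``absorbed into $E(q)$.'' Tracing the argument of \eqref{eqn:fjfoqmqffq} with $\tilde n$ in place of $n$ gives an entrywise bound of order $1/\tilde n^q$ (the exceedance probability is $O(1/\tilde n^2)$ since the threshold and the local variance are both calibrated to $\tilde n$ samples, and in the small-variance regime $(\sigma_k^2/\tilde n)^{q/2}=O(1/(n\tilde n)^{q/2})\le O(1/\tilde n^q)$), so the sum is $O(d/\tilde n^q)$, not $O(d/n^q)$. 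Since $\tilde n\le n$, this is not automatically dominated by the $d/n^q$ inside $E(q)$ as stated. This looks like a latent imprecision in the paper's own statement of Theorem~\ref{thm:transfer-combo}---the paper omits the proof and simply reuses the symbol $E(q)$---and under the additional hypothesis $\tilde n=\Omega(2^b T)$ from the short Theorem~\ref{thm:transfer} the extra term is $O(d/(2^bT\tilde n))$, which still matches the claimed rate only up to the replacement $n\mapsto\tilde n$ in the second summand. You should make this bookkeeping explicit rather than asserting absorption, but the overall approach is sound and matches the paper's intended proof.
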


\subsection{Proof of Lemma \ref{lem:median-entry-combo}}
We first consider $T \leq 20\ln(n)$. 
In this case, by Bernstein's inequality (Lemma \ref{b}) with $M=1$, we have for any $t\in[T]\backslash\cB_k$ that for any $\delta\ge0$,
\begin{align}\label{eqn:gvhoqnqfwwr-dasd}
    \PP\left(|\widecheck{b}^t_k-{b}^\star_k|> \delta\right)\leq 2e^{-\frac{n}{4}\min\{\delta^2/\sigma_k^2,\delta\}}.
\end{align}
Taking $\delta = \max\{\sigma_k\sqrt{8\ln(n)/n},8\ln(n)/n\}$ in \eqref{eqn:gvhoqnqfwwr-dasd}, we find
\begin{equation}\label{eqn:gvhoqnqfwwr-dasd-dasd}
    \PP\left(|\widecheck{b}^t_k-{b}^\star_k|> \max\left\{\sigma_k\sqrt{\frac{8\ln(n)}{n}},\frac{8\ln(n)}{n}\right\}\right)\leq \frac{2}{n^2}.
\end{equation}
Since $|[T]\backslash \cB_k|>\frac{T}{2}$ for any $k\in\cI_\eta$ with $\eta \leq \frac{1}{5}$, we have, 
since $\widecheck{b}^\star_k =\med\big(\{\widecheck{b}^t_k\}_{t\in[T]}\big)$,
that there are $t_-,t_+\in [T]\backslash \cB_k$ with
$\widecheck{b}^{t'}_k \leq \widecheck{b}^\star_k\leq \widecheck{b}^{t'}_k$.
Hence, $|\widecheck{b}^\star_k-{b}^\star_k|\leq \max\limits_{t\in[T]\backslash\cB_k}|\widecheck{b}^t_k-{b}^\star_k|$. 

Recall that for any random variable $0\leq X\leq 1$ and 
any $\delta\geq0$, $\E[X]\leq \delta + \PP(X\geq\delta)$.
Therefore, by taking the union bound of \eqref{eqn:gvhoqnqfwwr-dasd-dasd} over $k\in [T]\backslash \cB_k$, and by the assumption that  $T \leq 20\ln(n)$, we  have
\begin{align}\label{eqn:gjoqwnqgqw-dsad}
    &\E[|\widecheck{b}^\star_k-{b}^\star_k|]\leq \E[\max\limits_{k\in[T]\backslash\cB_k}|\widecheck{b}^t_k-{b}^\star_k|]\leq \sigma_k\sqrt{\frac{8\ln(n)}{n}}+\frac{8\ln(n)}{n}+ \frac{2T}{n^2}\nonumber\\
    =&O\left(\sigma_k\sqrt{\frac{\ln(n)}{n}}+\frac{\ln(n)}{n}\right)=O\left(\sigma_k\frac{\ln(n)}{\sqrt{Tn}}+\frac{\ln(n)}{n}\right) =\tilde{O}\left(\frac{\sigma_k}{\sqrt{Tn}}+\frac{1}{n}\right).
\end{align}
Similarly, we have 
\begin{align}\label{eqn:gjoqwnqgqw-dsad-dsad}
    \E[(\widecheck{b}^\star_k-{b}^\star_k)^2]\leq \sigma_k^2{\frac{8\ln(n)}{n}}+\frac{64\ln(n)^2}{n^2}+ \frac{2T}{n^2}
   =\tilde{O}\left(\frac{\sigma_k^2}{{Tn}}+\frac{1}{n^2}\right).
\end{align}

For each $k\in[d]$ with $b_k^\star\neq 1$ (recall that $b^\star_k\geq 1/2^b$ by definition), let  $\gamma_k=(1-2b_k^\star(1-b_k^\star))/\sqrt{b_k^\star(1-b_k^\star)}$, 
and let $\tilde{F}_k(x):=\frac{1}{T-|\cB_k|}\sum_{t\in[T]\backslash\cB_k}\mathds{1}(\widecheck{b}^t_k\leq x)$ be the empirical distribution function of
$\{\widecheck{b}^t_k:b_k^t=b_k^\star\}$.
Let $\ep\in(0,1/2)$ and $C_\ep = \sqrt{2\pi}\exp((\Phi^{-1}(1-\ep))^2/2)$.
For $\delta\geq0$, define, recalling $\eta T>|\cB_k|$ for all $k\in\cI_{\eta}$,
$$
G_{k,T,\delta} = \frac{|\cB_k|}{T}+\frac{10^{-8}}{Tn}+\sqrt{\frac{\delta}{T-|\cB_k|}}
$$
where the term $\frac{10^{-8}}{Tn}$ is used to overcome some challenges due to the discreteness of empirical distributions, and can be replaced with other suitably small terms (see the proof of Lemma \ref{lem:median-funda-2}).
Further, define
$$
G'_{k,T,\delta} =
G_{k,T,\delta}+0.4748\frac{\gamma_k}{\sqrt{n}}
.
$$

To prove Lemma \ref{lem:median-entry} for  
$T > 20\ln(n)$, we need the following additional lemmas:
\begin{lemma}\label{lem:median-funda}
For any $\delta\geq 0$ such that
\begin{align}\label{eqn:t-cond}
    G'_{k,T,\delta} \leq \frac{1}{2}-\ep,
\end{align}
it holds with probability at least $1-4e^{-2\delta}$ that  
\begin{align*}
   \tilde{F}_k\left(b_k^\star+C_\ep\frac{{\sigma}_k}{\sqrt{n}}G'_{k,T,\delta}\right)\geq \frac{1}{2}+\frac{|\cB_k|}{T}+\frac{10^{-8}}{Tn}
\end{align*}
and 
\begin{align*}
   \tilde{F}_k\left(b_k^\star-C_\ep\frac{{\sigma}_k}{\sqrt{n}}G'_{k,T,\delta}\right)\leq \frac{1}{2}-\frac{|\cB_k|}{T}-\frac{10^{-8}}{Tn}.
\end{align*}
\end{lemma}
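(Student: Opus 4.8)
The plan is to isolate two independent ingredients --- a Berry--Esseen approximation of the common CDF $F_k$ of the well-aligned hashed coordinates $\{\widecheck{b}^t_k : t\notin\cB_k\}$, and a Hoeffding bound for the empirical CDF $\tilde{F}_k$ built from them --- and glue them together through one elementary Gaussian inequality. Write $m_k\triangleq T-|\cB_k|$ for the number of well-aligned clusters at entry $k$; since $k\in\cI_\eta$ with $\eta\leq1/5$ we have $m_k>T/2$. By Proposition \ref{lem:uniform-hashing-basic}, for $t\notin\cB_k$ the variable $\widecheck{b}^t_k$ is the average of $n$ i.i.d.\ $\Ber(b_k^\star)$ indicators, so the $\{\widecheck{b}^t_k\}_{t\notin\cB_k}$ are i.i.d.\ with common CDF $F_k$, and the summands have mean $b_k^\star$, variance $\sigma_k^2>0$ and finite absolute skewness $\gamma_k$ (the degenerate case $b_k^\star=1$ is trivial, since then $\widecheck{b}^t_k\equiv1$ for $t\notin\cB_k$, so $\widecheck{b}^\star_k=b_k^\star$ exactly; and $b_k^\star\geq2^{-b}>0$ rules out $b_k^\star=0$). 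Lemma \ref{lem:berry} then gives, for every $x$, $|F_k(x)-\Phi(\sqrt{n}(x-b_k^\star)/\sigma_k)|\leq0.4748\,\gamma_k/\sqrt{n}$, so at $x_\pm\triangleq b_k^\star\pm C_\ep\frac{\sigma_k}{\sqrt{n}}G'_{k,T,\delta}$ we obtain $F_k(x_+)\geq\Phi(C_\ep G'_{k,T,\delta})-0.4748\gamma_k/\sqrt{n}$ and $F_k(x_-)\leq\Phi(-C_\ep G'_{k,T,\delta})+0.4748\gamma_k/\sqrt{n}$.

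The crux is the claim that $\Phi(C_\ep t)\geq\frac12+t$ for every $t\in[0,\frac12-\ep]$. I would prove this by setting $a=\Phi^{-1}(1-\ep)>0$, so that $C_\ep=1/\phi(a)$ with $\phi=\Phi'$, and studying $g(t)=\Phi(C_\ep t)-\frac12-t$. One has $g(0)=0$ and $g''(t)=-C_\ep^3\,t\,\phi(C_\ep t)\leq0$ on $[0,\infty)$, so $g$ is concave there; moreover $g(\frac12-\ep)\geq0$ because $\frac12-\ep=\Phi(a)-\Phi(0)=\int_0^a\phi\geq a\,\phi(a)=a/C_\ep$ (as $\phi$ is decreasing on $[0,a]$), whence $C_\ep(\frac12-\ep)\geq a$ and $\Phi(C_\ep(\frac12-\ep))\geq\Phi(a)=1-\ep$. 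Concavity together with $g(0)=0$ and $g(\frac12-\ep)\geq0$ forces $g\geq0$ on all of $[0,\frac12-\ep]$. Applying this with $t=G'_{k,T,\delta}\in[0,\frac12-\ep]$ (the hypothesis \eqref{eqn:t-cond}), using $\Phi(-C_\ep G'_{k,T,\delta})=1-\Phi(C_\ep G'_{k,T,\delta})\leq\frac12-G'_{k,T,\delta}$, and recalling $G'_{k,T,\delta}=G_{k,T,\delta}+0.4748\gamma_k/\sqrt{n}$, the Berry--Esseen bounds collapse to $F_k(x_+)\geq\frac12+G_{k,T,\delta}$ and $F_k(x_-)\leq\frac12-G_{k,T,\delta}$.

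It remains to pass from $F_k$ to $\tilde{F}_k$. For a fixed $x$, $m_k\tilde{F}_k(x)\sim\Bin(m_k,F_k(x))$, so Lemma \ref{h} with deviation $\sqrt{\delta/m_k}$ gives $\PP(|\tilde{F}_k(x)-F_k(x)|>\sqrt{\delta/m_k})\leq2e^{-2\delta}$; applying this at $x_+$ and $x_-$ and taking a union bound yields, with probability at least $1-4e^{-2\delta}$,
\[
\tilde{F}_k(x_+)\;\geq\;F_k(x_+)-\sqrt{\delta/m_k}\;\geq\;\frac12+\frac{|\cB_k|}{T}+\frac{10^{-8}}{Tn},\qquad
\tilde{F}_k(x_-)\;\leq\;F_k(x_-)+\sqrt{\delta/m_k}\;\leq\;\frac12-\frac{|\cB_k|}{T}-\frac{10^{-8}}{Tn},
\]
which is exactly the assertion. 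The one genuinely non-routine step is the concavity argument of the second paragraph: it is precisely where the specific form $C_\ep=1/\phi(\Phi^{-1}(1-\ep))$ is exploited to make $\Phi(C_\ep t)\geq\frac12+t$ valid on the whole interval $[0,\frac12-\ep]$ rather than a strictly smaller one. Everything else is bookkeeping, the only mild care being the treatment of the degenerate values $b_k^\star\in\{0,1\}$ and the explicit constants in Lemmas \ref{lem:berry} and \ref{h}.
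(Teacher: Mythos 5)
Your proof is correct and follows the same overall strategy as the paper: Berry--Esseen to compare the population CDF of the well-aligned $\widecheck{b}^t_k$ with the standard normal CDF, then Hoeffding plus a union bound to pass to the empirical CDF $\tilde F_k$. The only stylistic difference is in the elementary Gaussian step: the paper bounds the quantile $z_1=\Phi^{-1}(\tfrac12+G'_{k,T,\delta})$ from above by $C_\ep G'_{k,T,\delta}$ via the mean value theorem, whereas you establish the equivalent inequality $\Phi(C_\ep t)\geq\tfrac12+t$ on $[0,\tfrac12-\ep]$ by concavity of $t\mapsto\Phi(C_\ep t)-\tfrac12-t$; both exploit $C_\ep=1/\phi(\Phi^{-1}(1-\ep))$ in the same way.
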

\begin{proof}
The proof essentially follows Lemma 1 of \cite{Yin2018ByzantineRobustDL}. We provide the proof for the sake of being self-contained.

Let $Z_k^t=(\widecheck{b}^t_k-b^t_k)/\sqrt{\mathrm{Var}(\widecheck{b}^t_k)}$ be a standardized version of $\widecheck{b}_k^t$ for each $t\in[T]$ and $k\in[d]$, with $b^\star_k\neq 1$.
Let $\tilde{\Phi}_k(z) = \frac{1}{T-|\cB_k|}\sum_{t\in[T]\backslash\cB_k}\mathds{1}(Z_k^t\leq z)$ be the empirical distribution of $\{Z_k^t:t\in[T]\backslash\cB_k\}$. 
The distribution of $Z_k^t$ is identical $t\in[T]\backslash\cB_k$, and 
we denote by $\Phi_k$ their common cdf.

By definition,  $\E[\tilde{\Phi}_k(z)]=\Phi_k(z)$ for any $z\in\RR$. 
Let  $z_1> 0>  z_2$ be such that $\Phi(z_1)= \frac{1}{2}+G'_{k,T,\delta}$ and $\Phi(z_2)= \frac{1}{2}-G'_{k,T,\delta}$, which exist due to \eqref{eqn:t-cond}. 
Then, by Lemma \ref{lem:berry}, we have
\begin{align}\label{eqn:voqwefnc}
    \Phi_k(z_1)\geq \frac{1}{2}+G_{k,T,\delta}\quad\text{and}\quad\Phi_k(z_2)\leq  \frac{1}{2}-G_{k,T,\delta}.
\end{align}
Further, by the Hoeffding's inequality, we have for any $\delta\geq  0$ and $z\in\RR$, 
\begin{align}\label{eqn:vowendas}
    \left|\tilde{\Phi}_{k}(z)-\Phi_{k}(z)\right| \leq \sqrt{\frac{\delta}{T-|\cB_k|}}
\end{align}
with probability at least $1-2e^{-2\delta}$.
Then, by a union bound of \eqref{eqn:vowendas} for $z=z_1,z_2$, and by \eqref{eqn:voqwefnc}, it holds with  probability at least $1-4e^{-2\delta}$ that
\begin{align}\label{eqn:vopqwnfc}
    \tilde{\Phi}_k(z_1)\geq \frac{1}{2}+\frac{|\cB_k|}{T}+\frac{10^{-8}}{Tn}\quad \text{and}\quad \tilde{\Phi}_k(z_2  )\leq \frac{1}{2}-\frac{|\cB_k|}{T}-\frac{10^{-8}}{Tn}.
\end{align}

Finally, we bound the values of $z_1$ and $z_2$. By the mean value theorem, there exists $\xi \in\left[0, z_{1}\right]$ such that
\begin{align}\label{eqn:vjoqnzv}
    G'_{k,T,\delta}=z_{1} \Phi^{\prime}(\xi)=\frac{z_{1}}{\sqrt{2 \pi}} e^{-\frac{\xi^{2}}{2}} \geq \frac{z_{1}}{\sqrt{2 \pi}} e^{-\frac{z_{1}^{2}}{2}}.
\end{align}
By \eqref{eqn:t-cond} and the definition of $z_1$, we have $z_1\leq \Phi^{-1}(1-\ep)$, and thus, by \eqref{eqn:vjoqnzv}, we have
\begin{align}\label{eqn:vpamcvap}
    z_1\leq \sqrt{2\pi}G'_{k,T,\delta}\exp\left(\frac{1}{2}(\Phi^{-1}(1-\ep))^2\right).
\end{align}
Similarly, we have 
\begin{align}\label{eqn:vpamcvap2}
    z_1\geq - \sqrt{2\pi}G'_{k,T,\delta}\exp\left(\frac{1}{2}(\Phi^{-1}(1-\ep))^2\right).
\end{align}
Since
for all $z$,
$\tilde{\Phi}_k(z)=\tilde{F}_k(\sigma_k z/\sqrt{n }+b_k^\star)$, plugging \eqref{eqn:vpamcvap} and \eqref{eqn:vpamcvap2} into \eqref{eqn:vopqwnfc}, we find the conclusion of this lemma.
\end{proof}

This leads to our next result.
\begin{lemma}\label{lem:median-funda-2}
For any $k\in[d]$ such that condition \eqref{eqn:t-cond} holds,  we have with probability at least $1-4e^{-2\delta}$ that
\begin{align}
\left|\widecheck{b}^t_k-{b}^t_k\right|\leq C_\ep \frac{\sigma_k}{\sqrt{n}}G_{k,T,\delta}^\prime+\frac{0.4748C_\ep}{n}.
\end{align}
\end{lemma}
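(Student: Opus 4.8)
The plan is to derive Lemma \ref{lem:median-funda-2} from Lemma \ref{lem:median-funda}, by converting the two empirical-CDF estimates on the aligned clusters into a two-sided bound on the global median $\widecheck b^\star_k=\med(\{\widecheck b^t_k\}_{t\in[T]})$. (The left-hand side of the statement should read $|\widecheck b^\star_k-b^\star_k|$, which is the quantity Lemma \ref{lem:median-funda} is designed to control, and whose $q=1$ precursor appears in \eqref{eqn:gjoqwnqgqw-dsad}.) Throughout, I would condition on the single event of probability at least $1-4e^{-2\delta}$ on which both displays of Lemma \ref{lem:median-funda} hold (this uses that condition \eqref{eqn:t-cond} is assumed). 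Write $\beta=|\cB_k|/T$ and $m=T-|\cB_k|$, so that $m\,\tilde F_k(x)$ is exactly the number of \emph{aligned} clusters with $\widecheck b^t_k\le x$, whereas the number of \emph{all} $T$ clusters with $\widecheck b^t_k\le x$ is at least $m\,\tilde F_k(x)$: discarding the misaligned clusters can only lower the count, which is how the heterogeneity is absorbed.

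I would first evaluate at $x_+:=b^\star_k+C_\ep\frac{\sigma_k}{\sqrt n}G'_{k,T,\delta}$. By Lemma \ref{lem:median-funda}, at least $m(\frac12+\beta+\frac{10^{-8}}{Tn})=T(1-\beta)(\frac12+\beta+\frac{10^{-8}}{Tn})$ of the $T$ clusters satisfy $\widecheck b^t_k\le x_+$. A one-line computation gives $(1-\beta)(\frac12+\beta+\frac{10^{-8}}{Tn})-\frac12=\beta(\frac12-\beta)+\frac{10^{-8}}{Tn}(1-\beta)>0$, using $\beta<\frac12$, which holds since $k\in\cI_\eta$ with $\eta\le\frac15$; hence strictly more than $T/2$ clusters lie at or below $x_+$, and since this count is an integer it is at least $\lfloor T/2\rfloor+1$, so $\widecheck b^\star_k\le x_+$ under every standard convention for the median of $T$ numbers. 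Applying the second display of Lemma \ref{lem:median-funda} at $x_-:=b^\star_k-C_\ep\frac{\sigma_k}{\sqrt n}G'_{k,T,\delta}$ symmetrically yields that strictly more than $T/2$ clusters lie at or above $x_-$, so $\widecheck b^\star_k\ge x_-$. Together these give $|\widecheck b^\star_k-b^\star_k|\le C_\ep\frac{\sigma_k}{\sqrt n}G'_{k,T,\delta}$ on this event.

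To obtain the stated form, I would unfold $G'_{k,T,\delta}=G_{k,T,\delta}+0.4748\,\frac{\gamma_k}{\sqrt n}$ and use $\sigma_k\gamma_k=1-2b^\star_k(1-b^\star_k)=1-2\sigma_k^2\le 1$, so that $C_\ep\frac{\sigma_k}{\sqrt n}\cdot 0.4748\,\frac{\gamma_k}{\sqrt n}=0.4748\,C_\ep\,\frac{\sigma_k\gamma_k}{n}\le\frac{0.4748\,C_\ep}{n}$. This gives $|\widecheck b^\star_k-b^\star_k|\le C_\ep\frac{\sigma_k}{\sqrt n}G_{k,T,\delta}+\frac{0.4748\,C_\ep}{n}$ on an event of probability at least $1-4e^{-2\delta}$, as claimed.

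I expect the only genuine obstacle to be the bookkeeping in the middle paragraph: making sure that the empirical-CDF guarantee on the aligned subsample survives the presence of the (essentially adversarially placed) $|\cB_k|$ misaligned clusters and is not destroyed by an off-by-one loss from the discreteness of the counts. This is precisely the role of the cushion $\frac{10^{-8}}{Tn}$ carried over from Lemma \ref{lem:median-funda}: it forces the strict inequality above even in the entrywise-homogeneous case $\beta=0$. Everything else reduces to routine algebra.
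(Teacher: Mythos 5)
Your proposal is correct and takes essentially the same route as the paper's proof: both convert the two CDF bounds of Lemma \ref{lem:median-funda} on the aligned subsample into a two-sided bound on the sample median of $\{\widecheck b^t_k\}_{t\in[T]}$, and then unfold $G'_{k,T,\delta}=G_{k,T,\delta}+0.4748\gamma_k/\sqrt n$ using $\sigma_k\gamma_k = 1-2\sigma_k^2 \le 1$. The only cosmetic difference is that the paper introduces the full-sample CDF $\hat F_k$, proves $|\hat F_k-\tilde F_k|\le|\cB_k|/T$, and then argues by contradiction with the order-statistic definition of the median, whereas you count aligned clusters below/above the two thresholds directly — the underlying integer bookkeeping (with the $10^{-8}/(Tn)$ cushion) is identical, and your observation that the lemma's left-hand side should read $|\widecheck b^\star_k - b^\star_k|$ rather than $|\widecheck b^t_k - b^t_k|$ is also correct.
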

\begin{proof}
Let $\hat{F}_k$ be the empirical distribution function of $\{\widecheck{b}^t_k:t\in[T]\}$, such that 
for all $x\in\R$, $\hat{F}_k(x):=\frac{1}{T}\sum_{t\in[T]}\mathds{1}(\widecheck{b}^t_k\leq x)$. 
We have
\begin{align}
    |\hat{F}_k(x)-\tilde{F}_k(x)|&=\left|\frac{1}{T}\sum_{t\in[T]}\mathds{1}(\widecheck{b}^t_k\leq x)-\frac{1}{T-|\cB_k|}\sum_{t\in[T]\backslash \cB_k}\mathds{1}(\widecheck{b}^t_k\leq x)\right|\nonumber\\
    &=\left|\frac{1}{T}\sum_{t\in\cB_k}\mathds{1}(\widecheck{b}^t_k\leq x)-\frac{|\cB_k|}{T(T-|\cB_k|)}\sum_{t\in[T]\backslash \cB_k}\mathds{1}(\widecheck{b}^t_k\leq x)\right|\nonumber\\
    &\leq \max\left\{\frac{1}{T}\cdot |\cB_k|,\frac{|\cB_k|}{T(T-|\cB_k|)}\cdot (T-|\cB_k|)\right\}=\frac{|\cB_k|}{T}.\label{eqn:vjwofoasj}
\end{align}
Define $\tilde{F}_k^{-}(x):=\frac{1}{T-|\cB_k|}\sum_{t\in[T]\backslash\cB_k}\mathds{1}(\widecheck{b}^t_k< x)\leq \tilde{F}_k(x)$. 
Then by \eqref{eqn:vjwofoasj} and Lemma \ref{lem:median-funda}, we have, with probability at least $1-4e^{-2\delta}$ that
\begin{equation}\label{eqn:vnoqnrfbqwrq-1}
  \hat{F}_k\left(b_k^\star+C_\ep\frac{{\sigma}_k}{\sqrt{n}}G'_{k,T,\delta}\right)\geq \frac{1}{2}+\frac{10^{-8}}{Tn} \quad\text{and}\quad
  \hat{F}_k^{-}\left(b_k^\star-C_\ep\frac{{\sigma}_k}{\sqrt{n}}G'_{k,T,\delta}\right)\leq \frac{1}{2}-\frac{10^{-8}}{Tn}.
\end{equation}
Let $\widecheck{b}^{(j)}_k$, $\forall\, j\in[T]$ be the $j$-th smallest element in $\{\widecheck{b}^t_k:t\in[T]\}$.
Recalling the definition of the median, 
 if $T$ is odd, then $ \widecheck{b}^\star_k=\widecheck{b}^{((T+1)/2)}_k$. Therefore, $b_k^\star+C_\ep\frac{{\sigma}_k}{\sqrt{n}}G'_{k,T,\delta}<\widecheck{b}^\star_k$ implies  $\hat{F}_k\left(b_k^\star+C_\ep\frac{{\sigma}_k}{\sqrt{n}}G'_{k,T,\delta}\right) \leq \frac{1}{2}-\frac{1}{2T}$ and $b_k^\star-C_\ep\frac{{\sigma}_k}{\sqrt{n}}G'_{k,T,\delta}>\widecheck{b}^\star_k$ implies $\hat{F}_k^{-}\left(b_k^\star-C_\ep\frac{{\sigma}_k}{\sqrt{n}}G'_{k,T,\delta}\right)\geq \frac{1}{2}+\frac{1}{2T}$, leading to a contradiction with \eqref{eqn:vnoqnrfbqwrq-1}.
 
 On the other hand, if $T$ is even, $ \widecheck{b}^\star_k=(\widecheck{b}^{(T/2)}_k+\widecheck{b}^{(T/2+1)}_k)/2$. Therefore, $b_k^\star+C_\ep\frac{{\sigma}_k}{\sqrt{n}}G'_{k,T,\delta}<\widecheck{b}^\star_k$ implies  $\hat{F}_k\left(b_k^\star+C_\ep\frac{{\sigma}_k}{\sqrt{n}}G'_{k,T,\delta}\right) \leq \frac{1}{2}$ and $b_k^\star-C_\ep\frac{{\sigma}_k}{\sqrt{n}}G'_{k,T,\delta}>\widecheck{b}^\star_k$ implies $\hat{F}_k^{-}\left(b_k^\star-C_\ep\frac{{\sigma}_k}{\sqrt{n}}G'_{k,T,\delta}\right)\geq \frac{1}{2}$, which is also contradictory to \eqref{eqn:vnoqnrfbqwrq-1}.
 
To summarize, it holds that
 \[
 |\widecheck{b}^\star_k-{b}^\star_k|\leq C_\ep\frac{{\sigma}_k}{\sqrt{n}}G'_{k,T,\delta}
 \]
 with probability at least $1-4e^{-2\delta}$.
\end{proof}

If $T \leq 20\ln(n)$, Lemma \ref{lem:median-entry-combo} follows directly from \eqref{eqn:gjoqwnqgqw-dsad} and \eqref{eqn:gjoqwnqgqw-dsad-dsad}. 
Now, given Lemma \ref{lem:median-funda} and Lemma \ref{lem:median-funda-2}, we turn to prove Lemma \ref{lem:median-entry-combo} with $T\geq20\ln(n)$. We first check condition \eqref{eqn:t-cond}.  Since $|\cB_k|\leq \eta T$ for any $k\in \cI_\eta$, $\eta \leq \frac{1}{5}$, and $\gamma_k\sigma_k\leq 1$, we have for each $k\in\cI_\eta$ that 
\begin{align*}
    G'_{k,T,\delta}\leq \eta+\frac{10^{-8}}{Tn}+\sqrt{\frac{5\delta}{4T}}+\frac{0.4748}{\sqrt{n}\sigma_k}.
\end{align*}
When $T\geq20\ln(n)$, for any $k\in[d]$ such that $\sigma_k\geq \frac{20}{\sqrt{n}(1-2\eta)}$, taking $\delta = \ln(n)$ above, we have 
\begin{align*}
    G'_{k,T,\delta}\leq \eta+10^{-8}+\frac{1}{4}+0.4748\frac{1-2\eta}{20}\leq \frac{1}{2}-0.035755.
\end{align*}
Therefore, condition \eqref{eqn:t-cond} in Lemma \ref{lem:median-funda-2} is satisfied with $\ep=0.035755$, for which we can check that $C_\ep \leq 13$. 
Thus, for any $\delta \leq \ln(n)$,
\begin{equation}\label{eqn:fhaunvqw}
    \PP\left(|\widecheck{b}^\star_k-{b}^\star_k|\geq 13 \frac{\sigma_k}{\sqrt{n}}G_{k,T,\delta}+\frac{13}{n}\right)\leq 4e^{-2\delta}.
\end{equation}
Therefore, by \eqref{eqn:fhaunvqw}, we have, using that for any random variable $0\leq X\leq 1$ and
any $0\leq r\leq 1$, $\E[X] \leq r + \PP(X\geq r)$,
and that for $\delta = (\ln n)/2$, one has $4e^{-2\delta} = 4/n$, we find
\begin{align}
     &\E[|\widecheck{b}^\star_k-{b}^\star_k|]\leq   13 \frac{\sigma_k}{\sqrt{n}}G_{k,T,(\ln n)/2}+\frac{17}{n}
     =\tilde{O}\left(\frac{{\sigma}_k}{\sqrt{n}}\frac{|\cB_k|}{T}+\frac{\sigma_k}{\sqrt{nT}}+\frac{1}{n}\right)\label{eqn:voqwjfwdas}.
\end{align}
Similarly, by the Cauchy-Schwarz inequality, we also have 
\begin{align}
     \E[(\widecheck{b}^\star_k-{b}^\star_k)^2]=&O\left( \frac{\sigma_k^2}{n}\left(\frac{|\cB_k|^2}{T^2}+\frac{\ln(n)}{T-|\cB_k|}\right)+\frac{1}{n^2}+e^{-2\ln(n)}\right)\nonumber\\
    =&\tilde{O}\left(\frac{{\sigma}_k^2}{{n}}\frac{|\cB_k|^2}{T^2}+\frac{\sigma_k^2}{{nT}}+\frac{1}{n^2}\right)\label{eqn:voqwjfwdas-2}.
\end{align}

On the other hand, for any $k\in[d]\backslash \cB_k$ such that $\sigma_k< \frac{20}{\sqrt{n}(1-2\eta)}$, by Bernstein's inequality  and a union bound, we have 
\begin{align}\label{eqn:gvhoqnqfwwr}
    \PP\left(\max\limits_{k\in[T]\backslash\cB_k}|\widecheck{b}^t_k-{b}^\star_k|> \delta\right)\leq 2(T-|\cB_k|)e^{-\frac{n}{4}\min\{\delta^2/\sigma_k^2,\delta\}}\leq2Te^{-\frac{n}{4}\min\{\frac{ n(1-2\eta)^2\delta^2}{400},\delta\}} .
\end{align}
Since $|[T]\backslash \cB_k|>\frac{T}{2}$, we have as before that $|\widecheck{b}^\star_k-{b}^\star_k|\leq \max\limits_{t\in[T]\backslash\cB_k}|\widecheck{b}^t_k-{b}^\star_k|$. Taking 
\[\delta = 4\max\{\ln(Tn^2), 10\sqrt{\ln(Tn^2)}\}/n\] in \eqref{eqn:gvhoqnqfwwr}, with the same steps as above, we find
\begin{align}\label{eqn:gjoqwnqgqw}
    \E[|\widecheck{b}^\star_k-{b}^\star_k|]\leq& \E[\max\limits_{k\in[T]\backslash\cB_k}|\widecheck{b}^t_k-{b}^\star_k|]\leq \delta + 2Te^{-\frac{n}{4}\min\{\frac{ (1-2\eta)^2n\delta^2}{400},\delta\}}\nonumber\\
    \leq& \frac{4\max\{\ln(Tn^2), 10\sqrt{\ln(Tn^2)}\}+2}{n}=\tilde{O}\left(\frac{1}{n}\right)
\end{align}
and 
\begin{align}\label{eqn:gjoqwnqgqw-2}
    \E[(\widecheck{b}^\star_k-{b}^\star_k)^2]\leq \delta^2 + 2Te^{-\frac{n}{4}\min\{\frac{ (1-2\eta)^2n\delta^2}{400},\delta\}}=\tilde{O}\left(\frac{1}{n^2}\right).
\end{align}
To summarize, combining \eqref{eqn:voqwjfwdas}, \eqref{eqn:voqwjfwdas-2} with   \eqref{eqn:gjoqwnqgqw},\eqref{eqn:gjoqwnqgqw-2}, we complete the proof when $T > 20\ln(n)$.

Furthermore, by using $\sum_{k\in[d]}\sigma_k^q=O(d/2^{bq/2})$ for $q=1,2$, we directly reach Proposition \ref{prop:median-global-combo}.

\subsection{Proof of Theorem \ref{thm:median-collab-final-combo}}
We first consider the case where $T \leq 20\ln(n)$. By definition, $\widehat{b}^t_k$ is either equal to $\widecheck{b}^t_k$ or $\widecheck{b}^\star_k$, and the latter happens only when $k\in\cK^t_\alpha$, \ie, $ |\widecheck{b}^\star_k-\widecheck{b}^t_k|\leq \sqrt{{\alpha\widecheck{b}^t_k}/{n} }$. In this case, we have
\begin{align*}
    |\widehat{b}^t_k-{b}^t_k|&=|\widecheck{b}^\star_k-{b}^t_k|\leq |\widecheck{b}^t_k-{b}^t_k|+|\widecheck{b}^\star_k-\widecheck{b}^t_k|
    \leq |\widecheck{b}^t_k-{b}^t_k|+\sqrt{\frac{\alpha\widecheck{b}^t_k}{n} }.
\end{align*}
Therefore, we have $|\widehat{b}^t_k-{b}^t_k|\leq |\widecheck{b}^t_k-{b}^t_k|+\sqrt{{\alpha\widecheck{b}^t_k}/{n} }$ for all $k\in[d]$.
This leads to 
\begin{align}
    \E[\|\widehat{\bb}^t-{\bb}^t\|_1]\leq& \E[\|\widecheck{\bb}^t-{\bb}^t\|_1]+\sqrt{\frac{\alpha}{n}}\sum_{k\in[d]}\E\left[\sqrt{\widecheck{b}^t_k}\right]\nonumber\\
    \leq& \E[\|\widecheck{\bb}^t-{\bb}^t\|_1]+\sqrt{\frac{\alpha}{n}}\sum_{k\in[d]}\sqrt{\E[\widecheck{b}^t_k]}\label{eqn:voqwdq},
\end{align}
where \eqref{eqn:voqwdq} holds by Jensen's inequality.
By further using the Cauchy–Schwarz inequality, we have 
\begin{equation}\label{eqn:gvjonweqf}
    \E[\|\widecheck{\bb}^t-{\bb}^t\|_1]\leq \sqrt{d\,\E[\|\widecheck{\bb}^t-{\bb}^t\|_2^2]}=O\left(\frac{d}{\sqrt{2^bn}}\right)
\end{equation}
and 
\begin{equation}\label{eqn:gvjonweqffsaf}
    \sum_{k\in[d]}\sqrt{\E[\widecheck{b}^t_k]}=\sum_{k\in[d]}\sqrt{{b}^t_k}\leq \sqrt{d\,\sum_{k\in[d]}b^t_k}=O\left(\frac{d}{\sqrt{2^b}}\right).
\end{equation}
Plugging \eqref{eqn:gvjonweqf} and \eqref{eqn:gvjonweqffsaf} into \eqref{eqn:voqwdq}, we find 
\begin{equation*}
     \E[\|\widecheck{\bb}^t-{\bb}^t\|_1]=\tilde{O}\left(\frac{d}{\sqrt{2^bn}}\right)=\tilde{O}\left(\frac{d}{\sqrt{2^bTn}}\right).
\end{equation*}
We can similarly prove
\begin{equation*}
     \E[\|\widecheck{\bb}^t-{\bb}^t\|_2^2]=\tilde{O}\left(\frac{d}{{2^bn}}\right)=\tilde{O}\left(\frac{d}{{2^bTn}}\right).
\end{equation*}
Next we prove the case where $T\geq 20\ln(n)=\Omega(\ln(n))$. We first consider the estimation errors over  $k\in\cI_\eta \cap \cI^t$ such that $\sigma_k\geq \frac{20}{\sqrt{n}(1-2\eta)}$.  
Let $\cE_k^t:=\{\widecheck{b}^{t}_k\geq  \frac{1}{2}{b}^{t}_{k}\text{ and }|\widecheck{b}^\star_k-{b}^\star_k|\leq8 \sqrt{{ {b}^\star_k}/{n}}\}$. 
If $n\geq 2^{b+6}\ln(n)$ and $0<\eta\leq 1/5$, then since ${b}^\star_k\geq \frac{1}{2^b}$ for any $k\in[d]$, we have 
\begin{align*}
    &13\frac{\sigma_k}{\sqrt{n}}
G_{k,T,\ln n}
+\frac{13}{n}=13\frac{\sigma_k}{\sqrt{n}}\left(\frac{|\cB_k|}{T}+\frac{10^{-8}}{Tn}+\sqrt{\frac{\ln(n)}{T-|\cB_k|}}\right)+\frac{13}{n}\\
\leq& 13\frac{\sigma_k}{\sqrt{n}}\left(\frac{|\cB_k|}{T}+\frac{10^{-8}}{Tn}+\sqrt{\frac{5\ln(n)}{4T}}\right)+\frac{13}{n}\leq 13\frac{{\sigma_k}}{\sqrt{n}}\left(\frac{1}{5}+10^{-8}+\frac{1}{4}\right)+\frac{13}{\sqrt{n 2^{b+6}\ln(n)}}\\
\leq &13\frac{{\sigma_k}}{\sqrt{n}}\left(\frac{1}{5}+10^{-8}+\frac{1}{4}\right)+\frac{13\sqrt{b^\star_k}}{\sqrt{n 64\ln(n)}}\leq 8\sqrt{\frac{{b}^\star_k}{n}}.
\end{align*}
Hence, by \eqref{eqn:fhaunvqw}, it holds that
\begin{equation}\label{eqn:fqwoindfqw}
    \PP\left(|\widecheck{b}^\star_k-{b}^\star_k|\geq8\sqrt{\frac{ {b}^\star_k}{n}}\right)\leq \frac{4}{n^2}.
\end{equation}
By Bernstein's inequality and as ${b}^\star_k\geq \frac{1}{2^b}$, we have 
\begin{align}
    \PP\left(|\widecheck{b}^t_k-{b}^t_k|>\frac{b^t_k}{2}\right)&\leq 2e^{-\frac{n}{4}\min\{\frac{b^t_k}{4(1-b^t_k)},\frac{b^t_k}{2}\}}\leq 2e^{-\frac{nb^t_k}{16}}
    \leq 2e^{-\frac{n}{16\cdot 2^b}}\leq \frac{2}{n^2},\label{eqn:gjoqwnfq}
\end{align}
where the last inequality holds because $n\geq 2^{b+6}\ln(n)$.
Combining \eqref{eqn:gjoqwnfq} with \eqref{eqn:fqwoindfqw}, we find $\PP((\cE_k^t)^c)\leq\frac{6}{n^2}$. By definition, $k\notin {\cK}_\alpha^t$ implies $ |\widecheck{b}^\star_k-\widecheck{b}^t_k|>\sqrt{{\alpha \widecheck{b}^t_k}/{n}}$. 
On the event $\cE_k^t$, this further implies $ |\widecheck{b}^\star_k-\widecheck{b}^t_k|>\sqrt{{\alpha {b}^t_k}/{2n}}$. Combined with \eqref{eqn:fqwoindfqw} and that ${b}^\star_k=b^t_k$ for any $k\in \cI^t$, we have on the event $\cE_k^t$
\begin{align}\label{eqn:vjqonvqofa}
     \left|\widecheck{b}^{t}_k-{b}^t_k\right|=&\left|\widecheck{b}^{t}_k-{b}^\star_k\right|\geq \left|\widecheck{b}^{t}_k-\widecheck{b}^\star_k\right|-\left|\widecheck{b}^\star_k-{b}^\star_k\right|
     >
     \sqrt{\frac{{b}^t_k}{n}}\left(\sqrt{\frac{\alpha}{2}}-8\right).
\end{align}
Let $\zeta \triangleq\sqrt{{\alpha}/{2}}-8\geq \sqrt{8\ln(n)}$ and $\cF_k^t:=\left\{\left|\widehat{b}^{t}_k-{b}^t_k\right|\geq\zeta \sqrt{{b^t_k}/{n}}\right\}$.
 By Bernstein's inequality, and using  $n\geq 2^{b+6}\ln(n)$, we have 
\begin{align}
    \PP(\cF_k^t)\leq 2e^{-\frac{n}{4}\min\{\frac{\zeta^2}{n(1-b^t_k)},\zeta\sqrt{\frac{b^t_k}{n}}\}}\leq 2e^{-\min\{\frac{\zeta^2}{4},\frac{\zeta}{4}\sqrt{\frac{n}{2^b}}\}}\label{eqn:vjoqwndaa2}\leq \frac{2}{n^2 }.
\end{align}
Combining \eqref{eqn:vjqonvqofa} with \eqref{eqn:vjoqwndaa2}, we find that for any $k\in\cI_\eta\cap\cI^t$ with $\sigma_k\geq \frac{20}{\sqrt{n}(1-2\eta)}$, it holds that
\begin{align*}
    &\PP(k\notin {\cK}^t_\alpha)\leq\PP((\cE_k^t)^c)+\PP(\cE_k^t\cap \{k\notin {\cK}^t_\alpha\})\leq \PP((\cE_k^t)^c)+\PP(\cE_k\cap\cF_k^t )\\
    \leq&  \PP((\cE_k^t)^c)+\PP(\cF_k^t )\leq \frac{8}{n^2}.
\end{align*}
On the other hand for any $k\in\cI_\eta\cap\cI^t$ with $\sigma_k< \frac{20}{\sqrt{n}(1-2\eta)}$, we have 
\begin{equation*}
     \sqrt{\frac{b^t_k(1-b^t_k)}{n}}=\sqrt{\frac{b^\star_k(1-b^\star_k)}{n}}=\frac{\sigma_k}{\sqrt{n}}=O\left(\frac{1}{n}\right).
\end{equation*}
Therefore, we have for all $k\in\cI_\eta\cap\cI^t$, and $q=1,2$
\begin{equation}\label{eqn:fjfoqmqffq}
    \min\left\{\PP(k\notin {\cK}^t_\alpha),\left(\frac{b^t_k(1-b^t_k)}{n}\right)^{q/2}
    \right\}=O\left(\frac{1}{n^q}\right).
\end{equation}
Since $\alpha=O(\ln(n)) $, by \eqref{eqn:fjfoqmqffq} and Proposition \ref{prop:main-analysis}, we obtain
\begin{align}\label{eqn:gjoegnqogq}
     \E[\|\widehat{\bb}^t-\bb^t\|_1]=\tilde{O}\left(\sum_{k\notin \cI_\eta\cap \cI^t }\sqrt{\frac{ b_k^t}{n}}+ \E[\|\widecheck{b}^\star_{\cI_\eta\cap \cI^t }-{b}^\star_{\cI_\eta\cap \cI^t}\|_1]+\frac{d}{n}\right).
\end{align}
Combining \eqref{eqn:gjoegnqogq} with Proposition \ref{prop:median-global-combo} and using that $\sigma_k\leq \sqrt{b_k^\star}=\sqrt{b_k^t}$ for any $k\in\cI^t$, we have 
\begin{align}\label{eqn:gjoegnqogq-fruther}
     &\E[\|\widehat{\bb}^t-\bb^t\|_1]=\tilde{O}\left(\sum_{k\notin \cI_\eta\cap \cI^t }\sqrt{\frac{ b_k^t}{n}}+\sum_{k\in\cI_\eta\cap\cI^t} \frac{|\cB_k|}{T}\sqrt{\frac{b^t_k}{n}}+E(1)\right).
\end{align}
Since $|(\cI^t)^c|=\|\bp^t-\bp^\star\|_0\leq s$, by the Cauchy-Schwarz inequality, we have
\begin{align}
    &\sum_{k\notin \cI_\eta\cap \cI^t }\sqrt{\frac{ b_k^t}{n}}\leq \sum_{k\notin \cI_\eta }\sqrt{\frac{ b_k^t}{n}}+\sum_{k\notin \cI^t }\sqrt{\frac{ b_k^t}{n}}\leq \sum_{k\notin \cI_\eta }\sqrt{\frac{ b_k^t}{n}}+\sqrt{\frac{s\sum_{k\notin \cI^t } b^t_k}{n}}\nonumber\\
   \leq & \sum_{k\notin \cI_\eta }\sqrt{\frac{ b_k^t}{n}}+\sqrt{\frac{s\sum_{k\notin \cI^t }((2^b-1)p^t_k+1)}{2^bn}}
   \leq \sum_{k\notin \cI_\eta }\sqrt{\frac{ b_k^t}{n}}+\sqrt{\frac{s(2^b-1+s)}{2^bn}}.\label{eqn:voendasdfqqw}
\end{align}
Plugging \eqref{eqn:voendasdfqqw} into \eqref{eqn:gjoegnqogq}, we further obtain 
\begin{align}\label{eqn:gjoegnqogq-gdjgfasj}
     \E[\|\widehat{\bb}^t-\bb^t\|_1]=\tilde{O}\left(\sum_{k\notin \cI_\eta}\sqrt{\frac{ b_k^t}{n}}+\sum_{k\in\cI_\eta} \frac{|\cB_k|}{T}\sqrt{\frac{b^t_k}{n}}+\sqrt{\frac{s\max\{2^b,s\}}{2^bn}}+E(1)\right).
\end{align}
Similarly, we can reach the following $\ell_2$ counterpart:
\begin{align}\label{eqn:gjoegnqogq-further-l2}
    \E[\|\widehat{\bb}^t-\bb^t\|_2^2]=\tilde{O}\left(\sum_{k\notin \cI_\eta}{\frac{ b_k^t}{n}}+\sum_{k\in\cI_\eta }\frac{|\cB_k|^2}{T^2}\frac{b_k^t}{n}+\frac{\max\{2^b,s\}}{2^bn}+E(2)\right).
\end{align}

Note that $\sum_{k\in[d]}|\cB_k|/T\leq s$ and for any set $\cI$ with $|\cI|=\lceil \frac{s}{\eta}\rceil$,
\[
\sum_{k\in\cI }\sqrt{\frac{ b_k^t}{n}}\leq \sqrt{\frac{|\cI|\sum_{k\in\cI}((2^b-1)p^t_k+1)}{2^bn}}=O\left(\sqrt{\frac{s/\eta\max\{2^b,s/\eta\}}{2^bn}}\right).
\]
Now, recalling the definition of $\cI_\eta$, 
we apply Lemma \ref{lem:f-g} in \eqref{eqn:gjoegnqogq-gdjgfasj} with $(r_k,x_k)=(\sqrt{b^t_k/n},|\cB_k|/T)$ for all $k\in[d]$, to find 
\begin{align*}
     \E[\|\widehat{\bb}^t-\bb^t\|_1]=\tilde{O}\left(\sqrt{\frac{s/\eta\max\{2^b,s/\eta\}}{2^bn}}+E(1)\right).
\end{align*}
Therefore, for any $\eta = \Theta(1)$ with $\eta\leq \frac{1}{5}$, we finally have 
\begin{align*}
     \E[\|\widehat{\bb}^t-\bb^t\|_1]=\tilde{O}\left(\sqrt{\frac{s\max\{2^b,s\}}{2^bn}}+E(1)\right).
\end{align*}
Similarly, by combining \eqref{eqn:gjoegnqogq-further-l2} with Lemma \ref{lem:f-g}, we have for any $\eta = \Theta(1)$ with $\eta\leq \frac{1}{5}$,
\begin{align*}
    \E[\|\widehat{\bb}^t-\bb^t\|_2^2]=\tilde{O}\left(\frac{\max\{2^b,s\}}{2^bn}+E(2)\right).
\end{align*}
The result directly follows Proposition \ref{prop:debias}.

\begin{lemma}\label{lem:f-g}
Given $\eta\in(0,1]$, $r_k\geq 0$ for all $k\in[d]$,
and for $q=1,2$,
consider the functions $f_q: \{x\in\R^d: 0\leq x_k\leq 1,\,\forall\, k\in[d] \text{ and }\sum_{k\in[d]}x_k\leq s\}\to\R$,  $f_q(x_1,\dots,x_d):=\sum_{k\in[d]}r_k^q(\mathds{1}\{x_k\geq \eta \}+x_k^q\mathds{1}\{x_k<\eta \})$.
Then it holds that 
\begin{equation}\label{eqn:gvjoweqfgedgsd}
    \max_{x_1,\dots,x_d}f_q(x_1\dots,x_d)\leq \sum_{k=1}^{\lceil s/\eta \rceil}r_{(k)}^q,
\end{equation}
where $r_{(1)}\geq \cdots \geq r_{(d)}$ is the non-decreasing rearrangement of $\{r_1,\dots,r_d\}$.
\end{lemma}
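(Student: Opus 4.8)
The plan is to write the objective as a weighted sum $f_q(x)=\sum_{k\in[d]} r_k^q\, g_q(x_k)$ with per-coordinate ``weights'' $g_q(x):=\mathds{1}\{x\ge\eta\}+x^q\mathds{1}\{x<\eta\}$, to show these weights lie in $[0,1]$ and have total mass at most $s/\eta$ on the feasible set, and then to observe that the worst case is a fractional-knapsack linear program whose optimum is exactly the right-hand side of \eqref{eqn:gvjoweqfgedgsd}.

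First I would record two elementary facts about $g_q$ on $[0,1]$, valid for $q\in\{1,2\}$: (i) $0\le g_q(x)\le 1$, since for $x<\eta\le 1$ one has $x^q\le\eta^q\le\eta\le 1$; and (ii) $g_q(x)\le \mathds{1}\{x\ge\eta\}+x\,\mathds{1}\{x<\eta\}$, since $x^q\le x$ on $[0,1]$. The key quantitative step is the total-weight bound
\[
\sum_{k\in[d]} g_q(x_k)\ \le\ \frac{s}{\eta}\qquad\text{for every feasible }x.
\]
To prove it I would split $[d]=A\cup B$ with $A=\{k:x_k\ge\eta\}$ and $B=\{k:x_k<\eta\}$. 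By (ii), $\sum_k g_q(x_k)\le |A|+\sum_{k\in B}x_k$. From $\eta|A|\le\sum_{k\in A}x_k$ and $\sum_{k\in A}x_k+\sum_{k\in B}x_k\le s$ we get $\sum_{k\in B}x_k\le s-\eta|A|$, hence $|A|+\sum_{k\in B}x_k\le s+(1-\eta)|A|$; and $|A|\le s/\eta$ because $\eta|A|\le\sum_k x_k\le s$. Combining, $|A|+\sum_{k\in B}x_k\le s+(1-\eta)\tfrac{s}{\eta}=\tfrac{s}{\eta}$. The point requiring care here is that one must use the constraint $\sum_k x_k\le s$ to control $A$ and $B$ \emph{simultaneously}, not just one of the two index sets.

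Next, setting $w_k:=g_q(x_k)$, we have shown $w\in[0,1]^d$ with $\sum_k w_k\le s/\eta$, so
\[
f_q(x)=\sum_{k\in[d]} r_k^q w_k\ \le\ \max\Bigl\{\textstyle\sum_{k\in[d]} r_k^q u_k:\ u\in[0,1]^d,\ \sum_{k\in[d]} u_k\le s/\eta\Bigr\}.
\]
This is a continuous knapsack problem with unit item sizes; by a standard exchange argument its maximum is attained by the greedy allocation that sets $u_k=1$ on the $\lfloor s/\eta\rfloor$ coordinates with the largest $r_k^q$ and $u_k=s/\eta-\lfloor s/\eta\rfloor$ on the next one. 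Since $r\mapsto r^q$ is nondecreasing, ``largest $r_k^q$'' agrees with the rearrangement $r_{(1)}\ge\cdots\ge r_{(d)}$, so the greedy value is $\sum_{k=1}^{\lfloor s/\eta\rfloor} r_{(k)}^q+(s/\eta-\lfloor s/\eta\rfloor)\,r_{(\lfloor s/\eta\rfloor+1)}^q\le\sum_{k=1}^{\lceil s/\eta\rceil} r_{(k)}^q$, which is \eqref{eqn:gvjoweqfgedgsd}; when $s/\eta\ge d$ the right-hand side is read as $\sum_{k\in[d]} r_{(k)}^q$ and the inequality is trivial. I do not expect a genuine obstacle: the only non-routine step is the total-weight bound above, and in the reduction to the LP one should simply note that relaxing $g_q(x_k)$ to a free variable $u_k\in[0,1]$ with the same mass constraint only enlarges the feasible region and hence can only increase the supremum.
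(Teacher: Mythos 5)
Your proof is correct, and it takes a genuinely different route from the paper's. The paper argues directly on the nonlinear objective: it uses the rearrangement inequality to reduce to co-monotonically sorted $r_k$ and $x_k$, then runs a pairwise exchange argument (Steps S.1 and S.2 in the appendix) to show the maximum is attained at the explicit configuration $x_1=\cdots=x_{\lfloor s/\eta\rfloor}=\eta$, $x_{\lfloor s/\eta\rfloor+1}=s-\eta\lfloor s/\eta\rfloor$, and $x_k=0$ otherwise, finally bounding that value by the claimed sum. You instead isolate one clean inequality -- the total-weight bound $\sum_{k\in[d]} g_q(x_k)\le s/\eta$ for every feasible $x$ -- which linearizes the problem: any achievable weight vector $(g_q(x_k))_k$ is feasible for a fractional knapsack LP with unit item sizes and budget $s/\eta$, whose optimum is read off by the greedy rule and is at most $\sum_{k=1}^{\lceil s/\eta\rceil} r_{(k)}^q$. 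Your version buys modularity and avoids the bookkeeping of verifying that each exchange preserves the sorting and does not decrease $f_q$; it also makes plain that the mechanism behind the lemma is exactly the budget $s/\eta$ on the indicator-plus-linear weights. The paper's version, in exchange, pins down the exact maximizer rather than only an upper bound, but since the lemma is a one-sided inequality that extra information is not used. Both arguments are sound; yours is arguably the cleaner derivation of the stated bound.
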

\begin{proof}
We only prove the result for $f_1$, and the result for function $f_2$ follows similarly.
Note that $r_k(\mathds{1}\{x_k\geq \eta \}+x_k\mathds{1}\{r_k\geq \eta \})$ is increasing with respect to $r_k$ and $x_k$. To consider the maximum of the sum in $f$, 
by the rearrangement inequality, without loss of generality, we can assume  $r_1\geq r_2\geq \cdots\geq r_d\geq0$ and $1\geq x_1\geq x_2\geq \cdots\geq x_d\geq0$. In this case, we claim that the maximum is attained at $x_1=\cdots=x_{\lfloor s/\eta \rfloor}=\eta$, $x_{\lfloor s/\eta \rfloor+1}=s-\eta\lfloor s/\eta \rfloor $, and $x_k=0$ for all $k>\lfloor s/\eta \rfloor+1$.
Further, the maximum is $\sum_{k=1}^{\lfloor s/\eta \rfloor}r_k+r_{\lfloor s/\eta \rfloor+1}(s-\eta\lfloor s/\eta \rfloor)^2$, which is upper bounded by the right-hand side of \eqref{eqn:gvjoweqfgedgsd}. We now use the exchange argument to prove the claim.
\begin{itemize}
    \item[\quad S. 1] If there is some $k$ such that $x_k>\eta \geq x_{k+1}$, then defining $x'$ by letting $(x_k^\prime ,x_{k+1}^\prime )=(\eta, x_k+x_{k+1}-\eta)$ while for other $j$, $x'_j=x_j$, increases the value of $f$. Therefore, the maximum is attained by $x$ such that  for some $j$, $x_1 = \dots = x_j = \eta > x_{j+1} \geq  \cdots \geq x_d$.
    \item[S. 2] If  there is some $k$ such that $\eta >x_k\geq   x_{k+1}>0$, then defining $x'$ by letting  $(x_k^\prime ,x_{k+1}^\prime )=(\min\{\eta,x_k+x_{k+1}\}, \max\{0,x_k+x_{k+1}-\eta\})$  while for other $j$, $x'_j=x_j$, increases the value of $f$. 
    Therefore, combined with Step 1, the maximum is attained by $x$ such that  for some $j$,  $x_1 = \dots = x_j = \eta > x_{j+1} \geq  0$ and $x_k=0$ for all $k>j+1$. 
    Thus most one element lies in $(0,\eta)$.
\end{itemize}
Combining S. 1 and S. 2 above, we complete the proof of the claim, which further leads to \eqref{eqn:gvjoweqfgedgsd}.
\end{proof}

\section{Trimmed-Mean-Based Method}\label{app:trimmed-mean}
In this section, we study the trimmed-mean-based estimator. 
Fix $\omega \in (0,1/2)$.
Specifically, 
for each $k\in[d]$, let $\cU_k$ be the subset of $\{[\widecheck{\bp}^t]_{t\in [T]}\}$ obtained by removing the largest and smallest $\omega T$ elements\footnote{To be precise, one can either trim $\lceil \omega T\rceil$ or $\lfloor \omega T\rfloor$ elements. From now on, we write $\omega T$ for conciseness without further notice.}. 
Then, the trimmed-mean-based method can be expressed as 
\begin{equation}\label{eqn:dsajifa}
    \widecheck{b}^\star_k= \frac{1}{|\cU_k|}\sum_{t\in\cU_k}\widecheck{b}^t_k.
\end{equation}
We also write $\widecheck{\bb}^\star =\trmean\big(\{\widecheck{\bb}^t\}_{t\in[T]},\omega\big)$.
For any chosen trimming proportion $0\leq \eta \leq \omega\leq \frac{1}{5}$, we control the estimation error of each $\eta$-well aligned entry. 
Intuitively, this is small because there are at most a fraction of $\eta$ elements from heterogeneous distributions. These are trimmed if they behave as outliers, and otherwise kept in $\cU_k$. 
The error control for a single entry $k\in\cI_{\eta}$ is in Lemma \ref{lem:fund-trim}.
\begin{lemma}\label{lem:fund-trim}
Suppose $\widecheck{\bb}^\star =\trmean\big(\{\widecheck{\bb}^t\}_{t\in[T]},\omega\big) $ such that $0\leq \omega\leq \frac{1}{5}$. Then for each $k\in\cI_\eta$ with $0<\eta \leq \omega$ and any $q=1,2$, it holds that
\begin{align}\label{eqn:gvjoqmfqw}
    \E[|\widecheck{b}^\star_k-{b}^\star_k|^q]=\tilde{O}\left(\left(\omega^2\frac{b_k^\star}{n}\right)^{q/2}+\left(\frac{b_k^\star}{Tn}\right)^{q/2}+\frac{1}{(Tn)^q}+\left(\frac{\omega}{n}\right)^q\right).
\end{align}
\end{lemma}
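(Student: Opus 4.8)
The plan is to run the classical order‑statistics analysis of the trimmed mean (in the spirit of Lemma~1 of \cite{Yin2018ByzantineRobustDL}), adapted to our scaled‑Binomial noise. Fix $k\in\cI_\eta$ and let $G:=[T]\setminus\cB_k$ be the set of clusters aligned with the center at coordinate $k$, so $|G|=T-|\cB_k|\ge(1-\eta)T$ and, for $t\in G$, the variables $n\widecheck{b}^t_k$ are independent $\Bin(n,b^\star_k)$ (Proposition~\ref{lem:uniform-hashing-basic}), with variance $\sigma_k^2=b^\star_k(1-b^\star_k)\le b^\star_k$. Since every $\widecheck{b}^t_k\in[0,1]$ we have $|\widecheck{b}^\star_k-b^\star_k|\le1$, so by $\E[|\widecheck{b}^\star_k-b^\star_k|^q]\le\delta^q+\PP(|\widecheck{b}^\star_k-b^\star_k|>\delta)$ it suffices to exhibit $\delta=\tilde{O}\big(\sqrt{b^\star_k/(Tn)}+1/(Tn)+\omega\sqrt{b^\star_k/n}+\omega/n\big)$ with $\PP(|\widecheck{b}^\star_k-b^\star_k|>\delta)=O((Tn)^{-2})$; the failure probability then supplies exactly the $1/(Tn)^q$ term in \eqref{eqn:gvjoqmfqw}. (When $\omega T<1$ no trimming occurs and $|\cB_k|<\eta T\le\omega T<1$ forces $|\cB_k|=0$, so $\widecheck{b}^\star_k$ is the plain mean of clean estimates, covered by Step~3(i) below.)

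\emph{Step 1 (deterministic sandwiching).} Let $Y_{(1)}\le\cdots\le Y_{(|G|)}$ be the order statistics of $\{\widecheck{b}^t_k:t\in G\}$ and $X_{(1)}\le\cdots\le X_{(T)}$ those of $\{\widecheck{b}^t_k:t\in[T]\}$. A counting argument (each value has at most $|\cB_k|$ corrupted entries below it) gives $Y_{(r-|\cB_k|)}\le X_{(r)}\le Y_{(r)}$ whenever both sides are defined. Because $\cU_k$ is precisely $\{X_{(\omega T+1)},\dots,X_{(T-\omega T)}\}$ and $|\cB_k|<\eta T\le\omega T$, every rank that appears below lies in $[1,|G|]$, and
\begin{equation*}
\frac{1}{|\cU_k|}\sum_{r=\omega T+1-|\cB_k|}^{T-\omega T-|\cB_k|}Y_{(r)}\ \le\ \widecheck{b}^\star_k\ \le\ \frac{1}{|\cU_k|}\sum_{r=\omega T+1}^{T-\omega T}Y_{(r)} .
\end{equation*}
Hence it is enough to bound $\big|\tfrac{1}{|\cU_k|}\sum_{r\in W}Y_{(r)}-b^\star_k\big|$ for an arbitrary window $W\subseteq[1,|G|]$ of $|\cU_k|=(1-2\omega)T$ consecutive ranks.

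\emph{Step 2 (reduction to clean mean and clean extremes).} With $\bar Y:=\tfrac{1}{|G|}\sum_{t\in G}\widecheck{b}^t_k$ and $m:=|G|-|\cU_k|=2\omega T-|\cB_k|\in(\omega T,2\omega T]$, writing $\sum_{r\in W}Y_{(r)}=|G|\bar Y-\sum_{r\notin W}Y_{(r)}$ yields
\begin{equation*}
\frac{1}{|\cU_k|}\sum_{r\in W}Y_{(r)}-b^\star_k=(\bar Y-b^\star_k)+\frac{1}{|\cU_k|}\sum_{r\in[|G|]\setminus W}(\bar Y-Y_{(r)}),
\end{equation*}
so, using $m/|\cU_k|=O(\omega)$ since $\omega\le1/5$,
\begin{equation*}
\Big|\tfrac{1}{|\cU_k|}\textstyle\sum_{r\in W}Y_{(r)}-b^\star_k\Big|\ \le\ O(1)\,|\bar Y-b^\star_k|+O(\omega)\max_{t\in G}|\widecheck{b}^t_k-b^\star_k| .
\end{equation*}
\emph{Step 3 (concentration).} (i) As $|G|n\,\bar Y\sim\Bin(|G|n,b^\star_k)$, Bernstein's inequality (Lemma~\ref{b}, with $M=1$, variance $\sigma_k^2\le b^\star_k$, $|G|=\Theta(T)$) gives $|\bar Y-b^\star_k|=\tilde{O}\big(\sqrt{b^\star_k/(Tn)}+1/(Tn)\big)$ with probability $\ge1-O((Tn)^{-2})$. (ii) Applying Bernstein's inequality to each $\widecheck{b}^t_k$, $t\in G$, and a union bound gives $\max_{t\in G}|\widecheck{b}^t_k-b^\star_k|=\tilde{O}\big(\sqrt{b^\star_k/n}+1/n\big)$ with probability $\ge1-O((Tn)^{-2})$. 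Combining these with Steps~1--2 gives the claimed high‑probability bound on $|\widecheck{b}^\star_k-b^\star_k|$, and hence \eqref{eqn:gvjoqmfqw}.

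\emph{Main obstacle.} The delicate part is Step~1: checking that, even after the $\lceil\cdot\rceil$/$\lfloor\cdot\rfloor$ rounding of $\omega T$ and $\eta T$ (cf.\ the footnote to \eqref{eqn:dsajifa}), every order‑statistic rank appearing in the sandwich stays in $[1,|G|]$ for all $\omega\in(0,1/5]$ and $\eta\in(0,\omega]$. This is precisely where the hypothesis $\eta\le\omega$, i.e.\ $|\cB_k|<\omega T$, is used: it keeps the corruption budget strictly below the trimming level, so every surviving corrupted value is pinned between two clean order statistics. A secondary bookkeeping point is calibrating the logarithmic factors in the two Bernstein bounds so that the combined failure probability is $O((Tn)^{-2})$, which is what yields the $1/(Tn)^q$ contribution to the moment bound.
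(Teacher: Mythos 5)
Your proposal is correct and follows essentially the same strategy as the paper's proof of Lemma~\ref{lem:fund-trim} (via Lemma~\ref{lem:fund-trim-prob}): reduce the trimmed-mean error to a clean-mean term plus an $O(\omega)$-weighted clean-max term, then apply variance-dependent Bernstein bounds to each, choosing the high-probability radii so the combined failure probability contributes only a $1/(Tn)^q$ term. The only material difference is cosmetic: where you establish the "sandwich" via order statistics ($Y_{(r-|\cB_k|)}\le X_{(r)}\le Y_{(r)}$), the paper decomposes $\sum_{t\in\cU_k}$ directly by set membership (splitting off $[T]\setminus(\cB_k\cup\cU_k)$ and $\cB_k\cap\cU_k$) and then shows $\max_{t\in\cU_k}|\widecheck{b}^t_k-b^\star_k|\le\max_{t\in[T]\setminus\cB_k}|\widecheck{b}^t_k-b^\star_k|$ by noting that since $|\cB_k|<\omega T$, at least one clean value is trimmed on each side—which is the same pinning argument you invoke, just phrased without order-statistic indices.
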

\begin{proof} To prove Lemma \ref{lem:fund-trim}, we need the following lemma.
\begin{lemma}\label{lem:fund-trim-prob}
For each $k\in\cI_\eta$ with $0<\eta \leq \omega \leq \frac{1}{5}$, and any $\ep_k,\delta_k\geq 0$,  it holds with probability at least $1-2e^{-\frac{(T-|\cB_k|)n}{4}\min\{\frac{\ep_k^2}{\sigma_k^2},\ep_k\}}-2(T-|\cB_k|)e^{-\frac{n}{4}\min\{\frac{\delta_k^2}{\sigma_k^2},\delta_k\}}$ that
\begin{align*}
    |\widecheck{b}^\star_k-{b}^\star_k|\leq \frac{\ep_k+3 \omega \delta_k}{1-2\omega}.
\end{align*}
\end{lemma}
\begin{proof}[Proof of Lemma \ref{lem:fund-trim-prob}]
By Bernstein's inequality and  the union bound, we have for any $\ep_k,\,\delta_k>0$ that
\begin{align*}
    \PP\left(\left|\frac{1}{T-|\cB_k|}\sum_{t\in[T]\backslash \cB_k}\widecheck{b}^t_k-{b}^\star_k\right|> \ep_k\right)\leq 2e^{-\frac{(T-|\cB_k|)n}{4}\min\{\frac{\ep_k^2}{\sigma_k^2},\ep_k\}}
\end{align*}and
\begin{align*}
    \PP\left(\max\limits_{t\in[T]\backslash\cB_k}|\widecheck{b}^t_k-{b}^\star_k|> \delta_k\right)\leq 2(T-|\cB_k|)e^{-\frac{n}{4}\min\{\frac{\delta_k^2}{\sigma_k^2},\delta_k\}}.
\end{align*}
 By the definition of $\widecheck{b}^\star_k$, we have
\begin{align*}
    &|\widecheck{b}^\star_k-{b}^\star_k|=\frac{1}{T(1-2 \omega)}\left|\sum_{t\in\cU_k}\widecheck{b}^t_k-{b}^\star_k\right|\\
    =&\frac{1}{T(1-2 \omega)}\left|\sum_{t\in[T]\backslash\cB_k}(\widecheck{b}^t_k-{b}^\star_k)-\sum_{t\in[T]\backslash(\cB_k\cup \cU_k)}(\widecheck{b}^t_k-{b}^\star_k)+\sum_{t\in\cB_k\cap \cU_k}(\widecheck{b}^t_k-{b}^\star_k)\right|\\
    \leq& \frac{1}{T(1-2 \omega )}\left(\left|\sum_{t\in[T]\backslash\cB_k}\widecheck{b}^t_k-{b}^\star_k\right|+\left|\sum_{i\notin\cB_k\cup \cU_k}\widecheck{b}^t_k-{b}^\star_k\right|+\left|\sum_{t\in\cB_k\cap \cU_k}\widecheck{b}^t_k-{b}^\star_k\right|\right).
\end{align*}
It is clear that 
\begin{align*}
    \left|\sum_{t\in[T]\backslash(\cB_k\cup \cU_k)}(\widecheck{b}^t_k-{b}^\star_k)\right|\leq |[T]\backslash\cU_k|\max_{t\in[T]\backslash\cB_k}|\widecheck{b}^t_k-{b}^\star_k|=2\omega T\max_{t\in[T]\backslash\cB_k}|\widecheck{b}^t_k-{b}^\star_k|.
\end{align*}
Then we claim that $\left|\sum_{t\in\cB_k\cap \cU_k}\widecheck{b}^t_k-{b}^\star_k\right|\leq |\cB_k|\max_{t\in[T]\backslash\cB_k}|\widecheck{b}^t_k-{b}^\star_k|$. Let $\cQ_{k,\mathrm{l}}$ and $\cQ_{k,\mathrm{r}}$ be the indices of the trimmed elements on the left side and right side, respectively, 
\ie, the smallest and largest $\omega T$ elements among $\{\widecheck{b}^t_k\}_{t\in[T]}$.
If $\cB_k\cap \cU_k\neq \emptyset$, then $|\cU_k\backslash\cB_k|<T(1-2\omega)$.
Furthermore, we have $|\cQ_{k,\mathrm{l}}\cup(\cU_k\backslash\cB_k)|=|\cQ_{k,\mathrm{r}}\cup(\cU_k\backslash\cB_k)|=\omega T+|\cU_k\backslash\cB_k|<T(1- \omega)\leq |[T]\backslash \cB_k|$, 
which leads to $([T]\backslash \cB_k)\cap \cQ_{k,\mathrm{l}}\neq \emptyset$ and $(T\backslash \cB_k)\cap \cQ_{k,\mathrm{r}}\neq \emptyset$.
In conclusion, we have $\max_{t\in\cU_k}|\widecheck{b}^t_k-{b}^\star_k|\leq \max_{t\in [T]\backslash \cB_k}|\widecheck{b}^t_k-{b}^\star_k|$, which completes the proof  of the claim. Therefore, we have 
\begin{align*}
     &|\widecheck{b}^\star_k-{b}^\star_k|
     \leq \frac{1}{T(1-2 \omega)}\left(\left|\sum_{t\in[T]\backslash\cB_k}|\widecheck{b}^t_k-{b}^\star_k|\right|+(2\omega T+|\cB_k|)\max_{t\in[T]\backslash\cB_k}|\widecheck{b}^t_k-{b}^\star_k|\right)
     \leq \frac{\ep_k+3\omega \delta_k}{1-2\omega}
\end{align*}
with probability at least $1-2e^{-\frac{(T-|\cB_k|)n}{4}\min\{\frac{\ep_k^2}{\sigma_k^2},\ep_k\}}-2(T-|\cB_k|)e^{-\frac{n}{4}\min\{\frac{\delta_k^2}{\sigma_k^2},\delta_k\}}$.
\end{proof}
Given Lemma \ref{lem:fund-trim-prob}, by setting
\begin{equation*}
\ep_k= \max\left\{\frac{4\sigma_k\sqrt{\ln(T^2n^2)}}{\sqrt{(T-|\cB_k|)n}},\frac{8\ln(T^2n^2)}{(T-|\cB_k|)n}\right\}=\tilde{O}\left(\frac{\sigma_k}{\sqrt{Tn}}+\frac{1}{Tn}\right)
\end{equation*} 
and
\begin{equation*}
\delta_k=\max\left\{\frac{4\sigma_k\sqrt{\ln(T^2(T-|\cB_k|)n^2)}}{\sqrt{n}}, \frac{4\ln(T^2(T-|\cB_k|)n^2)}{n}\right\}=\tilde{O}\left(\frac{\sigma_k}{\sqrt{n}}+\frac{1}{n}\right), 
\end{equation*}
using that $1/(1-2\omega)\leq \frac{5}{3}$, and recalling $\sigma_k\leq \sqrt{b_k^\star}$,
we have with probability at least $1-\frac{4}{T^2n^2}$ that 
\begin{align}
    &|\widecheck{b}^\star_k-{b}^\star_k|\leq \frac{\ep_k+3 \omega\delta_k}{1-2\omega }\nonumber\\
    \leq& \frac{5\omega}{3} \max\left\{\frac{4\sqrt{b_k^\star\ln(T^3n^2)}}{\sqrt{n}}, \frac{4\ln(T^3n^2)}{n}\right\}+ \frac{5}{3}\max\left\{\frac{4\sqrt{b_k^\star\ln(T^2n^2)}}{\sqrt{(T-|\cB_k|)n}},\frac{4\ln(T^2n^2)}{(T-|\cB_k|)n}\right\}\label{eqn:jfownfqwfrwgwq}\\
    =&\tilde{O}\left(\omega\sqrt{\frac{b_k^\star}{{n}}}+\frac{\omega}{n}+\frac{\sigma_k}{\sqrt{Tn}}+\frac{1}{Tn}\right),\nonumber
\end{align}
which implies 
\begin{align*}
    \E[|\widecheck{b}^\star_k-{b}^\star_k|]=&\tilde{O}\left(\omega\sqrt{\frac{b_k^\star}{{n}}}+\frac{\omega}{n}+\frac{\sigma_k}{\sqrt{Tn}}+\frac{1}{Tn}+\frac{1}{T^2n^2}\right)\\
    =&\tilde{O}\left(\omega\sqrt{\frac{b_k^\star}{{n}}}+\sqrt{\frac{b_k^\star}{Tn}}+\frac{1}{Tn}+\frac{\omega}{n}\right).
\end{align*}
Similarly, we can obtain 
\begin{align*}
    \E[(\widecheck{b}^\star_k-{b}^\star_k)^2]=&\tilde{O}\left(\frac{\omega^2b_k^\star}{{n}}+\frac{\omega^2}{n^2}+\frac{\sigma_k^2}{{Tn}}+\frac{1}{T^2n^2}+\frac{1}{T^2n^2}\right)\\
    =&\tilde{O}\left(\omega^2{\frac{b_k^\star}{{n}}}+{\frac{b_k^\star}{Tn}}+\frac{1}{T^2n^2}+\frac{\omega^2}{n^2}\right).
\end{align*}
\end{proof}

Given these results, we readily establish the following bound on the total error over all $\eta$-well-aligned entries. 
\begin{proposition}\label{prop:trim-global}
Suppose $\widecheck{\bb}^\star =\trmean\big(\{\widecheck{\bb}^t\}_{t\in[T]},\omega\big) $ such that $0\leq \omega\leq 1/5$. Then for each $k\in\cI_\eta$ with $0<\eta \leq \omega$ and any $q=1,2$, it holds that
\begin{align*}
    \E[\|\widecheck{b}^\star_{\cI_\eta}-{b}^\star_{\cI_\eta}\|_q^q]=\tilde{O}\left(d\left(\frac{\omega^2}{{2^b n}}\right)^{q/2}+{\frac{d}{(2^b Tn)^{q/2}}} +\frac{d}{(Tn)^q}+d\left(\frac{\omega}{n}\right)^q\right).
\end{align*}
\end{proposition}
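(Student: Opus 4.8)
The plan is to derive Proposition~\ref{prop:trim-global} directly from the entrywise bound of Lemma~\ref{lem:fund-trim} by summing over the $\eta$-well-aligned coordinates. Since $0<\eta\le\omega\le 1/5$, every $k\in\cI_\eta$ satisfies the hypotheses of Lemma~\ref{lem:fund-trim}, so for $q=1,2$,
\[
\E[|\widecheck{b}^\star_k-{b}^\star_k|^q]=\tilde{O}\!\left(\left(\frac{\omega^2 b_k^\star}{n}\right)^{q/2}+\left(\frac{b_k^\star}{Tn}\right)^{q/2}+\frac{1}{(Tn)^q}+\left(\frac{\omega}{n}\right)^q\right).
\]
Adding these over $k\in\cI_\eta$ and bounding $|\cI_\eta|\le d$ in the two terms that do not involve $b_k^\star$, the whole computation reduces to controlling $\sum_{k\in\cI_\eta}(b_k^\star)^{q/2}\le\sum_{k\in[d]}(b_k^\star)^{q/2}$.

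Next I would record the elementary moment bound $\sum_{k\in[d]}(b_k^\star)^{q/2}=O\!\left(d/2^{bq/2}\right)$ for $q=1,2$. For $q=2$ this is just $\sum_{k\in[d]}b_k^\star=\sum_{k\in[d]}[(2^b-1)p_k^\star+1]/2^b=(2^b-1+d)/2^b$, which is $O(d/2^b)$ under the standing assumption $2^b\ll d$ (i.e.\ $b<\log_2 d$) from Section~\ref{sec:setup}. For $q=1$, Cauchy--Schwarz gives $\sum_{k\in[d]}\sqrt{b_k^\star}\le\sqrt{d\sum_{k\in[d]}b_k^\star}=O(d/2^{b/2})$. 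This is the exact analogue of the identity $\sum_{k\in[d]}\sigma_k^q=O(d/2^{bq/2})$ used for the median-based estimator (Proposition~\ref{prop:median-global-combo}), and indeed it dominates it since $\sigma_k\le\sqrt{b_k^\star}$.

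Plugging the moment bound into the summed inequality gives, for $q=1,2$,
\[
\E[\|\widecheck{b}^\star_{\cI_\eta}-{b}^\star_{\cI_\eta}\|_q^q]=\tilde{O}\!\left(\left(\frac{\omega^2}{n}\right)^{q/2}\frac{d}{2^{bq/2}}+\left(\frac{1}{Tn}\right)^{q/2}\frac{d}{2^{bq/2}}+\frac{d}{(Tn)^q}+d\left(\frac{\omega}{n}\right)^q\right),
\]
which is precisely the claimed rate. There is no real obstacle here: all of the analytic content sits in Lemma~\ref{lem:fund-trim}, and this step is a routine summation. The only point requiring slight care is the regime assumption $2^b\ll d$, needed to write $(2^b-1+d)/2^b=O(d/2^b)$; outside that regime one would simply keep the factor $2^b-1+d$ in place of $d$, but this is absorbed by $\tilde O(\cdot)$ in the communication-efficient setting of interest.
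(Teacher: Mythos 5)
Your proof is correct and follows exactly the route the paper intends: the paper omits the proof of Proposition~\ref{prop:trim-global}, stating only that it is readily established from Lemma~\ref{lem:fund-trim}, and your argument---summing the entrywise bound over $k\in\cI_\eta$ and controlling $\sum_{k\in[d]}(b_k^\star)^{q/2}=O(d/2^{bq/2})$ in the regime $2^b\ll d$---is precisely the analogue of how the paper proves the median-based Proposition~\ref{prop:median-global-combo}. Your note that the trimmed-mean lemma produces $b_k^\star$ rather than $\sigma_k^2=b_k^\star(1-b_k^\star)$ (so one needs $\sum_k(b_k^\star)^{q/2}$ rather than $\sum_k\sigma_k^q$) is a correct and slightly more careful observation than the paper's shorthand.
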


By setting $\alpha = \Theta(\ln(Tn))$, we find the following result.
\begin{theorem}\label{thm:trim-mean-collab-final}
Suppose  $n\geq 2^{b+5}\ln(Tn)$ and $\alpha \geq 2(8+\sqrt{8\ln(Tn)})^2$ with $\alpha =O(\ln(Tn))$. Then for the trimmed-mean-based SHIFT method, for any $0<\omega \leq \frac{1}{5}$, $t\in[T]$ and $q=1,2$,
\begin{equation*}
 \E\left[\|\widehat{\bp}^t-\bp^t\|_q^q\right]=\tilde{O}\left(\left(\frac{s}{\omega}\right)^{1-q/2}\left(\frac{\max\{2^b,s/\omega\}}{2^bn}\right)^{q/2}+ d\left(\frac{\omega^2}{{2^b n}}\right)^{q/2}+{\frac{d}{(2^b Tn)^{q/2}}}\right).
\end{equation*}
\end{theorem}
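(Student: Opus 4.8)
The plan is to follow the structure of the proof of Theorem~\ref{thm:median-collab-final-combo}, replacing the median single-entry and global bounds (Lemma~\ref{lem:median-entry-combo}, Proposition~\ref{prop:median-global-combo}) by their trimmed-mean counterparts (Lemma~\ref{lem:fund-trim}, Proposition~\ref{prop:trim-global}) and invoking the robust-estimate-agnostic decomposition of Proposition~\ref{prop:main-analysis}. As in the median case, I would first dispatch the regime $T=O(\ln(Tn))$ by a crude bound: since $[\widehat{\bb}^t]_k\in\{\widecheck{b}^t_k,\widecheck{b}^\star_k\}$, with the second value occurring only when $k\in\cK^t_\alpha$, one always has $|\widehat{b}^t_k-b^t_k|\le|\widecheck{b}^t_k-b^t_k|+\sqrt{\alpha\widecheck{b}^t_k/n}$; taking expectations and using Jensen's inequality, Cauchy--Schwarz, $\sum_{k\in[d]} b^t_k=O(d/2^b)$, and $\alpha=\tilde{O}(1)$ gives $\E[\|\widehat{\bb}^t-\bb^t\|_q^q]=\tilde{O}(d/(2^bn)^{q/2})$, which is $\tilde{O}(d/(2^bTn)^{q/2})$ when $T=\tilde{O}(1)$, inside the target bound. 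So from now on assume $T=\Omega(\ln(Tn))$.

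In this regime I would set $\eta=\omega$ (legitimate as $0<\omega\le 1/5$) and apply Proposition~\ref{prop:main-analysis}, which bounds $\E[\|\widehat{\bb}^t-\bb^t\|_q^q]$ by a sum of three pieces: $(\mathrm{I})=\sum_{k\notin\cI_\eta\cap\cI^t}(\alpha b^t_k/n)^{q/2}$; $(\mathrm{II})=\sum_{k\in\cI_\eta\cap\cI^t}\PP(k\notin\cK^t_\alpha)\wedge(b^t_k(1-b^t_k)/n)^{q/2}$; and $(\mathrm{III})=\E[\|\widecheck{b}^\star_{\cI_\eta\cap\cI^t}-b^\star_{\cI_\eta\cap\cI^t}\|_q^q]$. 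Piece $(\mathrm{III})$ is controlled directly by Proposition~\ref{prop:trim-global}, since $\cI_\eta\cap\cI^t\subseteq\cI_\eta$, giving $\tilde{O}(d(\omega^2/(2^bn))^{q/2}+d/(2^bTn)^{q/2}+d/(Tn)^q+d(\omega/n)^q)$, whose last two terms are absorbed into the first two using $2^b\le n$ (a consequence of $n\ge 2^{b+5}\ln(Tn)$). For piece $(\mathrm{I})$, the pigeonhole principle gives $|\cI_\eta^c|\le s/\eta=s/\omega$ and $\|\bp^t-\bp^\star\|_0\le s$ gives $|(\cI^t)^c|\le s$, so $|(\cI_\eta\cap\cI^t)^c|\le s/\omega+s\le 2s/\omega$; Cauchy--Schwarz together with $\sum_{k\in S}b^t_k=O(\max\{1,|S|/2^b\})$ and $\alpha=\tilde{O}(1)$ then yields $(\mathrm{I})=\tilde{O}((s/\omega)^{1-q/2}(\max\{2^b,s/\omega\}/(2^bn))^{q/2})$.

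The crux is piece $(\mathrm{II})$, which I would handle by mirroring the fine-tuning analysis in the proof of Theorem~\ref{thm:median-collab-final-combo}. Fix $k\in\cI_\eta\cap\cI^t$, so $b^t_k=b^\star_k\ge 2^{-b}$, and introduce the good event $\cE^t_k:=\{\widecheck{b}^t_k\ge\frac{1}{2}b^t_k\}\cap\{|\widecheck{b}^\star_k-b^\star_k|\le 8\sqrt{b^\star_k/n}\}$. Its first component fails with probability at most $2e^{-n/(16\cdot 2^b)}\le 2/(Tn)^2$ by Bernstein's inequality and $n\ge 2^{b+5}\ln(Tn)$; for the second component I would feed Lemma~\ref{lem:fund-trim-prob} (the engine behind \eqref{eqn:jfownfqwfrwgwq}) the deviation parameters $\ep_k,\delta_k$ of order $\sqrt{b^\star_k\ln(Tn)/n}$ and use $T-|\cB_k|\ge(1-\omega)T=\Omega(\ln(Tn))$, $n\ge 2^{b+5}\ln(Tn)$, and $\omega\le 1/5$ to collapse its bound to a constant multiple of $\sqrt{b^\star_k/n}$ with failure probability $O(1/(Tn)^2)$ (note that, unlike the median case, no separate treatment of small variances is needed, since Lemma~\ref{lem:fund-trim-prob} rests on Bernstein's inequality rather than a quantile/Berry--Esseen analysis). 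On $\cE^t_k$, if $k\notin\cK^t_\alpha$ then $|\widecheck{b}^\star_k-\widecheck{b}^t_k|>\sqrt{\alpha\widecheck{b}^t_k/n}\ge\sqrt{\alpha b^t_k/(2n)}$, whence $|\widecheck{b}^t_k-b^t_k|\ge|\widecheck{b}^\star_k-\widecheck{b}^t_k|-|\widecheck{b}^\star_k-b^\star_k|>\zeta\sqrt{b^t_k/n}$ with $\zeta:=\sqrt{\alpha/2}-8\ge\sqrt{8\ln(Tn)}$ by the hypothesis $\alpha\ge 2(8+\sqrt{8\ln(Tn)})^2$; a last application of Bernstein's inequality (using $b^t_k\ge 2^{-b}$ and $n\ge 2^{b+5}\ln(Tn)$) gives $\PP(\cE^t_k\cap\{k\notin\cK^t_\alpha\})\le 2e^{-\min\{\zeta^2/4,\,(\zeta/4)\sqrt{n/2^b}\}}\le 2/(Tn)^2$. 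Hence $\PP(k\notin\cK^t_\alpha)=O(1/(Tn)^2)$ for every $k\in\cI_\eta\cap\cI^t$, so $(\mathrm{II})\le\sum_{k\in\cI_\eta\cap\cI^t}\PP(k\notin\cK^t_\alpha)=O(d/(Tn)^2)$, once more absorbed into $d/(2^bTn)^{q/2}$ via $2^b\le n$. Summing $(\mathrm{I})+(\mathrm{II})+(\mathrm{III})$ and passing from $\bb^t$ to $\bp^t$ through Proposition~\ref{prop:debias} gives the claim. I expect the step on $(\mathrm{II})$ --- tracking the numerical constants through the trimmed-mean concentration bound \eqref{eqn:jfownfqwfrwgwq} and Bernstein's inequality so that the data-driven threshold $\sqrt{\alpha\widecheck{b}^t_k/n}$ provably separates the $\eta$-well-aligned-and-matched coordinates from the rest with the $1/(Tn)^2$-level failure probability needed to sum over $\le d$ coordinates --- to be the main obstacle; the rest is bookkeeping parallel to the median-based case.
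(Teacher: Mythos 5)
Your high-level plan follows the paper exactly: apply Proposition~\ref{prop:main-analysis} with $\eta=\omega$, control piece (III) by Proposition~\ref{prop:trim-global}, piece (I) by Cauchy--Schwarz, and piece (II) by a good event $\cE^t_k$ that separates well-aligned entries from the rest. However, there is a concrete error in your handling of piece (II) that would break the argument as written.

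The error is the claim that feeding Lemma~\ref{lem:fund-trim-prob} deviation parameters of order $\sqrt{b^\star_k\ln(Tn)/n}$ and using $T=\Omega(\ln(Tn))$ lets you ``collapse its bound to a constant multiple of $\sqrt{b^\star_k/n}$.'' Lemma~\ref{lem:fund-trim-prob} bounds $|\widecheck{b}^\star_k-b^\star_k|\le(\ep_k+3\omega\delta_k)/(1-2\omega)$, and the two parameters play structurally different roles: $\ep_k$ governs the concentration of the full average over $(T-|\cB_k|)n$ samples, so large $T$ does allow $\ep_k=O(\sqrt{b^\star_k/n})$ as you say; but $\delta_k$ governs the uniform per-cluster deviation $\max_{t\in[T]\setminus\cB_k}|\widecheck{b}^t_k-b^\star_k|$, whose failure probability is $2(T-|\cB_k|)e^{-n\min\{\delta_k^2/\sigma_k^2,\delta_k\}/4}$. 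To drive this below $O(1/(Tn)^2)$ one must take $\delta_k\gtrsim\sigma_k\sqrt{\ln(T^3n^2)/n}$, and this requirement gets \emph{worse}, not better, as $T$ grows (the prefactor $T-|\cB_k|$ grows). Consequently the $3\omega\delta_k/(1-2\omega)$ contribution carries an irreducible factor $\omega\sqrt{\ln(Tn)}$, and your proposed good event $\{|\widecheck{b}^\star_k-b^\star_k|\le 8\sqrt{b^\star_k/n}\}$ cannot be shown to have probability $1-O(1/(Tn)^2)$ via this lemma. The paper's own good event for the trimmed-mean proof is correspondingly $\{|\widecheck{b}^\star_k-b^\star_k|\le 8\sqrt{b^\star_k\ln(T^3n^2)/n}\}$, carrying the $\sqrt{\ln}$ factor.

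This propagates into your separation constant: with the corrected good event, the deduction $|\widecheck{b}^t_k-b^t_k|>\zeta\sqrt{b^t_k/n}$ on $\cE^t_k\cap\{k\notin\cK^t_\alpha\}$ yields $\zeta=\sqrt{\alpha/2}-8\sqrt{\ln(T^3n^2)}$, not $\sqrt{\alpha/2}-8$. For $\zeta\gtrsim\sqrt{\ln(Tn)}$ one therefore needs $\alpha\ge c\ln(Tn)$ with a larger constant $c$ than the one produced by $\alpha\ge 2(8+\sqrt{8\ln(Tn)})^2$; this is still $\Theta(\ln(Tn))$ and still absorbed by the $\tilde{O}$ in the conclusion (and the paper's footnote signals that the literal constant in the hypothesis on $\alpha$ is not meant to be taken as tight), but your argument as written does not produce a positive $\zeta$ once the good event is corrected. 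One further remark: the preliminary split into $T=O(\ln(Tn))$ versus $T=\Omega(\ln(Tn))$ is unnecessary here---unlike the median proof, which needed large $T$ for the Berry--Esseen/quantile step, the trimmed-mean argument via Lemma~\ref{lem:fund-trim-prob} and the chain $4/3\,\sqrt{b^\star_k\ln(T^3n^2)/n}+(20/3)\sqrt{b^\star_k\ln(T^2n^2)/((T-|\cB_k|)n)}\le 8\sqrt{b^\star_k\ln(T^3n^2)/n}$ works for all $T\ge 1$, since $T-|\cB_k|\ge 1$ whenever $k\in\cI_\eta$ with $\eta\le 1/5$.
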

\begin{proof}
To apply Proposition \ref{prop:main-analysis}, we need to bound $\sum_{k\in {\cI_\eta}\cap \cI^t}\min\{\PP(k\notin {\cK}^t_\alpha),\sqrt{{ b_k^t( 1-b_k^t)}/{n}}\}$ and $\sum_{k\in {\cI_\eta}\cap \cI^t}\min\{\PP(k\notin {\cK}^t_\alpha),{{ b_k^t( 1-b_k^t)}/{n}}\}$.

Let $\cE_k^t:=\{\widecheck{b}^{t}_k\geq  \frac{1}{2}{b}^{t}_{k}\text{ and }|\widecheck{b}^\star_k-{b}^\star_k|\leq8 \sqrt{{ {b}^\star_k}
\ln(T^3n^2)/{n}}\}$. For each  entry $k\in\cI_\eta\cap\cI^t$, since $n\geq 2^b\ln(T^3n^2)$ and $ b_k^\star \leq  \frac{1}{2^b}$, we have $\frac{1}{n}\leq\sqrt{\frac{b_k^\star}{n\ln(T^3n^2)}}$.
By \eqref{eqn:jfownfqwfrwgwq}, we have with probability at least $1-\frac{4}{T^2n^2}$ that
\begin{align}
    &|\widecheck{b}^t_k-b^t_k| 
    \leq \frac{5\omega}{3} \max\left\{\frac{4\sqrt{b_k^\star\ln(T^3n^2)}}{\sqrt{n}}, \frac{4\ln(T^3n^2)}{n}\right\}+ \frac{5}{3}\max\left\{\frac{4\sqrt{b_k^\star\ln(T^2n^2)}}{\sqrt{(T-|\cB_k|)n}},\frac{4\ln(T^2n^2)}{(T-|\cB_k|)n}\right\}\nonumber\\
\leq&  \frac{4}{3}\sqrt{\frac{b_k^\star\ln(T^3n^2)}{n}}+\frac{20}{3}\sqrt{\frac{b_k^\star\ln(T^2n^2)}{(T-|\cB_k|)n}}\leq 8\sqrt{\frac{b_k^\star\ln(T^3n^2)}{n}}.\label{eqn:fhiwnfqowfrdqw}
\end{align}
By Bernstein's inequality and as ${b}^\star_k\geq \frac{1}{2^b}$, we have 
\begin{align}
    \PP\left(|\widecheck{b}^t_k-{b}^t_k|>\frac{b^t_k}{2}\right)&\leq 2e^{-\frac{n}{4}\min\{\frac{b^t_k}{4(1-b^t_k)},\frac{b^t_k}{2}\}}\leq 2e^{-\frac{nb^t_k}{16}}
    \leq 2e^{-\frac{n}{16\cdot 2^b}}\leq \frac{2}{T^2n^2},\label{eqn:gjoqwnfq-trineqwr}
\end{align}
where the last inequality is because $n\geq 2^{b+5}\ln(Tn)$.
Combining \eqref{eqn:fhiwnfqowfrdqw} with \eqref{eqn:gjoqwnfq-trineqwr}, we find $\PP((\cE_k^t)^c)\leq\frac{6}{T^2n^2}$.
Now following the argument from \eqref{eqn:vjqonvqofa}-\eqref{eqn:fjfoqmqffq}, we can obtain that for all $k\in\cI_\eta\cap\cI^t$,
\begin{equation*}
  \PP(k\notin {\cK}^t_\alpha)=O\left(\frac{1}{T^2n^2}\right).
\end{equation*}

Since $\alpha=O(\ln(Tn)) $, by applying \eqref{eqn:fjfoqmqffq} to Proposition \ref{prop:main-analysis} with $\eta=\omega$ and using Proposition \ref{prop:trim-global} with $n=\Omega(2^b)$, we find
\begin{align}\label{eqn:gjoegnqogqdasd}
     \E[\|\widehat{\bb}^t-\bb^t\|_1]=\tilde{O}\left(\sum_{k\notin \cI_\omega\cap \cI^t }\sqrt{\frac{ b_k^t}{n}}+ \frac{d\omega}{\sqrt{2^b n}}+{\frac{d}{\sqrt{2^b Tn}}}\right)
\end{align}
and 
\begin{equation*}
    \E[\|\widehat{\bb}^t-\bb^t\|_2^2]=\tilde{O}\left(\sum_{k\notin \cI_\omega\cap \cI^t }{\frac{ b_k^t}{n}}+ \frac{d\omega}{{2^b n}}+{\frac{d}{{2^b Tn}}}\right).
\end{equation*}
Note that $|(\cI_\omega\cap \cI^t)^c|\leq |\cI_\omega^c|+|(\cI^t)^c|\leq s/\omega+s=O(s/\omega)$ and 
\begin{align}
    \sum_{k\notin \cI_\omega\cap \cI^t }\sqrt{\frac{ b_k^t}{n}}&\leq \sqrt{|(\cI_\omega\cap \cI^t)^c|\sum_{k\notin \cI_\omega\cap \cI^t }{\frac{ b_k^t}{n}}}=\sqrt{\frac{|(\cI_\omega\cap \cI^t)^c|\max\{2^b,|(\cI_\omega\cap \cI^t)^c|\}}{2^bn}}\nonumber\\
    &=\sqrt{\frac{s/\omega\max\{2^b,s/\omega\}}{2^bn}}.\label{eqn:gvjoemfgqwftqw}
\end{align}
Plugging \eqref{eqn:gvjoemfgqwftqw} into \eqref{eqn:gjoegnqogqdasd} and using $\E[\|\widehat{\bp}^t-\bp^t\|_1]=O(\E[\|\widehat{\bb}^t-\bb^t\|_1])$, we find the conclusion in terms of the $\ell_1$ error. The results in terms of the $\ell_2$ error can be obtained similarly. 
\end{proof}

\section{Lower   Bounds}\label{app:lower-bounds}
In this section, we provide the proofs for the minimax lower bounds for estimating distributions under our heterogeneity model. We first re-state the
detailed version the lower bounds that apply to both the $\ell_2$ and $\ell_1$ errors.
\begin{theorem}[Detailed statement of Theorem \ref{thm:collab-lower-bound}]\label{thm:collab-lower-bound-combo}
For any---possibly interactive---estimation method, and for any $t\in[T]$ and $q=1,2$, we have
\begin{equation}\label{eqn:collab-lower-bound-combo}
    \inf_{\substack{(W^{t^\prime,[n]})_{t^\prime \in[T]}\\ \widehat{\bp}^{t}\, }} 
    \sup_{\substack{\bp^\star \in\cP_d\\  \{\bp^{t^\prime}:t^\prime\in[T]\}\subseteq\mathbb{B}_s(\bp^\star)}}
    \E[\|\widehat{\bp}^t-\bp^t\|_q^q]=\Omega\left(s^{1-q/2}\left(\frac{\max\{2^b,s\}}{2^bn}\right)^{q/2}+\frac{d}{(2^bTn)^{q/2}}\right).
\end{equation}
\end{theorem}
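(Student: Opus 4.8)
The plan is to lower-bound the worst-case risk over the full heterogeneous family by restricting the supremum to two sub-families---a \emph{homogeneous} one and an \emph{$s/2$-sparse} one---each of which reduces to a classical communication-constrained distribution-estimation problem with a known minimax rate, and then to combine the two bounds via $\max\{a,b\}\ge\frac{1}{2}(a+b)$. Concretely, since shrinking the set over which the supremum is taken can only decrease the value, for every fixed (possibly interactive) encoding scheme and estimator the worst-case risk is at least the worst-case risk over either sub-family; taking the infimum over schemes and estimators on the right-hand side gives
\[
\inf_{(W^{t',[n]})_{t'\in[T]},\,\widehat{\bp}^{t}}\ \sup_{\bp^\star\in\cP_d,\ \{\bp^{t'}\}\subseteq\mathbb{B}_s(\bp^\star)}\E\big[\|\widehat{\bp}^t-\bp^t\|_q^q\big]\ \ge\ \max\{R_{\mathrm{hom}},\,R_{\mathrm{sp}}\}\ \ge\ \frac{1}{2}\big(R_{\mathrm{hom}}+R_{\mathrm{sp}}\big),
\]
where $R_{\mathrm{hom}}$ and $R_{\mathrm{sp}}$ are the minimax risks defined by the two restricted suprema below. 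It then suffices to show $R_{\mathrm{hom}}=\Omega(d/(2^bTn)^{q/2})$ and $R_{\mathrm{sp}}=\Omega(s^{1-q/2}(\max\{2^b,s\}/(2^bn))^{q/2})$.

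For the homogeneous case I would restrict to $\bp^1=\cdots=\bp^T=\bp^\star$ with $\bp^\star$ ranging over all of $\cP_d$ (every such realization trivially satisfies $\|\bp^{t'}-\bp^\star\|_0=0\le s$). The learner then observes $Tn$ i.i.d.\ samples from a single unknown $\bp^\star$, each encoded into at most $b$ bits by a possibly interactive protocol, and must output $\widehat{\bp}^t=\widehat{\bp}^\star$. This is exactly the setting of \cite{Han2018DistributedSE,Barnes2019LowerBF,Acharya2020GeneralLB}, whose minimax lower bound gives $\E[\|\widehat{\bp}^\star-\bp^\star\|_2^2]=\Omega(d/(2^bTn))$; the same family of hard instances (spread over $\Theta(d)$ coordinates) also yields the $\ell_1$ lower bound $\Omega(d/(2^bTn)^{1/2})$, so $R_{\mathrm{hom}}=\Omega(d/(2^bTn)^{q/2})$ for $q=1,2$.

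For the sparse case I would fix $\bp^\star$ to be the uniform distribution on $\{1,\dots,\lfloor s/2\rfloor\}$ and fix $\bp^{t'}=\bp^\star$ for every $t'\ne t$, and let $\bp^t$ be drawn from a prior $\pi$ supported on distributions over the coordinates $\{\lfloor s/2\rfloor+1,\dots,s\}$; every realization lies in $\mathbb{B}_s(\bp^\star)$ because $\|\bp^t-\bp^\star\|_0\le s$. The key point is that $\{X^{t',j}:t'\ne t\}$ are i.i.d.\ from the \emph{known} distribution $\bp^\star$ and hence independent of $\bp^t$; conditioning on them, the transcript restricted to cluster $t$ remains a valid $\le b$-bit (blackboard) protocol run on $n$ i.i.d.\ samples from $\bp^t$, so the Bayes risk under $\pi$ is at least the single-cluster, $b$-bit minimax risk of estimating a distribution supported on a known set of $\lfloor s/2\rfloor$ coordinates from $n$ samples. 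By the sparse-estimation lower bounds of \cite{pmlr-v132-acharya21b,chen2021breaking} (or an Assouad argument on an $\lfloor s/2\rfloor$-dimensional perturbation cube whose coordinates are compressed into at most $2^b$ buckets) this risk is $\Omega(\max\{2^b,s\}/(2^bn))$ in $\ell_2^2$---covering both $2^b\ge s$, where the rate is the trivial $\Omega(1/n)$, and $2^b<s$, where it is $\Omega(s/(2^bn))$---and $\Omega((s\max\{2^b,s\}/(2^bn))^{1/2})$ in $\ell_1$, i.e.\ $R_{\mathrm{sp}}=\Omega(s^{1-q/2}(\max\{2^b,s\}/(2^bn))^{q/2})$.

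Summing $R_{\mathrm{hom}}$ and $R_{\mathrm{sp}}$ in the displayed inequality yields \eqref{eqn:collab-lower-bound-combo}. I expect the main obstacle to be the sparse case, on two fronts: (i) making the ``only cluster $t$'s data is informative'' reduction fully rigorous for \emph{interactive} protocols---one must verify that conditioning on the nuisance data $\{X^{t',j}:t'\ne t\}$ leaves a genuine $b$-bit-per-sample blackboard protocol for cluster $t$, so that the single-cluster lower bound applies verbatim; and (ii) assembling the $s$-sparse communication lower bound in the stated form, which requires a two-regime analysis to produce the $\max\{2^b,s\}$ and, for $q=1$, careful bookkeeping of the $\sqrt{s}$ factor relating the $\ell_1$ and $\ell_2$ errors on the hard instances. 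The homogeneous reduction and the final combination are routine.
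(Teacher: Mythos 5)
Your proposal is correct and follows essentially the same route as the paper: both lower-bound the supremum by restricting to the homogeneous sub-family (to get $\Omega(d/(2^bTn)^{q/2})$ via the known non-sparse rate) and to an $s/2$-sparse sub-family with disjoint supports across clusters (to reduce to the single-cluster sparse lower bound from \cite{chen2021breaking} and get $\Omega(s^{1-q/2}(\max\{2^b,s\}/(2^bn))^{q/2})$), then combine the two. The paper keeps $\bp^{t'}$ for $t'\ne t$ as supremum variables subject to the disjoint-support condition \eqref{eqn:gjoemqowrq} rather than pinning them to $\bp^\star$ as you do, but this is an inessential variation; your caution about verifying that conditioning on the uninformative other-cluster transcripts still leaves a valid $b$-bit blackboard protocol is a point the paper passes over lightly, and addressing it would only strengthen the argument.
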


\begin{theorem}[Detailed statement of Theorem. \ref{thm:transfer-lower-bound}]\label{thm:transfer-lower-bound-combo}
For any---possibly interactive---estimation method, and a new cluster $\cC^{T+1}$, we have
\begin{align*}\label{eqn:transfer-lower-bound}
    &\inf_{\substack{(W^{t^\prime,[n]})_{t^\prime \in[T]}\\W^{T+1,[\tilde{n}]},\widehat{\bp}^{T+1} }}  \sup_{\substack{\bp^\star \in\cP_d\\ \{\bp^{t^\prime}:t^\prime\in[T+1]\}\subseteq\mathbb{B}_s(\bp^\star)}}\E[\|\widehat{\bp}^{T+1}-\bp^{T+1}\|_q^q]\\
    =&\Omega\left(s^{1-q/2}\left(\frac{\max\{2^b,s\}}{2^b\tilde{n}}\right)^{q/2}+\frac{d}{(2^bTn)^{q/2}}\right).
\end{align*}
\end{theorem}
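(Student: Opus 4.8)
The plan is to prove Theorem~\ref{thm:transfer-lower-bound-combo} by the same two-regime strategy that underlies Theorem~\ref{thm:collab-lower-bound-combo}, but with the sample size of the target cluster equal to $\tilde n$ rather than $n$. Concretely, I would restrict the supremum over admissible configurations $(\bp^\star,\bp^1,\dots,\bp^{T+1})$ to two sub-families, derive a ``collaboration'' lower bound of order $d/(2^bTn)^{q/2}$ and a ``local'' lower bound of order $s^{1-q/2}\big(\max\{2^b,s\}/(2^b\tilde n)\big)^{q/2}$ separately, and then combine them using $a+b=\Theta(\max\{a,b\})$ for nonnegative $a,b$. Throughout I would use the standing assumption $\tilde n\le n$, so that $\tilde n+Tn=\Theta(Tn)$.

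\emph{Collaboration term.}
Restrict to the homogeneous sub-family $\bp^\star=\bp^1=\cdots=\bp^{T+1}\in\cP_d$, which trivially satisfies $\{\bp^{t'}:t'\in[T+1]\}\subseteq\mathbb{B}_s(\bp^\star)$. Then estimating $\bp^{T+1}=\bp^\star$ from all $Tn+\tilde n$ messages, each of at most $b$ bits, is exactly the single-distribution estimation problem under communication constraints of \cite{Han2018DistributedSE,Barnes2019LowerBF,Han2021GeometricLB,Acharya2020InferenceUI}, whose minimax rate is $\Omega\big(d/(2^b m)^{q/2}\big)$ with $m$ the number of samples, and which holds even for interactive (blackboard) protocols by \cite{Acharya2020GeneralLB}. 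Taking $m=Tn+\tilde n=\Theta(Tn)$ gives the lower bound $\Omega\big(d/(2^bTn)^{q/2}\big)$.

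\emph{Local term.}
Fix a reference $s/2$-sparse distribution $\bp^\circ\in\cP_d$ with $\|\bp^\circ\|_0\le s/2$, set $\bp^\star=\bp^1=\cdots=\bp^T=\bp^\circ$ (deterministically), and draw $\bp^{T+1}$ from a prior $\pi$ supported on distributions with $\|\bp^{T+1}\|_0\le s/2$ (chosen, e.g., on a fixed $s/2$-element coordinate set disjoint from or overlapping $\mathrm{supp}(\bp^\circ)$). Then $\|\bp^{T+1}-\bp^\star\|_0\le\|\bp^{T+1}\|_0+\|\bp^\star\|_0\le s$, so every draw is admissible. Since $\bp^{T+1}$ is independent of $(\bp^\star,\bp^1,\dots,\bp^T)$ and hence of the data $(X^{t',[n]})_{t'\in[T]}$, the part of the interactive transcript generated by clusters $1,\dots,T$ is a function of $(X^{t',[n]})_{t'\in[T]}$ and public randomness alone, and is therefore independent of $\bp^{T+1}$; it can be absorbed into the shared randomness available to the encoders $W^{T+1,\cdot}$ and to the estimator without changing the Bayes risk. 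The problem thus reduces to single-cluster estimation of the $s/2$-sparse $\bp^{T+1}$ from $\tilde n$ $b$-bit messages, whose minimax rate is $\Omega\big(s^{1-q/2}(\max\{2^b,s\}/(2^b\tilde n))^{q/2}\big)$, following the sparse communication lower bounds of \cite{pmlr-v132-acharya21b,chen2021breaking} (or a direct two-regime Assouad/Paninski-type construction: when $2^b\ge s$, mimic the dense bound with effective dimension $\Theta(2^b)$ restricted to the sparse support; when $s>2^b$, perturb along a packing of $\Theta(s)$ coordinates). Averaging the per-parameter risk over $\pi$ lower-bounds the worst case.

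Combining the two bounds yields $\Omega\big(s^{1-q/2}(\max\{2^b,s\}/(2^b\tilde n))^{q/2}+d/(2^bTn)^{q/2}\big)$. The step I expect to be the main obstacle is making the ``independence $\Rightarrow$ irrelevance'' argument airtight in the blackboard model: one must formalize that, along any interactive protocol, the transcript restricted to clusters $1,\dots,T$ is a deterministic function of $\bp^{T+1}$-independent quantities, and that every message $Y^{T+1,j}$ and the final estimator $\widehat\bp^{T+1}$ depend on $\bp^{T+1}$ only through $(X^{T+1,j})_{j\in[\tilde n]}$ together with that $\bp^{T+1}$-independent transcript, so that the whole scheme is a legitimate $\tilde n$-sample, $b$-bit interactive scheme for $\bp^{T+1}$ alone. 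A secondary technical point is ensuring that the single-cluster sparse lower bound is invoked (or re-derived) in a form with the precise $\max\{2^b,s\}$ scaling and valid for interactive protocols; if no off-the-shelf statement is in exactly this form, the fallback is the self-contained two-regime construction sketched above.
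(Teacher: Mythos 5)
Your proposal is correct and follows essentially the same approach the paper takes. The paper explicitly omits the proof of Theorem~\ref{thm:transfer-lower-bound-combo}, stating that it ``follows from the same analysis as Theorem~\ref{thm:collab-lower-bound-combo}'', and your argument is exactly that analysis with the two natural modifications: the homogeneous sub-family now pools $Tn+\tilde n=\Theta(Tn)$ samples (using $\tilde n\le n$), and the sparse sub-family renders clusters $1,\dots,T$ uninformative so that only the $\tilde n$ samples of cluster $T+1$ matter. Your minor variant in the local term---fixing $\bp^1=\cdots=\bp^T=\bp^\circ$ deterministically rather than merely imposing the disjoint-support condition \eqref{eqn:gjoemqowrq} as the paper does---is a slightly more explicit way to reach the same reduction, and the ``independence $\Rightarrow$ irrelevance'' subtlety you flag in the blackboard model is the very step the paper passes over by asserting an equality of minimax risks; your instinct to treat the other clusters' transcript as $\bp^{T+1}$-independent public randomness is the standard way to close that gap.
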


We omit the proof of Theorem \ref{thm:transfer-lower-bound-combo} since  it follows from the same analysis as Theorem \ref{thm:collab-lower-bound-combo}.

\subsection{Proof of Theorem \ref{thm:collab-lower-bound-combo}}
As discussed in Section \ref{sec:lower-bounds}, we will prove \eqref{eqn:collab-lower-bound-combo} by considering two special cases of our sparse heterogeneity model: 
\begin{enumerate}
    \item The \emph{homogeneous} case where $\bp^1=\cdots=\bp^T=\bp^\star \in\cP_d$.
    \item The \emph{$s/2$-sparse} case where  $\|\bp^\star\|_0\leq s/2$ and $\|\bp^t\|_0\leq s/2$ for all $t\in[T]$.
\end{enumerate}
Therefore, it  naturally holds that 
\begin{equation}\label{eqn:gjojfqowrqw}
    \inf_{\substack{(W^{t^\prime,[n]})_{t^\prime \in[T]}\\ \widehat{\bp}^{t}\, }} 
    \sup_{\substack{\bp^\star \in\cP_d\\  \{\bp^{t^\prime}:t^\prime\in[T]\}\subseteq\mathbb{B}_s(\bp^\star)}}
    \E[\|\widehat{\bp}^t-\bp^t\|_q^q]\geq \inf_{\substack{(W^{t,[n]})_{t \in[T]}\\ \widehat{\bp}^{\star}\, }} 
    \sup_{\substack{\bp^\star \in\cP_d}}
    \E[\|\widehat{\bp}^\star-\bp^\star\|_q^q]
\end{equation}
and 
\begin{equation}\label{eqn:gjoejtomqwetfqw}
    \inf_{\substack{(W^{t^\prime,[n]})_{t^\prime \in[T]}\\ \widehat{\bp}^{t}\, }} 
    \sup_{\substack{\bp^\star \in\cP_d\\  \{\bp^{t^\prime}:t^\prime\in[T]\}\subseteq\mathbb{B}_s(\bp^\star)}}
    \E[\|\widehat{\bp}^t-\bp^t\|_q^q]\geq \inf_{\substack{(W^{t^\prime,[n]})_{t^\prime \in[T]}\\ \widehat{\bp}^{t}\, }} 
    \sup_{\substack{\|\bp^{t^\prime}\|_0\leq s/2\\\forall\,t^\prime\in[T]}}
    \E[\|\widehat{\bp}^t-\bp^t\|_q^q].
\end{equation}

For the first case, combining \eqref{eqn:gjojfqowrqw} with the existing lower bound result \cite[Cor 7]{Barnes2019LowerBF} and \cite[Thm 2]{Han2018DistributedSE} for the homogeneous setup, where all datapoints are generated by a single distribution, that for any estimation method (possibly based on interactive encoding),
\begin{equation*}
    \inf_{\substack{(W^{t,[n]})_{t \in[T]}\\ \widehat{\bp}^{\star}\, }} 
    \sup_{\substack{\bp^\star \in\cP_d}}
    \E[\|\widehat{\bp}^\star-\bp^\star\|_q^q]=\Omega\left(\frac{d}{(2^bTn)^{q/2}}\right),
\end{equation*}
we prove that the lower bound is at least of the order of the second term in \eqref{eqn:collab-lower-bound-combo}.

For the second case, without loss of generality, we assume $s$ is even. 
This can be achieved by considering $s-1$ instead of $s$, if necessary.
Recall that $\supp(\cdot)$ denotes the indices of non-zero entries of a vector.
Fixing any $t\in[T]$, we further consider the scenario where 
\begin{equation}\label{eqn:gjoemqowrq}
    \supp(\bp^t)\cap \left(\cup_{t^\prime\neq t}\supp(\bp^{t^\prime})\right)=\emptyset.
\end{equation} 
One example where  \eqref{eqn:gjoemqowrq} holds is when $\supp(\bp^t)\subseteq[s/2]$ and $\supp(\bp^{t^\prime})\subseteq\{s/2+1,\dots,d\}$ for all $t^\prime\neq t$. If \eqref{eqn:gjoemqowrq} holds, then the support of the datapoints generated by $\{\bp^{t^\prime}:t^\prime\neq t\}$ does not overlap with the support of those generated by $\bp^t$, and hence former are not informative for estimating $\bp^t$. 
Therefore, by further combining \eqref{eqn:gjoejtomqwetfqw} with the existing lower bound result \cite[Thm 2]{chen2021breaking} for the $s/2$-sparse homogeneous setup, where all datapoints are generated by a single $s/2$-sparse distribution,  that for any estimation method (possibly based on interactive encoding),
\begin{align*}
    \inf_{\substack{(W^{t,[n]})}} 
    \sup_{\substack{\|\bp^t\|_0\leq s/2\\\bp^t \in\cP_d}}
    \E[\|\widehat{\bp}^t-\bp^t\|_q^q]=&\Omega\left((s/2)^{1-q/2}\left(\frac{\max\{2^b,s/2\}}{2^bn}\right)^{q/2}\right)\\
    =&\Omega\left(s^{1-q/2}\left(\frac{\max\{2^b,s\}}{2^bn}\right)^{q/2}\right).
\end{align*}
Thus, we have
\begin{align*}
    &\inf_{\substack{(W^{t^\prime,[n]})_{t^\prime \in[T]}\\ \widehat{\bp}^{t}\, }} 
    \sup_{\substack{\bp^\star \in\cP_d\\  \{\bp^{t^\prime}:t^\prime\in[T]\}\subseteq\mathbb{B}_s(\bp^\star)}}
    \E[\|\widehat{\bp}^t-\bp^t\|_q^q]\geq \inf_{\substack{(W^{t^\prime,[n]})_{t^\prime \in[T]}\\ \widehat{\bp}^{t}\, }} 
    \sup_{\substack{\|\bp^{t^\prime}\|_0\leq s/2\\\forall\,t^\prime\in[T]}}
    \E[\|\widehat{\bp}^t-\bp^t\|_q^q]\\
    \geq &\inf_{\substack{(W^{t^\prime,[n]})_{t^\prime \in[T]}\\ \widehat{\bp}^{t}\, }} 
    \sup_{\substack{\|\bp^{t^\prime}\|_0\leq s/2,\,\forall\,t^\prime\in[T]\\\eqref{eqn:gjoemqowrq}\text{ holds} }}
    \E[\|\widehat{\bp}^t-\bp^t\|_q^q]= \inf_{\substack{(W^{t,[n]})}} 
    \sup_{\substack{\|\bp^t\|_0\leq s/2\\\bp^t \in\cP_d}}
    \E[\|\widehat{\bp}^t-\bp^t\|_q^q]\\
    =&\Omega\left(s^{1-q/2}\left(\frac{\max\{2^b,s\}}{2^bn}\right)^{q/2}\right).
\end{align*}
This proves that the lower bound is at least of the order of the first term in \eqref{eqn:collab-lower-bound-combo}.
Overall, we conclude the desired result.

\section{Supplementary Experiments}\label{app:experiments}

\paragraph{Truncated geometric distribution. }
We consider the truncated geometric distribution with parameter $\beta \in (0, 1)$, $\mathbf{p}^\star = \frac{1 - \beta}{1 - \beta^d}(1, \beta, \dots, \beta^{d - 1})$, as the central distribution and repeat the experiment in Section \ref{sec:synthetic}.
We use $d = 300, \beta = 0.95, b = 2$ and vary $n, T, s$.
Figure \ref{fig:syntheticgeom} summarizes the results.
As in Section \ref{sec:synthetic}, we observe that our methods outperform the baseline methods in most cases, especially when $s$ is small.
Also, we see the benefit of collaboration, \ie, decreasing trend of the error as $T$ increases, only in our methods.

\begin{figure}
    \centering
    \includegraphics{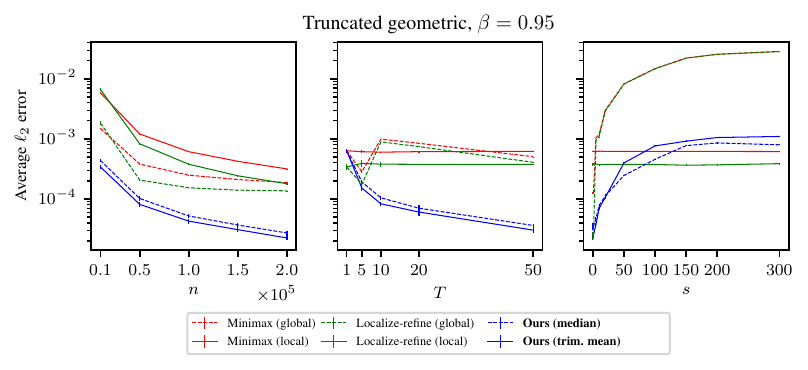}
   \caption{Average $\ell_2$ estimation error in synthetic experiment using the truncated geometric distribution. (Left):  Fixing $s = 5, \,T = 30$ and varying $n$.
  (Middle): Fixing $s = 5,\,n = 100,000$ and varying $T$.
  (Right): Fixing  $T = 30, \,n = 100,000$ and varying $s$.
  The standard error bars are obtained from 10 independent runs.}
    \label{fig:syntheticgeom}
\end{figure}

\paragraph{Hyperparmeter selection.}
We provide additional experiments using different hyperparameters $\alpha$ and $\omega$ from discussed in Section \ref{sec:synthetic}.
All other settings are identical to Section \ref{sec:synthetic}.
We test the hyperparameters $(\alpha, \omega) = (2^r \ln(n), 0.1)$  and $(\alpha, \omega) = (\ln(n), \omega)$, where $r \in \{-5, -4, \dots, 4\}$ and $\omega \in \{0.05, 0.1, \dots, 0.25\}$, respectively.
Figure \ref{fig:additionalexperiments} summarizes the results.

We find that setting the threshold $\alpha$  too small leads to replacing almost all coordinates of the central estimate $\widehat{\mathbf{p}}^\star$ with local ones.
In the extreme case of $\alpha \approx 0$, our method is essentially returns the local minimax estimates.
On the other hand, we observe that the performance of our method is less sensitive to the trimming proportion $\omega$.

While the choice of $\alpha$ is crucial to the performance of our method, we argue that it is possible to select a reasonably good $\alpha$ by checking the number of fine-tuned entries, \ie,
$$ \frac{1}{T} \sum_{t = 1}^T \left\vert \left\{ k \in [d]:  |[\widehat{\bb}^\star]_k-[\widehat{\bb}^t]_k| > \sqrt{\frac{\alpha[\widehat{\bb}^t]_k}{n}} \right\} \right\vert.$$
In Figure \ref{fig:alphaheuristic}, we observe that more than half ($d / 2 = 150$) of the entries are fine-tuned when $r \in\{ -5, -4, -3\}$.
These correspond to the three curves in the top left of Figure \ref{fig:additionalexperiments} that perform no better than the baseline methods.
In conclusion, by selecting $\alpha$ such that the number of fine-tuned entries are small enough compared to $d$, it is possible to reproduce the results in Section \ref{sec:experi}.

\begin{figure}
    \centering
    \includegraphics[scale=0.95]{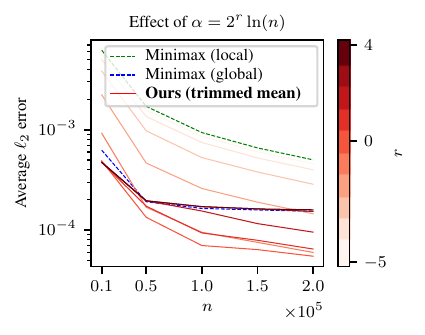}
    \includegraphics[scale=0.95]{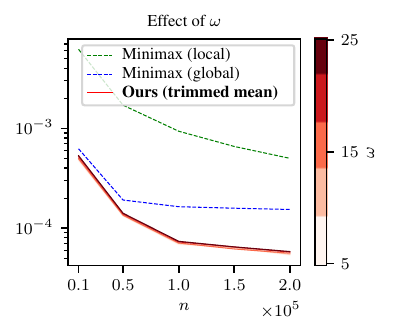}
    \includegraphics[scale=0.95]{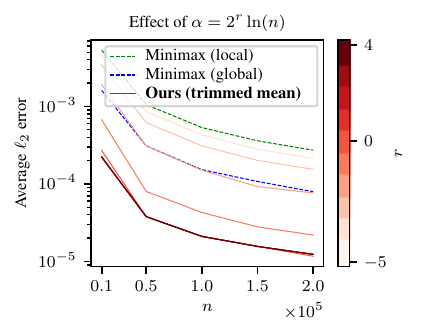}
    \includegraphics[scale=0.95]{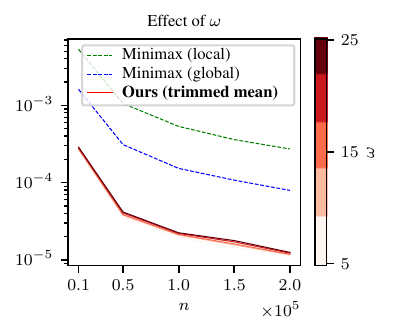}
    \caption{Effect of the hyperparameters $\alpha$ and $\omega$. The top row shows results for the uniform distribution and the bottom row shows the results for the truncated geometric distribution with $\beta = 0.8$.}
    \label{fig:additionalexperiments}
\end{figure}

\begin{figure}
    \centering
    \includegraphics{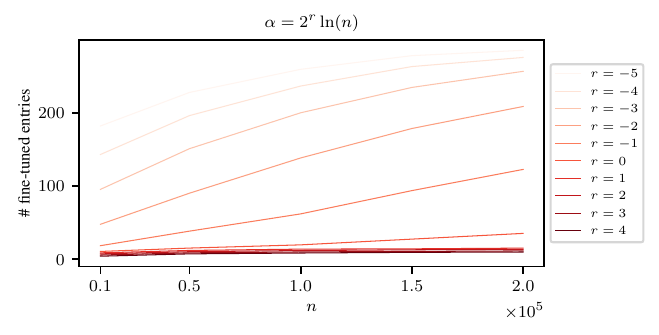}
    \caption{Average number of fine-tuned entries for different values of $\alpha = 2^r \ln (n)$. We use the trimmed mean with $\omega = 0.1$ and the uniform distribution with $d = 300$. This corresponds to the top left of Figure \ref{fig:additionalexperiments}.}
    \label{fig:alphaheuristic}
\end{figure}

\end{document}